\theoremstyle{definition}
\newmdtheoremenv{ex}{Example}
\newtheorem{theo}{Theorem}
\newtheorem{prop}{Proposition}
\newtheorem{lem}{Lemma}
\newtheorem{coro}{Corollary}
\newtheorem{defi}{Definition}
\newcommand{\diff}{\mathrm{d}}
\newcommand{\bs}{\boldsymbol}
\newcommand{\argmin}{\mathop{\arg\min}}
\newcommand{\first}[1]{\textbf{\color{black}{#1}}}
\newcommand{\second}[1]{\underline{#1}}
\def\qed{\hfill $\Box$}
\title{{\Large Extrapolation Towards Imaginary $0$-Nearest Neighbour \\ and Its Improved Convergence Rate}}
\author[1,3]{Akifumi Okuno}
\author[2,3]{Hidetoshi Shimodaira}
\affil[1]{School of Statistical Thinking, The Institute of Statistical Mathematics}
\affil[2]{Graduate School of Informatics, Kyoto University}
\affil[3]{RIKEN Center for Advanced Intelligence Project}
\begin{document}

\maketitle

\begin{abstract}
$k$-nearest neighbour ($k$-NN) is one of the simplest and most widely-used methods for supervised classification, that predicts a query's label by taking weighted ratio of observed labels of $k$ objects nearest to the query.
The weights and the parameter $k \in \mathbb{N}$ regulate its bias-variance trade-off, and the trade-off implicitly affects the convergence rate of the excess risk for the $k$-NN classifier; several existing studies considered selecting optimal $k$ and weights to obtain faster convergence rate. 
Whereas $k$-NN with non-negative weights has been developed widely, it was also proved that negative weights are essential for eradicating the bias terms and attaining optimal convergence rate. 
In this paper, we propose a novel \emph{multiscale $k$-NN~(MS-$k$-NN)}, that extrapolates unweighted $k$-NN estimators from several $k \ge 1$ values to $k=0$, thus giving an imaginary 0-NN estimator. 
Our method implicitly computes optimal real-valued weights that are adaptive to the query and its neighbour points.
We theoretically prove that the MS-$k$-NN attains the improved rate, which coincides with the existing optimal rate under some conditions. 
\end{abstract}

\section{Introduction}

Supervised classification has been a fundamental problem in machine learning and statistics over the years. 
It is widely used in a number of applications, such as 
music-genre categorization~\citep{li2003comparative}, 
medical diagnosis~\citep{soni2011predictive}, 
speaker recognition~\citep{ge2017neural} and so forth. 
Moreover, vast amounts of data have become readily available for anyone, along with the development of information technology; 
potential demands for better classification are still growing.

\begin{wraptable}[11]{r}[0em]{60mm}
\vspace{-1.5em}
\centering
\caption{Convergence Rates}
\vspace{0.5em}
\scalebox{0.9}{
\begin{tabular}{ll}
\toprule[0.2ex]
    Nadaraya-Watson & $n^{-4/(4+d)}$ \\
    Local polynomial$^{\dagger}$ & $n^{-2\beta/(2\beta+d)}$ \\
    $k$-NN (unweighted) & $n^{-4/(4+d)}$ \\
    $k$-NN (with weights $\ge 0$) & $n^{-4/(4+d)}$ \\
    $k$-NN (with weights $\in \mathbb{R}$) & $n^{-2\beta/(2\beta+d)}$ \\
    \textbf{Multiscale $k$-NN} & $n^{-2\beta/(2\beta+d)}$ \\
    \hline
\end{tabular}
}
\\ {\footnotesize $^{\dagger}$uniform bound; others are non-uniform.} 
\\ {\footnotesize $\alpha=1,\beta=2u,u \in \mathbb{N}$, $\gamma=2$.}
\label{table:rates}
\end{wraptable}

One of the simplest and most widely-used methods for supervised classification is $k$-nearest neighbour~($k$-NN; \citet{fix1951nonparametric}), where the \emph{estimator} predicts a query's label probability by taking the weighted ratio of observed labels of $k$ objects nearest to the query, and the corresponding \emph{classifier} specifies the class of objects via the predicted label probabilities. 
$k$-NN has strengths in its simplicity and flexibility over and above its statistical consistency~(as $k=k_n \to \infty,k_n/n \to 0,n \to \infty$), proved by \citet{fix1951nonparametric}, \citet{cover1967nearest} and \citet{stone1977consistent}. 
However, such a simple $k$-NN heavily depends on the selection of parameters, i.e., the weights and $k$ therein; inexhaustible discussions on parameter selection have been developed for long decades~\citep{devroye1996probabilistic,boucheron2005theory,audibert2007learning,samworth2012optimal,chaudhuri2014rates,anava2016nearest,cannings2017local,balsubramani2019adaptive}.

\begin{wrapfigure}[17]{r}[0em]{60mm}
\centering
\includegraphics[scale=0.44]{./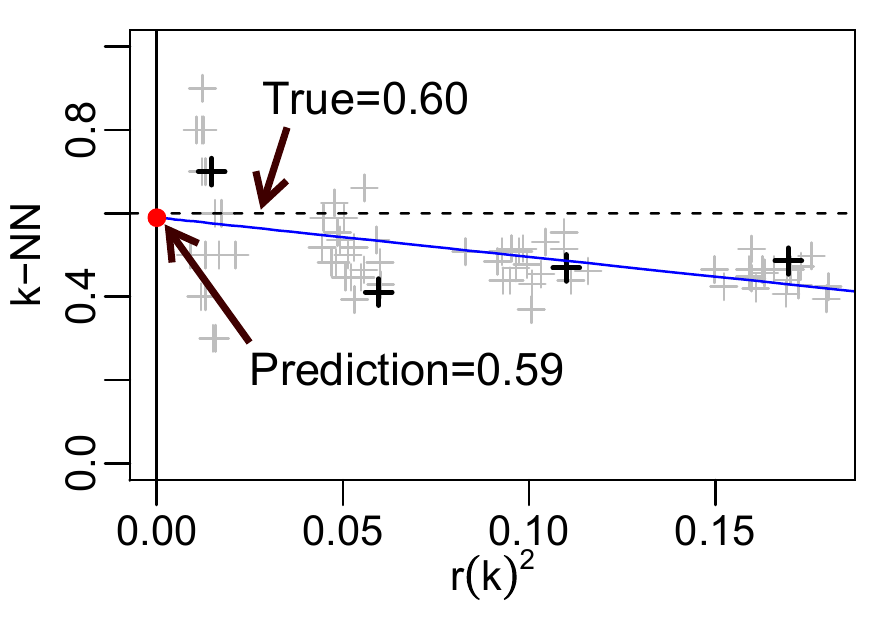}
\vspace{-0.5em}
\caption{For a fixed query $X_* \in \mathbb{R}^{5}$, 
unweighted $k$-NN estimators using synthetic data for 4 different $k$ values are plotted as grey points ($20$ times); bias-variance trade-off is observed. 
In an instance shown as black point
$k$-NN estimators are extrapolated to imaginary $0$-NN by regression (\ref{eq:msknn_regression_function}), via $r(k)^2:=\|X_{(k)}-X_*\|_2^2$ where $X_{(k)}$ is the $k$-th nearest to the query $X_*$.
}
\label{fig:illustration_msknn}
\end{wrapfigure}

A prevailing line of research in the parameter selection focuses on \emph{misclassification error rate} of classifiers as the sample size $n$ grows asymptotically.
They attempt to minimize the \emph{convergence rate} of the \emph{excess risk}, i.e., the difference of error rates between the classifier and Bayes-optimal classifier . The convergence rate depends on the functional form of the conditional expectation $\eta(x):=\mathbb{E}(Y \mid X=x)$ of the binary label $Y \in \{0,1\}$ given its feature vector $X \in \mathcal{X} (\subset \mathbb{R}^d)$. 
Its function class is specified by 
(i) \textbf{$\alpha$-margin condition}, 
(ii) \textbf{$\beta$-H{\"o}lder condition}, 
(iii) \textbf{$\gamma$-neighbour average smoothness}, 
that will be formally described in Definition~\ref{def:alpha_margin}, \ref{def:beta_holder} and \ref{def:gamma_local_homogeneity} later in Section~2. 
Roughly speaking, classification problems with larger $\alpha \ge 0,\beta>0,\gamma>0$ values are easier to solve, and the corresponding convergence rate becomes faster; 
the rates for specific cases are summarized in Table~\ref{table:rates}.

For unweighted $k$-NN, 
\citet{chaudhuri2014rates} proves the rate $O(n^{-(1+\alpha)\gamma/(2\gamma+d)})$ by imposing $\alpha$-margin condition and $\gamma$-average neighbour smoothness. 
Whereas the rate seems favorable, 
$\gamma$ is upper-bounded by $2$ due to the asymptotic bias, even if highly-smooth function is considered~($\beta \ge 2$; our Theorem~\ref{theo:local_homogeneity_2}). 
Thus the rate for unweighted $k$-NN is $O(n^{-2(1+\alpha)/(4+d)})$ at best.

$k$-NN estimator has much in common with Nadaraya-Watson~(NW) estimator~\citep{tsybakov2009introduction}, and its classifier is proved to attain the same rate $O(n^{-4/(4+d)})$ as unweighted $k$-NN, for $\alpha=1$ and twice-differentiable $\eta$~\citep{hall2005bandwidth}. 
It is also widely known that the convergence rate of local polynomial~(LP)-estimator~\citep{tsybakov2009introduction} is drastically improved from that of the NW-estimator, when approximating highly smooth functions; 
\citet{audibert2007learning} considers a classifier based on LP-estimator and an \emph{uniform bound} of the excess risk over all the possible $\eta$ and the distribution of $X$. 
The rate for LP classifier is $O(n^{-(1+\alpha)\beta/(2\beta+d)})$, 
which is also proved to be optimal among all the classifiers. 
However, the LP estimator employs polynomials of degree $\lfloor \beta \rfloor:=\max\{\beta' \in \mathbb{N}_0 \mid \beta' < \beta\}$; it estimates coefficients of $1+d+d^2+\cdots+d^{\lfloor \beta \rfloor}$ terms, resulting in high computational cost and difficulty in implementation.

Returning back to $k$-NN classifiers, which do not require such a large number of coefficients therein, \citet{samworth2012optimal} finds optimal weights for weighted $k$-NN by minimizing the exact asymptotic expansion of the non-uniform bound of the excess risk. 
When considering only non-negative weights, optimal convergence rate is $O(n^{-4/(4+d)})$, where the rate is still same as the case $\alpha=1$ of the unweighted $k$-NN. 
However, interestingly, \citet{samworth2012optimal} also proves that \textbf{real-valued weights} including negative weights are essential for eradicating the bias and attaining the exact optimal rate $O(n^{-2\beta/(2\beta+d)})$ for 
$\eta\in C^\beta$ with $\alpha=1,\beta=2u \: (u \in \mathbb{N})$.

\textbf{Current issue:} 
In practice, determining the weights explicitly in the way of \citet{samworth2012optimal} is rather burdensome, where explicit weights are shown for limited cases~($\beta=2,4$). 
Other simpler approaches to determine optimal weights are appreciated.

\textbf{Contribution of this paper:} 
We propose \emph{multiscale $k$-NN~(MS-$k$-NN)}, consisting of two simple steps: (1) unweighted $k$-NN estimators are computed for several $k\ge1$ values, and (2) extrapolating them to $k=0$ via regression,
as explained in Figure~\ref{fig:illustration_msknn}.
This algorithm eradicates the asymptotic bias, as it computes an imaginary 0-NN estimator. 
Whereas the MS-$k$-NN is computed quite simply, it corresponds to the weighted $k$-NN equipped with favorable real-valued weights, which are automatically specified via regression.
Our algorithm implicitly computes the optimal weights adaptive to the query and its neighbour points.
We prove that the MS-$k$-NN attains the improved convergence rate $O(n^{-(1+\alpha)\beta/(2\beta+d)})$, that coincides with the optimal rate obtained in \citet{samworth2012optimal} if $\alpha=1,\beta=2u \:(u \in \mathbb{N})$. 
Numerical experiments are conducted for performing MS-$k$-NN.

We last note that, the weights implicitly obtained in MS-$k$-NN are different from those of \citet{samworth2012optimal}, though both of these weights attain the same optimal convergence rate. 
See Figure~\ref{fig:optimal_weights} in Section~\ref{subsec:optimal_real-valued_weights} for the obtained weights. 
Also see Supplement~\ref{supp:related_works} for remaining related works.

\section{Preliminaries}
\label{sec:preliminaries}

We describe the problem setting, notation, and the conditions in Section~\ref{subsec:problem_setting}, \ref{subsec:notation}, \ref{subsec:conditions_assumptions}, respectively.

\subsection{Problem setting}
\label{subsec:problem_setting}
For a non-empty compact set $\mathcal{X} \subset \mathbb{R}^d$,
$d \in \mathbb{N}$, a pair of random variables $(X,Y)$ takes values in $\mathcal{X} \times \{0,1\}$ with joint distribution $\mathbb{Q}$, where 
$X$ represents a feature vector of an object, and 
$Y$ represents its binary class label to which the object belongs. 
$\mu$ represents the probability density function of $X$ and $\eta$ is the conditional expectation
\begin{align}
    \eta(x)=\mathbb{E}(Y \mid X=x).
    \label{eq:conditional_expectation}
\end{align}
Conditions on $\eta$ are listed in Definition~\ref{def:alpha_margin}--\ref{def:gamma_local_homogeneity} later in Section~\ref{subsec:conditions_assumptions}.

$\mathcal{D}_n:=\{(X_i,Y_i)\}_{i=1}^{n}$, $n\in\mathbb{N}$, and $(X_*,Y_*)$ are considered throughout this paper, where they are independent copies of $(X,Y)$; 
$\mathcal{D}_n$ is called a \emph{sample}, and $X_*$ is called a \emph{query}. 
Given a query $X_* \in \mathcal{X}$, we consider predicting the corresponding label $Y_*$ by a \emph{classifier} $\hat{g}_n:\mathcal{X} \to \{0,1\}$ using the sample $\mathcal{D}_n$. 
The performance of a classifier $g$ is evaluated by the misclassification error rate 
$L(g):=\mathbb{P}_{X_*,Y_*}(g(X_*) \neq Y_*)$. 
Under some mild assumptions, \emph{excess risk}
\begin{align}
    \mathcal{E}(\hat{g}_n)
    :=
    \mathbb{E}_{\mathcal{D}_n}(L(\hat{g}_n))-\inf_{g:\mathcal{X} \to \{0,1\}}L(g)
    \label{eq:def_excess_risk}
\end{align}
for various classifiers is proved to approach $0$ as $n \to \infty$. 
Note that the classifier $g_*(X):=\mathbbm{1}(\eta(X) \ge 1/2)$ satisfies $L(g_*)=\inf_{g:\mathcal{X} \to \{0,1\}}L(g)$, and it is said to be Bayes-optimal~\citep[see, e.g.,][Section 2.2]{devroye1996probabilistic}. 
Then, the asymptotic order of the excess risk $\mathcal{E}(\hat{g}_n)$ with respect to the sample size $n$ is called \emph{convergence rate}; 
the goal of this study is to propose a classifier that 
(i) is practically easy to implement, and 
(ii) attains the optimal convergence rate.

\subsection{Notation}
\label{subsec:notation}
For any given query $X_* \in \mathcal{X}(\subset \mathbb{R}^d)$ and a sample $\mathcal{D}_n$, 
the index $1,2,\ldots,n$ is re-arranged to be $(1),(2),\ldots,(n)$ s.t. 
\scalebox{1}{$\displaystyle 
    \|X_*-X_{(1)}\|_{2}
    \le 
    \|X_*-X_{(2)}\|_{2}
    \le 
    \cdots 
    \le 
    \|X_*-X_{(n)}\|_{2}
    $}
where Euclidean norm $\|\bs x\|_{2}=(\sum_{i=1}^{d}x_i^2)^{1/2}$ is employed throughout this paper. 
Note that the re-arranged index $(1),(2),\ldots,(n)$ depends on the query $X_*$; we may also denote the index by $(1;X_*),(2;X_*),\ldots,(n;X_*)$. 
$B(X;r)
:=
\{X' \in \mathcal{X} \mid \|X-X'\|_{2} \le r\}
\subset \mathcal{X}$ 
represents the $d$-dimensional closed ball centered at $\bs x \in \mathcal{X}$ whose radius is $r>0$. 

$f(n) \asymp g(n)$ indicates that the asymptotic order of $f,g$ are the same, 
$\text{tr}\bs A=\sum_{i=1}^{d}a_{ii}$ represents the trace of the matrix $\bs A=(a_{ij}) \in \mathbb{R}^{d \times d}$, $\bs 1=(1,1,\ldots,1)^\top$ is a vector and $\mathbbm{1}(\cdot)$ represents an indicator function. 
$\lfloor \beta \rfloor := \max\{\beta' \in \mathbb{N} \mid \beta' < \beta\}$ for $\beta>0$, $[n]:=\{1,2,\ldots,n\}$ for any $n \in \mathbb{N}$, and 
$\|\bs x\|_{\infty}:=\max_{i \in [d]}|x_i|$ for $\bs x=(x_1,x_2,\ldots,x_d)$. 
Let $\mathbb{N}_0 = \mathbb{N}\cup\{0\}$.
For $q\in\mathbb{N}_0$, $C^{q}=C^{q}(\mathcal{X})$ represents a set of $q$-times continuously differentiable functions $f:\mathcal{X} \to \mathbb{R}$.

\subsection{Conditions}
\label{subsec:conditions_assumptions}

We first list three different types of conditions on the conditional expectation (\ref{eq:conditional_expectation}), 
in Definition~\ref{def:alpha_margin}, \ref{def:beta_holder} and \ref{def:gamma_local_homogeneity} below; they are considered in a variety of existing studies~\citep{gyorfi1981rate,devroye1996probabilistic,audibert2007learning,tsybakov2009introduction,chaudhuri2014rates}.

\begin{defi}[$\alpha$-margin condition]
\label{def:alpha_margin}
If there exist constants $L_{\alpha} \ge 0,\tilde{t}>0$ and $\alpha \ge 0$ such that 
\[
    \mathbb{P}(|\eta(X)-1/2| \le t) \le L_{\alpha} t^{\alpha}, 
\]
for all $t \in (0,\tilde{t}]$ and $X \in \mathcal{X}$, 
$\eta$ is said to be satisfying \emph{$\alpha$-margin condition}, with \emph{margin exponent} $\alpha$. 
\end{defi}

\begin{defi}[$\beta$-H{\"o}lder condition]
\label{def:beta_holder}
Let $\mathcal{T}_{q,X_*}[\eta]$ be the Taylor expansion of a function $\eta$ of degree $q\in\mathbb{N}_0$ at $X_* \in \mathcal{X}$~(See, Definition~\ref{def:taylor} in Supplement for details). 
A function $\eta \in C^{\lfloor\beta\rfloor}(\mathcal{X})$ is said to be \emph{$\beta$-H{\"o}lder}, where $\beta>0$ is called \emph{H{\"o}lder exponent}, if there exists $L_{\beta}>0$ such that 
\begin{align}
|\eta(X)-\mathcal{T}_{\lfloor \beta \rfloor,X_*}[\eta](X)| \le L_{\beta}\|X-X_*\|^{\beta}
\label{eq:beta_holder_audibert}
\end{align}
for any $X,X_* \in \mathcal{X}$. 
Note that a function $\eta\in C(\mathcal{X})^\beta$ for $\beta\in\mathbb{N}$ and compact $\mathcal{X}$ is also $\beta$-H{\"o}lder.
\end{defi}

The above H{\"o}lder condition specifies the smoothness of $\eta$ by a user-specified parameter $\beta>0$, and the above (\ref{eq:beta_holder_audibert}) is employed in many studies, e.g., \citet{audibert2007learning}. It reduces to 
\begin{align}
 |\eta(X)-\eta(X_*)| \le L_\beta\|X-X_*\|^{\beta}
\label{eq:beta-holder-classical}
\end{align}
for $0 < \beta \le 1$. 
However, (\ref{eq:beta_holder_audibert}) and (\ref{eq:beta-holder-classical}) are different for $\beta>1$, where the latter is considered in \citet{chaudhuri2014rates}.

For describing the next condition, we consider $\eta^{(\infty)}(B) :=
\mathbb{E}(Y | X \in B)$,
that is the conditional expectation of $Y$ given $X \in B$ for the set $B \subset \mathbb{R}^d$. $\eta^{(\infty)}$ and the support of $\mu$ are expressed as
\begin{align}
    \eta^{(\infty)}(B)
    &=
    \frac{
        \int_{B \cap \mathcal{X}} \eta(X) \mu(X) \diff X
    }{
        \int_{B \cap \mathcal{X}} \mu(X) \diff X
    },
    \quad 
    \mathcal{S}(\mu)
    :=
    \scalebox{0.95}{$\displaystyle
    \bigg\{X \in \mathcal{X} \mid \int_{B(X;r)}\mu(X) \diff X>0,\forall r>0\bigg\}$}
    \label{eq:smu}
\end{align}
where \citet{chaudhuri2014rates} Lemma~9 proves that $\eta^{(\infty)}(B(X_*;r))$ for $X_* \in S(\mu)$ asymptotically approximates the unweighted $k$-NN estimator (with roughly $r=\|X_{(k)}-X_*\|_2$), that will be formally defined in Definition~\ref{def:weighted_knn}. 
$\mathcal{S}(\mu)$ is assumed to be compact throughout this paper.

\begin{defi}[$\gamma$-neighbour average smoothness]
\label{def:gamma_local_homogeneity}
If there exists $L_{\gamma},\gamma>0$ such that 
\begin{align*}
    |\eta^{(\infty)}(B(X;r))-\eta(X)|
    \le 
    L_{\gamma} r^{\gamma}
\end{align*}
for all $r>0$ and $X \in \mathcal{S}(\mu)$, then the function $\eta$ is said to be \emph{$\gamma$-neighbour average smooth} with respect to $\mu$, where $\gamma$ is called \emph{neighbour average exponent}. 
A weaker version of this condition is used in \citet{gyorfi1981rate}, where
the constant $L_\gamma$ is replaced by a function $L_\gamma(X)$.
A related but different condition called ``$(\alpha, L)$-smooth'' is used in \citet{chaudhuri2014rates}; see Supplement~\ref{supp:note_on_chaudhuri}.
\end{defi}

We last define an assumption on the density of $X$, that is employed in \citet{audibert2007learning}. 

\begin{defi}[Strong density assumption]
If there exist $\mu_{\min},\mu_{\max} \in (0,\infty)$ such that $\mu_{\min} \le \mu(X) \le \mu_{\max}$ for all $X \in \mathcal{X}$, $\mu$ is said to be satisfying \emph{strong density assumption}. 
\end{defi}

\section{$k$-NN classifier and convergence rates}
\label{sec:existing_classifiers_and_convergence_rates}

In Section~\ref{subsec:plug-in_classifiers_list}, we first define $k$-NN classifier. 
Subsequently, we review existing studies on convergence rates for unweighted $k$-NN and weighted $k$-NN classifiers in Section~\ref{subsec:unweighted_knn} and \ref{subsec:weighted_knn}, respectively. 
Other classifiers and their convergence rates are also presented in Supplement~\ref{supp:LP}.

\subsection{$k$-NN classifier}
\label{subsec:plug-in_classifiers_list}
In this paper, we consider only a plug-in classifier~\citep{audibert2007learning}
\begin{align}
    g^{\text{(plug-in)}}(X;\hat{\eta}_n):=\mathbbm{1}(\hat{\eta}_n(X) \ge 1/2),
    \label{eq:def_plug_in_classifier}
\end{align}
where $\hat{\eta}_n(X)$ is an estimator of $\eta(X)$, that leverages the sample $\mathcal{D}_n$. 
Given a query $X_* \in \mathcal{X}$, 
an archetypal example of the function value $\hat{\eta}_n(X_*)$ is in the following.

\begin{defi}
\label{def:weighted_knn}
Weighted $k$-NN estimator is defined as 
\begin{align}
    \hat{\eta}_{n,k,\bs w}^{(k\text{NN})}(X_*)
    :=
    \sum_{i=1}^{k} w_i Y_{(i;X_*)},
    \label{eq:weighted_knn}
\end{align}
where $(1;X_*),(2;X_*),\ldots,(n;X_*)$ is the re-arranged index defined in Section~\ref{subsec:notation} and $k \in \mathbb{N}$ is a user-specified parameter. It is especially called unweighted $k$-NN if $w_1=w_2=\cdots=w_k=1/k$, and is denoted by $\hat{\eta}_{n,k}^{(k\text{NN})}$. 
The (weighted) $k$-NN classifier is $\hat{g}_{n,k,\bs w}^{(k\text{NN})}(X):=g^{(\text{plug-in})}(X;\hat{\eta}_{n,k,\bs w}^{(k\text{NN})})$. 
\end{defi}

\subsection{Convergence rate for unweighted $k$-NN classifier}
\label{subsec:unweighted_knn}

Here, we consider the unweighted $k$-NN; the following Proposition~\ref{prop:chaudhuri_theorem4} shows the convergence rate.

\begin{prop}[A slight modification of \citet{chaudhuri2014rates} Th.~4]
\label{prop:chaudhuri_theorem4}
Let $\mathcal{X}$ be a compact set, and assuming that 
(i) $\eta$ satisfies $\alpha$-margin condition and is $\gamma$-neighbour average smooth, and (ii) $\mu$ satisfies strong density assumption. 
Then, the convergence rate of the unweighted $k$-NN classifier with $k_* = k_n \asymp n^{2\gamma/(2\gamma+1)}$ is 
\[
    \mathcal{E}(\hat{g}_{n,k_*}^{(k\text{NN})})
    =
    O(n^{-(1+\alpha)\gamma/(2\gamma+d)}).
\]
\end{prop}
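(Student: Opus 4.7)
The plan is to use the standard two-step reduction that ties excess risk to pointwise estimation error. First, for any plug-in rule $\hat g_n = \mathbbm{1}(\hat\eta_n \ge 1/2)$, the excess risk can be written as
$$\mathcal{E}(\hat g_n) = \mathbb{E}\bigl[|2\eta(X_*)-1|\,\mathbbm{1}(\hat g_n(X_*)\ne g_*(X_*))\bigr],$$
and on $\{\hat g_n(X_*)\ne g_*(X_*)\}$ one necessarily has $|\hat\eta_n(X_*)-\eta(X_*)|\ge |\eta(X_*)-1/2|$. Splitting according to whether $|\eta(X_*)-1/2|\le t_n$ or not, the first piece is controlled by $2 t_n \cdot L_\alpha t_n^\alpha$ via the $\alpha$-margin condition, while the second is bounded by $\mathbb{P}(|\hat\eta_n(X_*)-\eta(X_*)|\ge t_n)$. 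Hence it suffices to establish a pointwise concentration bound on $|\hat\eta_n(X_*)-\eta(X_*)|$ at a rate $t_n$ and then set $t_n$ equal to that rate.

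Next, I would decompose the estimation error at a fixed query as
$$\hat\eta_{n,k}^{(k\text{NN})}(X_*)-\eta(X_*) = \underbrace{\bigl(\hat\eta_{n,k}^{(k\text{NN})}(X_*)-\eta^{(\infty)}(B(X_*;r_k))\bigr)}_{\text{stochastic part}} + \underbrace{\bigl(\eta^{(\infty)}(B(X_*;r_k))-\eta(X_*)\bigr)}_{\text{bias}},$$
where $r_k := \|X_{(k)}-X_*\|_2$. The bias is directly controlled by $\gamma$-neighbour average smoothness: $|\text{bias}|\le L_\gamma r_k^\gamma$ whenever $X_*\in\mathcal{S}(\mu)$, which holds almost surely under the strong density assumption. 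For the stochastic part, I would condition on the positions of the $k$ nearest neighbours, invoking the fact that conditionally the labels $Y_{(i)}$ are independent with means averaging to $\eta^{(\infty)}(B(X_*;r_k))$ up to negligible error (Lemma~9 of \citet{chaudhuri2014rates}); Hoeffding's inequality then yields $\mathbb{P}(|\text{stochastic}|\ge s \mid r_k)\le 2\exp(-2ks^2)$.

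To close the loop, I need to control $r_k$. Under the strong density assumption together with compactness of $\mathcal{S}(\mu)$, a standard geometric argument gives $\mu(B(X_*;r)\cap\mathcal{X}) \ge c_0 r^d$ for small $r$, and a binomial tail bound then shows $r_k \le C(k/n)^{1/d}$ with probability at least $1 - e^{-ck}$. Combining the three bounds produces
$$\mathbb{P}\bigl(|\hat\eta_{n,k}^{(k\text{NN})}(X_*)-\eta(X_*)| \ge C(k/n)^{\gamma/d} + s\bigr) \le 2\exp(-2ks^2) + e^{-ck}.$$
Balancing $(k/n)^{\gamma/d}$ against $k^{-1/2}$ gives the optimal choice $k \asymp n^{2\gamma/(2\gamma+d)}$, producing a pointwise rate $t_n \asymp n^{-\gamma/(2\gamma+d)}$. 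Substituting into the margin-based reduction from the first paragraph yields $\mathcal{E}(\hat g_{n,k_*}^{(k\text{NN})}) = O(t_n^{1+\alpha}) = O(n^{-(1+\alpha)\gamma/(2\gamma+d)})$, as claimed.

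The principal obstacle is the conditioning step for the stochastic part: one must argue that, given the positions of the $k$ nearest neighbours, the responses $Y_{(i)}$ behave like independent Bernoullis whose means average to $\eta^{(\infty)}(B(X_*;r_k))$ rather than to some other quantity. This is precisely why the averaged smoothness of Definition~\ref{def:gamma_local_homogeneity} (rather than pointwise smoothness of $\eta$) is the natural hypothesis here, and the reason that $\gamma$ appears rather than the H\"older exponent $\beta$. The remaining ingredients — the Audibert--Tsybakov type reduction and the binomial control of $r_k$ — are by now routine.
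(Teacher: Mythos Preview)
Your bias/variance decomposition is correct, and you identify the right three ingredients: $\gamma$-neighbour average smoothness for the bias, Hoeffding via \citet{chaudhuri2014rates} Lemma~9 for the stochastic part, and the binomial tail for $r_k$ under the strong density assumption. The gap is in your first paragraph. The single-threshold split you describe does not recover the $(1+\alpha)$ exponent: at the balancing point $k\asymp n^{2\gamma/(2\gamma+d)}$, $t_n\asymp n^{-\gamma/(2\gamma+d)}$, your own display gives $k(t_n-b)^2\asymp 1$ since the bias $b=C(k/n)^{\gamma/d}$ is of the same order as $t_n$, so the Hoeffding term $2\exp(-2k(t_n-b)^2)$ is only a constant and your ``second piece'' is $O(1)$, not $O(t_n^{1+\alpha})$. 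Inflating $t_n$ by a $\sqrt{\log n}$ factor would drive that probability to be polynomially small, but then the first piece acquires an unwanted $(\log n)^{(1+\alpha)/2}$ factor. To obtain the sharp rate you must peel over shells $\{2^{j}\Delta_o<|\eta(X_*)-1/2|\le 2^{j+1}\Delta_o\}$, using the exponential tail $\exp(-ck\Delta^2)$ at each level $\Delta=2^j\Delta_o$ together with the margin bound $\mathbb{P}(|\eta-1/2|\le 2^{j+1}\Delta_o)\le L_\alpha(2^{j+1}\Delta_o)^\alpha$, and sum in $j$; this is precisely what \citet{chaudhuri2014rates} do in their Lemmas~20--21 (and what the present paper replays in Supplement~\ref{supp:proof_main} for Theorem~\ref{theo:msknn_cr}). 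Your phrase ``Audibert--Tsybakov type reduction'' gestures at the right tool, but the version you wrote out is too coarse.

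For comparison, the paper's own proof of Proposition~\ref{prop:chaudhuri_theorem4} does not reconstruct any of this. It simply invokes \citet{chaudhuri2014rates} Theorem~4(b) as a black box, after verifying in Supplement~\ref{supp:note_on_chaudhuri} that their hypothesis $|\eta(X_*)-\eta^{(\infty)}(B(X_*;r))|\le L\bigl(\int_{B(X_*;r)}\mu\bigr)^{\tilde\gamma}$ is implied by the $\gamma$-neighbour average smoothness of Definition~\ref{def:gamma_local_homogeneity} together with the strong density lower bound $\mu\ge\mu_{\min}$, via $\int_{B(X_*;r)}\mu\ge c\,r^d$ and the identification $\tilde\gamma=\gamma/d$. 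Your self-contained route is more informative, but it must include the peeling step to be complete.
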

\begin{proof} \citet{chaudhuri2014rates} Theorem~4(b) shows the convergence rate; see Supplement~\ref{supp:note_on_chaudhuri} for the correspondence of the assumption and symbols. 
\end{proof}

Our current concern is whether the convergence rate $O(n^{-(1+\alpha)\gamma/(2\gamma+d)})$ of the unweighted $k$-NN classifier can be associated to the rate $O(n^{-(1+\alpha)\beta/(2\beta+d)})$ of the LP classifier, 
whose optimality is proved by \citet{audibert2007learning} and is formally described in Proposition~\ref{prop:audibert2007_lp} in Supplement~\ref{supp:LP}. 
\citet{chaudhuri2014rates} asserts that these rates are the same, i.e., $\gamma=\beta$, if there exists $L_\beta>0$ such that (\ref{eq:beta-holder-classical}) holds for any $X, X_* \in \mathcal{X}$.
However, only constant functions can satisfy the condition~(\ref{eq:beta-holder-classical}) for $\beta>1$~\citep[][Lemma~2.3]{mittmann2003existence};  only an extremely restricted function class is considered in \citet{chaudhuri2014rates}.

We here return back to the $\beta$-H{\"o}lder condition (\ref{eq:beta_holder_audibert}) considered in this paper and \citet{audibert2007learning}, 
that is compatible with the condition~(\ref{eq:beta-holder-classical}) for $\beta \le 1$ but is different for $\beta>1$. 
Whereas a variety of functions besides constant functions satisfy the $\beta$-H{\"o}lder condition (\ref{eq:beta_holder_audibert}), 
our following Theorem~\ref{theo:local_homogeneity_2} shows that $\gamma = 2$ even if $\eta$ is highly smooth~($\beta \gg 2$).  
Especially for $\alpha=1$, the rate of unweighted $k$-NN coincides with the rate $O(n^{-4/(4+d)})$ of NW-classifier~\citep{hall2005bandwidth}.

\begin{theo}
\label{theo:local_homogeneity_2}
Let $\mathcal{X}$ be a compact set, and let $\beta>0$. 
Assuming that 
(i) $\mu$ and $\eta \mu$ are $\beta$-H{\"o}lder, and 
(ii) $\mu$ satisfies the strong density assumption, 
there exist constants $L_{\beta}^*>0,\tilde{r}>0$ and continuous functions $b_1^*,b_2^*,\ldots,b_{\lfloor \beta/2 \rfloor}^*,\delta_{\beta, r}:\mathcal{X} \to \mathbb{R}$ such that
\begin{align*}
    \eta^{(\infty)}(B(X_*;r)) - \eta(X_*) &= 
    \scalebox{0.9}{$\displaystyle\sum_{c=1}^{\lfloor \beta/2 \rfloor}b_c^*(X_*) r^{2c}
    +
    \delta_{\beta,r}(X_*)$}, \quad 
    |\delta_{\beta,r}(X_*)| \le L_{\beta}^* r^{\beta} 
\end{align*}
for all $r \in (0,\tilde{r}],X_* \in \mathcal{S}(\mu)$ defined in (\ref{eq:smu}). 
For $\beta>2$, 
$b_1^*(X_*) =
    \frac{1}{2d+4}
    \frac{1}{\mu(X_*)}
    \{\Delta [\eta(X_*) \mu(X_*)]-\eta(X_*)\Delta \mu(X_*)\}$ with $\Delta:=\frac{\partial^2}{\partial x_1^2}+\cdots+\frac{\partial^2}{\partial x_d^2}$; 
if $\eta$ is $\beta(\ge 2)$-H{\"o}lder, $\eta$ is $(\gamma=)2$-neighbour average smooth and 
\begin{align*}
\mathcal{E}(\hat{g}_{n,k_*}^{(k\text{NN})})=O(n^{-2(1+\alpha)/(4+d)}).
\end{align*}
\end{theo}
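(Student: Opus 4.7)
The plan is to derive the stated expansion by Taylor-expanding both the numerator and denominator of the ratio
\[
\eta^{(\infty)}(B(X_*;r)) = \frac{\int_{B(X_*;r)\cap \mathcal{X}}\eta(X)\mu(X)\,\diff X}{\int_{B(X_*;r)\cap \mathcal{X}}\mu(X)\,\diff X}
\]
around $X_*$, exploiting the spherical symmetry of the integration region to annihilate the odd-order terms, and then expanding the quotient. First, apply the $\beta$-H{\"o}lder assumption separately to $f=\mu$ and $f=\eta\mu$: writing $h=X-X_*$, we have $f(X_*+h)=\mathcal{T}_{\lfloor\beta\rfloor,X_*}[f](X_*+h)+R_f(X_*,h)$ with $|R_f(X_*,h)|\le L_\beta\|h\|^\beta$.

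Second, choose $\tilde r>0$ small enough so that $B(X_*;\tilde r)\subset \mathcal{X}$ uniformly for $X_*$ in the interior of $\mathcal{S}(\mu)$ (the boundary case is where the symmetry argument is fragile; handling it will require an assumption on the regularity of $\partial\mathcal{X}$, or a restriction to interior queries). Integrate the Taylor expansion of $f$ over $B(X_*;r)$ term by term. For each multi-index $k$ of odd total degree $|k|$, $\int_{\|h\|\le r} h^k\,\diff h = 0$ by symmetry. For $|k|=2c$ even, $\int_{\|h\|\le r} h^k\,\diff h=\nu_k r^{d+2c}$ for explicit constants $\nu_k$; in particular $\int_{\|h\|\le r}h_ih_j\,\diff h=\delta_{ij}V_d r^{d+2}/(d+2)$. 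This yields
\[
\int_{B(X_*;r)} f\,\diff X = V_d r^d f(X_*) + \sum_{c=1}^{\lfloor\beta/2\rfloor} f_c(X_*)\,r^{d+2c} + R_{f,\beta}(X_*,r),
\]
with $f_c$ continuous in $X_*$ (polynomial expressions in the partial derivatives of $f$ up to order $2c$), $f_1(X_*)=\frac{V_d}{2(d+2)}\Delta f(X_*)$, and $|R_{f,\beta}(X_*,r)|\le C_\beta r^{d+\beta}$ uniformly over the compact set $\mathcal{S}(\mu)$.

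Third, form the ratio of the two expansions (one for $f=\eta\mu$, one for $f=\mu$). The denominator is bounded below by $V_d r^d \mu_{\min}$ via the strong density assumption, so the standard geometric-series expansion $(1+x)^{-1}=\sum_j(-x)^j$ applied to the normalised denominator $1+O(r^2)$ is valid uniformly for $r\le\tilde r$ and $X_*\in\mathcal{S}(\mu)$. Multiplying the two series and truncating at order $r^\beta$ gives the claimed expansion $\eta^{(\infty)}(B(X_*;r))-\eta(X_*)=\sum_{c=1}^{\lfloor\beta/2\rfloor}b_c^*(X_*)r^{2c}+\delta_{\beta,r}(X_*)$, with $b_c^*$ continuous and $|\delta_{\beta,r}(X_*)|\le L_\beta^* r^\beta$. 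Reading off the $c=1$ coefficient from the ratio expansion of $\bigl[V_d\eta\mu+\tfrac{V_d}{2(d+2)}\Delta(\eta\mu)r^2+\cdots\bigr]\big/\bigl[V_d\mu+\tfrac{V_d}{2(d+2)}\Delta\mu\,r^2+\cdots\bigr]$ and simplifying yields $b_1^*(X_*)=\frac{1}{2(d+2)\mu(X_*)}\bigl\{\Delta[\eta\mu](X_*)-\eta(X_*)\Delta\mu(X_*)\bigr\}$, matching the statement (since $2(d+2)=2d+4$).

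Finally, for the classifier rate, assume $\beta\ge 2$. Then $\lfloor\beta/2\rfloor\ge 1$ and the dominant term for small $r$ is the $r^2$ term. By continuity of $b_c^*$ on compact $\mathcal{S}(\mu)$, $C:=\sup_{X_*\in\mathcal{S}(\mu)}\sum_{c=1}^{\lfloor\beta/2\rfloor}|b_c^*(X_*)|<\infty$, so $|\eta^{(\infty)}(B(X_*;r))-\eta(X_*)|\le (C+L_\beta^*\tilde r^{\beta-2})r^2$ for all $r\in(0,\tilde r]$, establishing $\gamma=2$-neighbour average smoothness. Plugging $\gamma=2$ into Proposition~\ref{prop:chaudhuri_theorem4} gives $\mathcal{E}(\hat{g}_{n,k_*}^{(k\text{NN})})=O(n^{-2(1+\alpha)/(4+d)})$. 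The main obstacle will be (a) the rigorous handling of $X_*$ near $\partial\mathcal{X}$ where $B(X_*;r)\not\subset\mathcal{X}$ and the symmetry cancellation of odd terms fails (requiring either an interior restriction or a regularity hypothesis on $\partial\mathcal{X}$), and (b) tracking the uniformity of the remainder bound $|\delta_{\beta,r}(X_*)|\le L_\beta^*r^\beta$ through both the Taylor integration step and the ratio expansion.
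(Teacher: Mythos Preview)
Your proposal is correct and follows essentially the same route as the paper's proof: Taylor-expand $\mu$ and $\eta\mu$ to order $\lfloor\beta\rfloor$, integrate over the ball (using spherical symmetry to kill odd-degree monomials and Folland-type moment formulae for the even ones), then divide the two resulting series and read off the even-power expansion and the explicit $b_1^*$. One minor caveat: under the paper's convention $\lfloor\beta\rfloor:=\max\{\beta'\in\mathbb{N}_0:\beta'<\beta\}$, the case $\beta=2$ gives $\lfloor\beta/2\rfloor=0$, so the expansion is empty and the $O(r^\beta)=O(r^2)$ remainder alone delivers $\gamma=2$; your statement ``$\lfloor\beta/2\rfloor\ge1$'' needs this small adjustment, but the conclusion is unchanged.
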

\begin{proof}
The numerator and denominator of $\eta^{(\infty)}(B(X_*;r))$ are obtained via integrating Taylor expansions of $\eta \mu$ and $\mu$, respectively; division proves the assertion. 
See, Supplement~\ref{supp:proof_prop:local_homogeneity_2} for details.
\end{proof}

\subsection{Convergence rate for weighted $k$-NN classifier}
\label{subsec:weighted_knn}

Here, we consider the weighted $k$-NN. 
\citet{samworth2012optimal} first derives non-negative optimal weights
\begin{align}
\scalebox{0.9}{$\displaystyle
    w_i^*
    :=
    \frac{1}{k_*}\left\{ 1+\frac{d}{2}-\frac{d}{2k_*^{2/d}}(i^{1+2/d}-(i-1)^{1+2/d}) \right\} 
    $}
    \label{eq:samworth_optimal_nonneg}
\end{align}
for $i \in [k_*]$ and $0$ otherwise, where $k_* \asymp n^{4/(d+4)}$, 
through the asymptotic expansion of the excess risk. 
However, the obtained rate is still $O(n^{-4/(4+d)})$~(\citet{samworth2012optimal} Theorem~2), that is the same as the case $\alpha=1$ of unweighted $k$-NN~(Theorem~\ref{theo:local_homogeneity_2}); convergence evaluation of the $k$-NN still remains slow, even if arbitrary weights can be specified.

For further improving the convergence rate, \citet{samworth2012optimal} also considers \textbf{real-valued weights} allowing negative values. The improved convergence rate is given in the following Proposition~\ref{prop:samworth_real_valued_weights}. 
Formal descriptions, i.e., definition of the weight set $\mathcal{W}_{n,s}$ and conditions for their rigorous proof, are described in Supplement~\ref{supplement:samworth} due to the space limitation. 

\begin{prop}[\citet{samworth2012optimal} Th.~6]
\label{prop:samworth_real_valued_weights}
    Let $\mathcal{W}_{n,s}$ be a set of real-valued weights defined in Supplement~\ref{supplement:samworth}, where
    we assume the conditions (i)--(iv) therein.
    Note that the condition~(ii) implies
    $\eta \in C^{\beta}$, $\beta=2u$ for $u \in \mathbb{N}$.
    If $(\alpha=)1$-margin condition is assumed, 
    then
    \begin{align}
    \mathcal{E}(\hat{g}^{(k\text{NN})}_{n,k,\bs w})
    \asymp
        \left\{
        B_1 \sum_{i=1}^{n} w_i^2 + B_2\left(
            \sum_{i=1}^{n}\frac{\delta_i^{(u)} w_i}{n^{2r/d}}
        \right)^2
        \right\} (1+o(1))
    \label{eq:excess_risk_weighted_knn_samworth}
    \end{align}
    holds for $\bs w \in \mathcal{W}_{n,s}$, where $B_1,B_2$ are some constants and $\delta_i^{(\ell)}:=i^{1+2\ell/d}-(i-1)^{1+2\ell/d} \: (\forall \ell \in [u])$. 
\end{prop}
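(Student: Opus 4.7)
The plan is to obtain an exact asymptotic expansion of $\mathcal{E}(\hat g^{(k\text{NN})}_{n,k,\bs w})$ and then read off its dominant variance and squared-bias contributions. The starting point is the plug-in identity
$$\mathcal{E}(\hat g^{(k\text{NN})}_{n,k,\bs w}) = \int_{\mathcal{X}} |2\eta(x)-1|\, \mathbb{P}_{\mathcal{D}_n}\bigl(\hat g^{(k\text{NN})}_{n,k,\bs w}(x) \ne g_*(x)\bigr)\, \mu(x)\,\diff x,$$
so everything reduces to a pointwise tail bound on whether $\hat\eta_{n,k,\bs w}(x) - 1/2$ changes sign. For each query $X_*$ I would condition on the ordered neighbours and decompose
$$\hat\eta^{(k\text{NN})}_{n,k,\bs w}(X_*) - \eta(X_*) = \underbrace{\sum_{i=1}^k w_i \bigl(Y_{(i)}-\eta(X_{(i)})\bigr)}_{V_n} + \underbrace{\sum_{i=1}^k w_i\bigl(\eta(X_{(i)})-\eta(X_*)\bigr)}_{b_n}.$$

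For $V_n$ the summands are conditionally independent, mean-zero and bounded, so a Berry--Esseen argument yields $V_n \mid X_{(1)},\dots,X_{(k)} \approx \mathcal{N}(0, \sum_i w_i^2 \eta(X_{(i)})(1-\eta(X_{(i)})))$; continuity of $\eta$ at $X_*$ together with the $\alpha=1$ margin condition collapses this conditional variance to $\tfrac14 \sum_i w_i^2$ at those points that actually contribute to the excess risk. For $b_n$ I would Taylor expand $\eta$ at $X_*$ up to order $\beta=2u$; because the conditional law of each $X_{(i)}$ around $X_*$ is asymptotically isotropic under the strong density assumption and the $C^{\beta}$ hypothesis, the odd-order derivative terms vanish on average, leaving only the even moments $\sum_{\ell=1}^{u} a_\ell(X_*)\, \mathbb{E}\|X_{(i)}-X_*\|^{2\ell}$. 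The classical order-statistic moment formula expresses this in terms of Beta/Gamma identities which, combined with the defining constraints of $\mathcal{W}_{n,s}$ that annihilate the bias terms for $\ell<u$, leave a leading contribution of exactly $\sum_i w_i \delta_i^{(u)}/n^{2u/d}$; here the factor $\delta_i^{(u)}=i^{1+2u/d}-(i-1)^{1+2u/d}$ appears naturally as the discrete increment of $i\mapsto i^{1+2u/d}$.

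Combining these two pieces, the pointwise misclassification probability at $x$ is approximately $\Phi\bigl((1/2-\eta(x)+b_n(x))/\sqrt{\mathrm{Var}(V_n)}\bigr)$; integrating against $|2\eta(x)-1|\mu(x)$, splitting the range of $x$ according to whether $|\eta(x)-1/2|$ dominates or is dominated by the fluctuation scale, and invoking the margin condition produces the announced form
$$\mathcal{E}(\hat g^{(k\text{NN})}_{n,k,\bs w}) = \bigl\{B_1 \textstyle\sum_i w_i^2 + B_2\bigl(\sum_i \delta_i^{(u)} w_i / n^{2u/d}\bigr)^2\bigr\}(1+o(1)),$$
with $B_1, B_2$ depending only on $\mu$, $\eta$ and $d$.

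The principal obstacle is that this expansion must hold \emph{uniformly} over the admissible set $\mathcal{W}_{n,s}$: since real-valued weights can be large in magnitude (which is unavoidable if negative weights are to cancel the low-order bias), the Berry--Esseen and Taylor remainder bounds have to be controlled by the very $\ell^2$-norm $\sum_i w_i^2$ and weighted $\delta$-moments that reappear on the right-hand side, rather than by naive $\ell^\infty$-type estimates; keeping the remainder genuinely $o(\cdot)$ of the leading terms across a growing weight class is the delicate part. A secondary technicality is handling $X_*$ near the boundary of $\mathcal{S}(\mu)$, where neighbour isotropy breaks down and either a boundary-shrinking argument or a careful near-boundary expansion leveraging the compactness of $\mathcal{S}(\mu)$ is required.
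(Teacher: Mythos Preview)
The paper does not provide its own proof of this proposition: it is quoted directly as Theorem~6 of \citet{samworth2012optimal}, and Supplement~\ref{supplement:samworth} only records the definition of $\mathcal{W}_{n,s}$, the conditions (i)--(iv), and a restatement of the result. There is therefore nothing in the paper to compare your argument against.

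That said, your sketch follows the architecture of Samworth's original proof reasonably well: the plug-in identity for the excess risk, the bias--variance split of $\hat\eta-\eta$, a Berry--Esseen normal approximation for the noise term, a Taylor expansion of $\eta$ together with order-statistic moment asymptotics for the bias, and a final integration against $|2\eta(x)-1|\mu(x)$ using the margin condition. One point to be careful about: the defining constraints of $\mathcal{W}_{n,s}$ do \emph{not} annihilate the lower-order bias contributions $\sum_i \delta_i^{(\ell)} w_i$ for $\ell<u$; they only force these to be $o(\cdot)$ of the $\ell=u$ term via the $1/\log n$ bounds. The exact annihilation $\sum_i \delta_i^{(\ell)} w_i=0$ is imposed later, on the \emph{optimal} weights (\ref{eq:samworth_optimal_weights}), not on generic $\bs w\in\mathcal{W}_{n,s}$. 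Your identification of the uniformity over $\mathcal{W}_{n,s}$ and the boundary behaviour as the two principal technical difficulties is accurate.
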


Following Proposition~\ref{prop:samworth_real_valued_weights}, \citet{samworth2012optimal} shows that the asymptotic minimizer of the excess risk (\ref{eq:excess_risk_weighted_knn_samworth}) with the weight constraint $\sum_{i=1}^{n}w_i=1,\sum_{i=1}^{n}\delta_i^{(\ell)}w_i=0$ for all $\ell \in [u-1]$, and $w_i=0$ for $i=k^*+1,\ldots,n$ with $k^* \asymp n^{2\beta/(2\beta+d)} $ is in the form of 
\begin{align}
    w_{i}^*
    :=
    (a_0 + a_1 \delta_i^{(1)}+\cdots+a_u \delta_i^{(u)})/k_*
    \label{eq:samworth_optimal_weights}
\end{align}
for $i=1,2,\ldots,k_*$, where
$\bs a=(a_0,a_1,\ldots,a_u) \in \mathbb{R}^{u+1}$ are unknowns. 
\citet{samworth2012optimal} proposes to find $\bs a$ so that (\ref{eq:samworth_optimal_weights}) satisfies the weight constraint. 
Then, the following optimal rate is obtained.

\begin{coro}
\label{coro:samworth_real_valued_weights}
Symbols and assumptions are the same as those of Proposition~\ref{prop:samworth_real_valued_weights}. 
Then, the optimal $\bs w_*$ and $k_* \asymp n^{2\beta/(2\beta+d)}$ lead to
    \begin{align*}
        \mathcal{E}(\hat{g}^{(k\text{NN})}_{n,k_*,\bs w_*})
        \asymp 
        n^{-2\beta/(2\beta+d)}. 
    \end{align*}
\end{coro}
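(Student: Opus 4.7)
The plan is to substitute the proposed optimal weights (\ref{eq:samworth_optimal_weights}) directly into the asymptotic expansion (\ref{eq:excess_risk_weighted_knn_samworth}) from Proposition~\ref{prop:samworth_real_valued_weights} and then optimize the resulting expression over $k_*$. The structure of the weights is already constrained to lie in the $(u+1)$-parameter family spanned by $\{\delta_i^{(\ell)}\}_{\ell=0}^{u}$, so only a finite-dimensional calculation remains.

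First, I would obtain sharp asymptotics for the $\delta$-sums. Since $\delta_i^{(\ell)}=i^{1+2\ell/d}-(i-1)^{1+2\ell/d}\sim(1+2\ell/d)\,i^{2\ell/d}$, Riemann-sum approximation yields
\begin{align*}
\sum_{i=1}^{k_*}\delta_i^{(\ell)}\delta_i^{(\ell')} = c_{\ell,\ell'}\,k_*^{1+2(\ell+\ell')/d}(1+o(1)),\qquad c_{\ell,\ell'}:=\tfrac{(1+2\ell/d)(1+2\ell'/d)}{1+2(\ell+\ell')/d},
\end{align*}
while $\sum_{i=1}^{k_*}\delta_i^{(\ell)}=k_*^{1+2\ell/d}$ holds exactly by telescoping. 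Reparametrizing $\tilde a_\ell:=a_\ell\,k_*^{2\ell/d}$, the constraints $\sum_i w_i=1$ and $\sum_i\delta_i^{(\ell)}w_i=0$ ($\ell=1,\dots,u-1$) translate into $u$ linear equations in the $u+1$ unknowns $\tilde a_0,\dots,\tilde a_u$, leaving one free degree of freedom.

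Second, I would evaluate both contributions to (\ref{eq:excess_risk_weighted_knn_samworth}) in the $\tilde{\bs a}$ coordinates. A direct computation with the asymptotics above gives
\begin{align*}
\sum_{i=1}^{n}(w_i^*)^2 = k_*^{-1}V(\tilde{\bs a})(1+o(1)),\qquad \sum_{i=1}^{n}\delta_i^{(u)}w_i^* = k_*^{2u/d}M(\tilde{\bs a})(1+o(1)),
\end{align*}
where $V$ is a positive-definite quadratic form and $M$ is linear in $\tilde{\bs a}$ (both with coefficients $c_{\ell,\ell'}$). Hence the excess-risk expansion reduces to
\begin{align*}
\mathcal{E}(\hat g^{(k\mathrm{NN})}_{n,k,\bs w_*})\asymp B_1 k_*^{-1}V(\tilde{\bs a}) + B_2\, M(\tilde{\bs a})^2\,k_*^{4u/d}n^{-4u/d}.
\end{align*}
Balancing the two terms as functions of $k_*$ forces $k_*^{-1}\asymp k_*^{4u/d}n^{-4u/d}$, i.e.\ $k_*\asymp n^{4u/(d+4u)}=n^{2\beta/(2\beta+d)}$, and substituting back yields $\mathcal{E}\asymp n^{-2\beta/(2\beta+d)}$ as claimed. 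The free coefficient in $\tilde{\bs a}$ is finally fixed by minimizing the explicit constant $B_1 V(\tilde{\bs a})+B_2 M(\tilde{\bs a})^2$, a one-dimensional convex problem with a unique interior minimizer.

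The main obstacle I anticipate is controlling the $(1+o(1))$ remainders uniformly across the entire weight family so that no accidental cancellation makes $V$ or $M$ vanish at the optimum. If $M(\tilde{\bs a})=0$ were achievable within the constraint set, the bias term would drop out and one would spuriously obtain a faster rate, contradicting the minimax lower bound of \citet{audibert2007learning}; the argument therefore must verify nondegeneracy, which amounts to showing that the $(u+1)\times(u+1)$ Gram matrix $(c_{\ell,\ell'})$ is positive definite and that the linear functional $M$ is not in the span of the active constraint normals. The remaining Riemann-sum estimates and the final optimization are routine calculus once this linear-algebraic check is in place.
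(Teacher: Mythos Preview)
The paper itself supplies no proof of this corollary; it simply records the conclusion and attributes it to \citet{samworth2012optimal}. Your proposal is exactly the argument underlying Samworth's derivation: substitute the parametric weight family into the two-term expansion of Proposition~\ref{prop:samworth_real_valued_weights}, use the telescoping identity and Riemann-sum asymptotics for $\sum_i \delta_i^{(\ell)}\delta_i^{(\ell')}$ to reduce the risk to $B_1 k_*^{-1}V(\tilde{\bs a})+B_2(k_*/n)^{4u/d}M(\tilde{\bs a})^2$, balance the two terms to obtain $k_*\asymp n^{4u/(4u+d)}=n^{2\beta/(2\beta+d)}$, and substitute back. Your computations are correct, and the nondegeneracy concern you raise ($M\not\equiv 0$ on the constraint set) is precisely what delivers the matching lower bound in the $\asymp$ statement. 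The one point you omit is that, for Proposition~\ref{prop:samworth_real_valued_weights} to apply, the optimized weight vector must be shown to belong to the class $\mathcal{W}_{n,s}$ described in Supplement~\ref{supplement:samworth}; Samworth verifies this membership separately, and you should at least flag it.
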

Although only the case $\alpha=1$ is considered in \citet{samworth2012optimal}, the convergence rate in Corollary~\ref{coro:samworth_real_valued_weights} coincides with the rate for LP-classifier, given in Proposition~\ref{prop:audibert2007_lp}.

Whereas theories can be constructed without solving the equations, 
solving the equations to determine the optimal real-valued weights explicitly is rather burdensome, where the explicit solution is shown only for $u=1,2$ (namely, $\beta=2,4$) in \citet{samworth2012optimal}; the solution for $u=2$ is
$a_1
    :=
    \frac{1}{(k_*)^{2/d}}\left\{
        \frac{(d+4)^2}{4}
        -
        \frac{2(d+4)}{d+2}a_0
    \right\},
    a_2
    =
    \frac{1-a_0-(k_*)^{2/d}a_1}{(k_{*})^{4/d}}
    $ 
(see, Supp.~\ref{supplement:samworth} for more details, and also see Figure~\ref{fig:samworth} in Supp.~\ref{supp:optimal_weights_msknn} for the optimal weights computed in an experiment~($u=2$)).

We also note that, conducting cross-validation to choose $(w_1,w_2,\ldots,w_k)$ directly from $\mathcal{W}^k$ (for some sets $\mathcal{W} \subset \mathbb{R}$) is impractical, as it requires large computational cost $O(|\mathcal{W}|^k)$. 
Therefore, other simpler approaches to determine optimal real-valued weights are appreciated in practice.

\section{Proposed multiscale $k$-NN}
\label{sec:knn}

In this section, we propose multiscale $k$-NN~(MS-$k$-NN), that implicitly finds favorable real-valued weights for weighted $k$-NN. 
Note that the obtained weights are different from \citet{samworth2012optimal}, as illustrated in Figure~\ref{fig:optimal_weights} in Supplement~\ref{supp:optimal_weights_msknn}. 
In what follows, we first formally define MS-$k$-NN in Section~\ref{subsec:msknn}. 
Subsequently, the weights obtained via MS-$k$-NN are shown in Section~\ref{subsec:optimal_real-valued_weights}, 
the convergence rate is discussed in Section~\ref{subsec:convergence_rate_msknn}.

\subsection{Multiscale $k$-NN}
\label{subsec:msknn}

\paragraph{Underlying idea:} 
Since $\eta^{(\infty)}(B(X_*;r))$ asymptotically approximates the $k$-NN estimator $\hat{\eta}^{(k\text{NN})}_{n,k}(X_*)$ for roughly $r=r(k):=\|X_{(k)}-X_*\|$~(see, e.g., \citet{chaudhuri2014rates} Lemma~9), 
asymptotic expansion in Theorem~\ref{theo:local_homogeneity_2} indicates that
$\hat{\eta}^{(k\text{NN})}_{n,k}(X_*)
\approx 
\eta(X_*)+\sum_{c=1}^{\lfloor \beta/2 \rfloor} b_c^* r^{2c}$, 
for some $\{b_c^*\} \subset \mathbb{R}$. 
Estimating a function $f_{X_*}(r):=b_0 + \sum_{c=1}^{\lfloor \beta/2 \rfloor}b_c r^{2c}$ to predict the $k$-NN estimators for $k_1,k_2,\ldots,k_V$ and extrapolating to $k=0$ via $r=r(k)$ with $r(0):=0$ yields $\hat{f}_{X_*}(0)=\hat{b}_0 \approx \eta(X_*)$; the asymptotic bias $\sum b_c^* r^{2c}$ is then eradicated.

\paragraph{Definition of MS-$k$-NN:} 
Let $V,C\in\mathbb{N}$ and fix any query $X_* \in \mathcal{X}$. 
We first compute unweighted $k$-NN estimators for $1 \le k_1 < k_2 < \cdots < k_{V} \le n$, i.e., 
$\hat{\eta}^{(k\text{NN})}_{n,k_1}(X_*),
    \hat{\eta}^{(k\text{NN})}_{n,k_2}(X_*), 
    \ldots,
    \hat{\eta}^{(k\text{NN})}_{n,k_V}(X_*)$. Then, we compute $r_v:=\|X_{(k_v)}-X_*\|_2$, 
and consider a simple regression such that
\begin{align}
    \hat{\eta}^{(k\text{NN})}_{n,k_v}(X_*)
    \approx 
    b_0
    +
    b_1 r_v^2
    +
    b_2 r_v^4
    +
    \cdots 
    +
    b_{C} r_v^{2C}
    \label{eq:msknn_regression_function}
\end{align} 
for all $v \in [V]$, where $\bs b=(b_0,b_1,\ldots,b_C)$ is a regression coefficient vector to be estimated. 
Note that the regression function is a polynomial of $r_v^2$ which contains only terms of even degrees $r_v^{2c}$, since all the bias terms are of even degrees as shown in Theorem~\ref{theo:local_homogeneity_2}. 
However, it is certainly possible that we employ a polynomial with terms of odd degrees in practical cases.

More formally, we consider a minimization problem
\begin{align}
    \scalebox{0.95}{$\displaystyle
    \hat{\bs b}
    :=
    \argmin_{\bs b \in \mathbb{R}^{C+1}}
    \sum_{v=1}^{V}
    \left(
        \hat{\eta}^{(k\text{NN})}_{n,k_v}(X_*)
        -
        b_0
        -
        \sum_{c=1}^{C}
        b_c r_v^{2c}
    \right)^2$}. 
    \label{eq:msknn_estimator}
\end{align}
Then, we propose a \emph{multiscale $k$-NN~(MS-$k$-NN)} estimator
\begin{align}
    \hat{\eta}^{\text{(MS-$k$NN)}}_{n,\bs k}(X_*)
    :=
    \hat{b}_{0}
    \quad 
    \scalebox{0.9}{$\displaystyle\left(=
    \bs z(X_*)^{\top}\hat{\bs \eta}^{(k\text{NN})}_{n,\bs k}(X_*) \right)$},
    \label{eq:msknn_definition}
\end{align}
where $\bs k=(k_1,k_2,\ldots,k_V) \in \mathbb{N}^V$, 
$\hat{\bs \eta}^{(k\text{NN})}_{n,\bs k}(\bs X_*)
:=
(\hat{\eta}^{(k\text{NN})}_{n,k_1}(X_*),\ldots,\hat{\eta}^{(k\text{NN})}_{n,k_V}(X_*))^\top \in \mathbb{R}^V$ and
 $\bs z(X_*) \in \mathbb{R}^V$ will be defined in (\ref{eq:explicit_z}). 
 Since (\ref{eq:msknn_regression_function}) extrapolates $k$-NN estimators to $r=0$, we also call the situation by ``extrapolating to $k=0$" analogously. 
The corresponding \emph{MS-$k$-NN classifier} is defined as 
\begin{align}
\hat{g}^{\text{(MS-$k$NN)}}_{n,\bs k}(X)
:=
g^{(\text{plug-in})}(X;\hat{\eta}^{\text{(MS-$k$NN)}}_{n,\bs k}).
\label{eq:msknn_classifier}
\end{align}

Note that the number of terms in the regression function (\ref{eq:msknn_regression_function}) is $1+C$, and 
$C$ will be specified as $C=\lfloor \beta/2 \rfloor$ under the $\beta$-H{\"o}lder condition in Theorem~\ref{theo:msknn_cr}. 
Although the parameter $\beta$ cannot be observed in practice, we may employ large $C$ so that $C \ge \lfloor \beta/2 \rfloor$ is expected~(e.g., $C=10$). Even in this case, overall number of terms in the MS-$k$-NN is $1+C$, which is much less than the number of coefficients used in the LP classifier $(=1+d+d^2+\cdots+d^{C})$.

\subsection{Corresponding real-valued weights}
\label{subsec:optimal_real-valued_weights}

In this section, real-valued weights implicitly obtained via MS-$k$-NN are considered. 
The vector $\bs z(X_*)=(z_1(X_*),z_2(X_*),\ldots,z_V(X_*))^\top \in \mathbb{R}^{V}$ in the definition of MS-$k$-NN~(\ref{eq:msknn_definition}) is obtained by simply solving the minimization problem~(\ref{eq:msknn_estimator}), as
\begin{align}
\bs z(X_*)
:=
\frac{(\bs I-\mathcal{P}_{\bs R(X_*)})\bs 1}{V-\bs 1^{\top}\mathcal{P}_{\bs R(X_*)}\bs 1},
    \label{eq:explicit_z}
\end{align}
where $\bs 1=(1,1,\ldots,1)^\top \in \mathbb{R}^V,\mathcal{P}_{\bs R}=\bs R(\bs R^{\top}\bs R)^{-1}\bs R$ and $(i,j)$-th entry of $\bs R=\bs R(X_*)$ is $r_i^{2j}$ for $(i,j) \in [V] \times [C]$; note that the radius $r_i$ depends on the query $X_*$. 
Therefore, the corresponding optimal real-valued weight $\bs w_*(X_*)=(w_1^*(X_*),w_2^*(X_*),\ldots,w_{k_V}^*(X_*))$ is obtained as
\begin{align}
    w_{i}^*(X_*)
    :=
    \sum_{v:i \le k_v}\frac{z_v(X_*)}{k_v} \in \mathbb{R},
    \quad (\forall i \in [k_V]), 
    \label{eq:optimal_weights_msknn}
\end{align}
then $\hat{\eta}^{(k\text{NN})}_{n,k_V,\bs w_*}(X_*)=\hat{\eta}^{(\text{MS-$k$NN})}_{n,\bs k}(X_*)$. 
Here, we note that the weight (\ref{eq:optimal_weights_msknn}) is adaptive to the query $X_*$, as each entry of the matrix $\bs R$ used in the definition of $\bs z$~(\ref{eq:explicit_z}) depends on both sample $\mathcal{D}_n$ and query $X_*$.
See Supplement~\ref{supp:optimal_weights_msknn} for the skipped derivation of the above (\ref{eq:explicit_z}) and (\ref{eq:optimal_weights_msknn}). 
Total sum of the weights~(\ref{eq:optimal_weights_msknn}) is then easily proved as
$\sum_{i=1}^{k_V}w_i^*(X_*)
=
\sum_{v=1}^{V}z_v(X_*)
=
\bs 1^{\top}\bs z(X_*)
\overset{\text{(\ref{eq:explicit_z})}}{=}
1$.

To give an  example, we plotted the weights in the following Figure~\ref{fig:optimal_weights}. 
The real-valued weights implicitly computed in MS-$k$-NN are plotted to compare with the optimal real-valued weights proposed by \citet{samworth2012optimal}.

\begin{figure}[htbp]
\centering
\subfigure[\citet{samworth2012optimal}]{
\includegraphics[scale=0.4]{./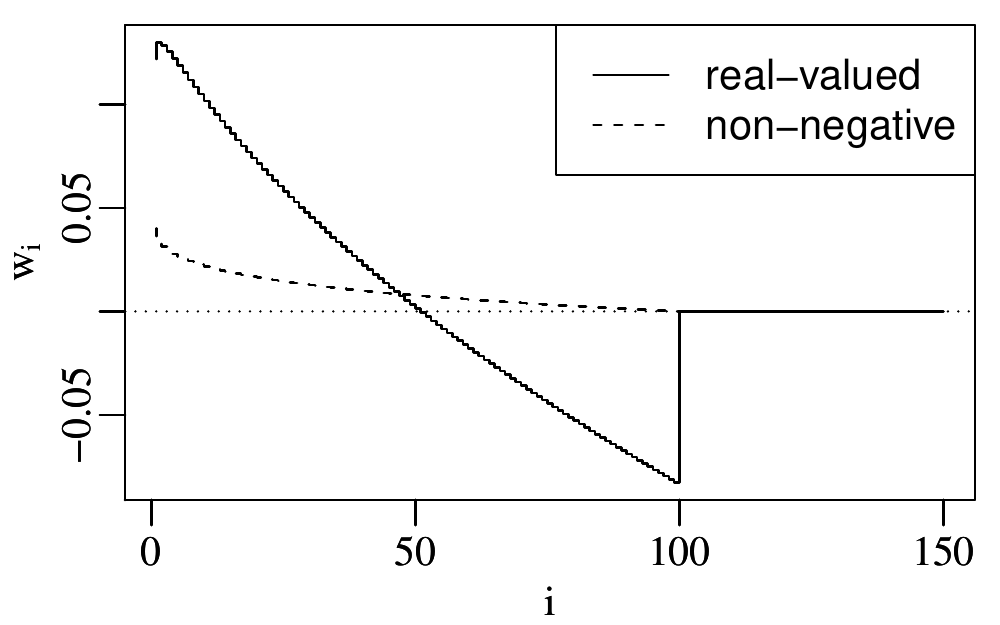}
    \label{fig:samworth}
}
\subfigure[Proposal~($V=5$)]{
\includegraphics[scale=0.4]{./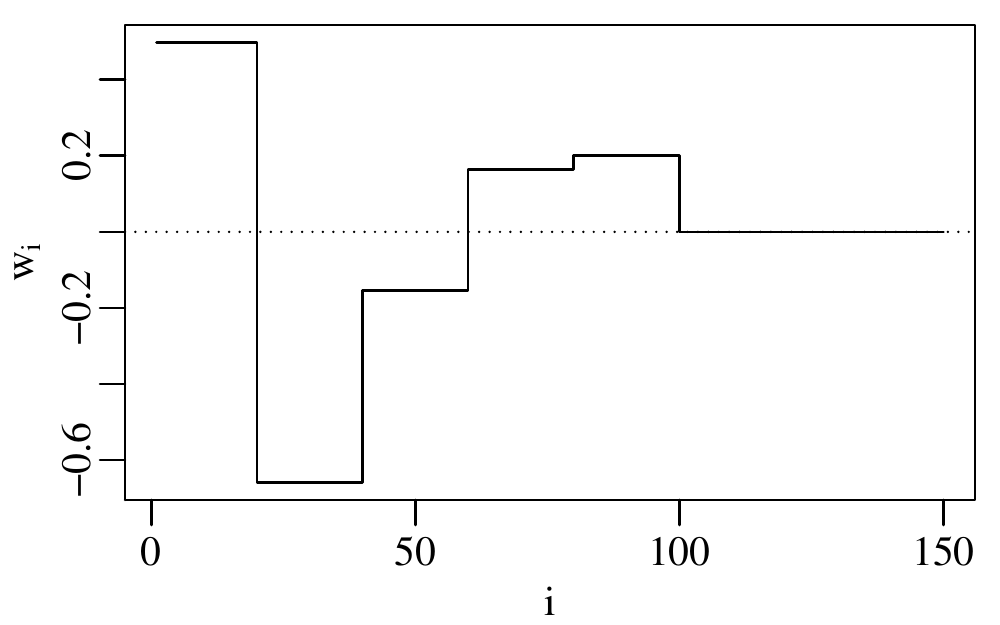}
	\label{fig:proposed_v5}
}
\subfigure[Proposal~($V=10$)]{
\includegraphics[scale=0.4]{./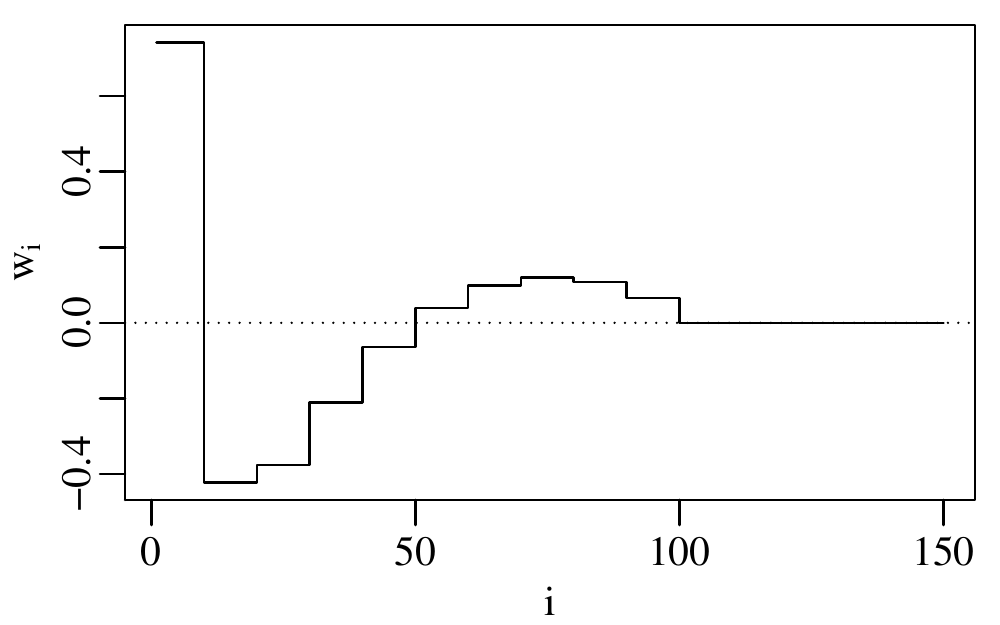}
	\label{fig:proposed_v10}
}
\vspace{-0.5em}
\caption{
Amongst all the experiments, $n=1000,d=10,u=C=2,k_*=100$. 
In \subref{fig:samworth}, optimal non-negative~(\ref{eq:samworth_optimal_nonneg}) and real-valued~(\ref{eq:samworth_optimal_weights}) weights for weighted $k$-NN~(\ref{eq:weighted_knn}) in \citet{samworth2012optimal} are plotted. 
In \subref{fig:proposed_v5} and \subref{fig:proposed_v10}, real-valued weights~(\ref{eq:optimal_weights_msknn}) implicitly computed in the \textbf{proposed MS-$k$-NN}, are plotted for $k_v=k_*v/V,r_v:=(k_v/n)^{1/d} \: (v \in [V])$. 
$V$ is the number of $k$ used for regression. 
}
\label{fig:optimal_weights}
\end{figure}

Figure~\ref{fig:proposed_v5} and \ref{fig:proposed_v10} illustrate the optimal weights (\ref{eq:optimal_weights_msknn}) for $V=5,10$. 
The weights are not monotonically decreasing for $i \le k_V(=100)$, and the weights are smoothly connected to $w^*_{k_V+1}=0$ at $i \approx k_V$, unlike \citet{samworth2012optimal} shown in Figure~\ref{fig:samworth}.

Although the weights~(\ref{eq:optimal_weights_msknn}) can be easily computed, they are not computed in practice. 
Only a procedure needed for MS-$k$-NN is to
conduct the regression~(\ref{eq:msknn_estimator}) and specify $\hat{\eta}^{(\text{MS-$k$NN})}_{n,k}$ by the intercept $\hat{b}_0$ stored in the regression coefficient $\hat{\bs b}$. 
Then, MS-$k$-NN automatically coincides with the weighted $k$-NN using the above optimal weight~(\ref{eq:optimal_weights_msknn}).

\subsection{Convergence rate for MS-$k$-NN classifier}
\label{subsec:convergence_rate_msknn}

Here, we consider the convergence rate for MS-$k$-NN classifier. Firstly, we specify a vector $\bs \ell=(\ell_1,\ell_2,\ldots,\ell_V) \in \mathbb{R}^V$ so that $\ell_1=1<\ell_2<\cdots<\ell_V<\infty$. We assume that 
\begin{itemize}
\item[(C-1)] $k_{1,n} \asymp n^{2\beta/(2\beta+d)}$, 
\item[(C-2)] $k_{v,n}:=\min\{k \in [n] \mid \|X_{(k)}-X_*\|_2 \ge \ell_v r_{1,n}\}$ for $v=2,3,\ldots,V$, where $r_{1,n}:=\|X_{(k_{1,n})}-X_*\|_2$, 
\item[(C-3)] $\exists L_{\bs z}>0$ such that $\|\bs z_{\bs \ell}\|_{\infty}<L_{\bs z}$, where $\bs z_{\bs \ell}=\frac{(\bs I-\mathcal{P}_{\bs R})\bs 1}{\bs 1^{\top}(\bs I-\mathcal{P}_{\bs R})\bs 1}$ and $\bs R=(\ell_{i}^{2j})_{ij} \in \mathbb{R}^{[V] \times [C]}$, 
for all $X_* \in \mathcal{S}(\mu)$. 
\end{itemize}

Intuition behind the above conditions (C-1)--(C-3) is as follows. 
(C-1): local polynomial classifier with bandwidth $h=h_n \asymp n^{-1/(2\beta+d)}$ is known to attain the optimal rate~\citep[][Theorem~3.3]{audibert2007learning}. Therefore, the information in the ball $B(X;h)$ with radius $h>0$ is roughly required for the optimal rate. When assuming that the feature vectors $X_1,X_2,\ldots,X_n$ distribute uniformly, 
$k$ and the bandwidth $h$ have the relation $k \asymp nh^d \asymp n^{2\beta/(2\beta+d)}$ as the volume of $B(X;h)$ is of order $h^d$. 
(C-2): $k_{1,n},k_{2,n},\ldots,k_{v,n}$ are selected so that $r_{2,n},\ldots,r_{v,n}$ are in the same order as $r:=r_{1,n}$ (and $r_{v,n}/r_{1,n} \to \ell_v$).
(C-3): the weights $w_1,w_2,\ldots,w_k$ estimated via regression take finite values asymptotically. 

Then, regarding the MS-$k$-NN estimator~(\ref{eq:msknn_definition}) and its corresponding MS-$k$-NN classifier~(\ref{eq:msknn_classifier}), the following Theorem~\ref{theo:msknn_cr} holds. 

\begin{theo}[Convergence rate for MS-$k$-NN]
\label{theo:msknn_cr}
Assuming that (i) $\mu$ and $\eta \mu$ are $\beta$-H{\"o}lder, (ii) $\mu$ satisfies the strong density asumption, 
(iii) $C:=\lfloor \beta/2 \rfloor \le V-1$, and 
(iv) the conditions (C-1)--(C-3) are satisfied. 
Then,  
\begin{align*}
    \mathcal{E}(\hat{g}^{(\text{MS-$k$NN})}_{n,\bs k})
    =
    O(n^{-(1+\alpha)\beta/(2\beta+d)}).
\end{align*}
\end{theo}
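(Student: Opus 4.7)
The plan is to reduce the claim to a pointwise estimator rate $|\hat\eta^{(\text{MS-}k\text{NN})}_{n,\bs k}(X_*) - \eta(X_*)| = O_P(n^{-\beta/(2\beta+d)})$ and then apply the standard Audibert--Tsybakov margin argument: a plug-in classifier disagrees with Bayes-optimal only when $|\eta(X) - 1/2| \le |\hat\eta - \eta|$, so under the $\alpha$-margin condition the excess risk contributed by this event is $O(\epsilon_n \cdot \epsilon_n^{\alpha}) = O(\epsilon_n^{1+\alpha})$; with $\epsilon_n = n^{-\beta/(2\beta+d)}$ this is exactly the claimed rate.

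The central step is the pointwise bound. Two algebraic identities, both immediate from (\ref{eq:explicit_z}), drive it: $\bs 1^\top \bs z(X_*) = 1$ and $\bs R(X_*)^\top \bs z(X_*) = \bs 0$ (the latter since $\bs R^\top (\bs I - \mathcal{P}_{\bs R}) = 0$ by idempotency and symmetry of $\mathcal{P}_{\bs R}$). Writing each $k$-NN estimator via \citet{chaudhuri2014rates}'s Lemma~9 as $\hat\eta^{(k\text{NN})}_{n,k_v}(X_*) = \eta^{(\infty)}(B(X_*;r_{v,n})) + \xi_v$ and substituting the expansion of Theorem~\ref{theo:local_homogeneity_2} yields
\[
    \hat{\bs \eta}^{(k\text{NN})}_{n,\bs k}(X_*)
    = \eta(X_*)\,\bs 1 + \bs R(X_*)\,\bs b^*(X_*) + \bs\delta + \bs\xi,
\]
with $\bs b^*(X_*) = (b_1^*(X_*),\ldots,b_C^*(X_*))^\top$, $|\delta_v| \le L_\beta^* r_{v,n}^{\beta}$, and $\bs\xi$ a stochastic noise vector. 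Contracting by $\bs z(X_*)^\top$ annihilates the first two terms, leaving $\hat\eta^{(\text{MS-}k\text{NN})}_{n,\bs k}(X_*) - \eta(X_*) = \bs z^\top \bs\delta + \bs z^\top \bs\xi$. Condition (C-1) together with the strong density assumption yields $r_{1,n} \asymp (k_{1,n}/n)^{1/d} \asymp n^{-1/(2\beta+d)}$; (C-2) gives $r_{v,n} \asymp r_{1,n}$ uniformly in $v$; and (C-3) provides the asymptotic bound $\|\bs z(X_*)\|_\infty = O(1)$. The deterministic residual is then $|\bs z^\top \bs\delta| \lesssim V L_\beta^* r_{1,n}^{\beta} = O(n^{-\beta/(2\beta+d)})$, while the stochastic term satisfies $|\bs z^\top \bs\xi| = O_P(k_{1,n}^{-1/2}) = O_P(n^{-\beta/(2\beta+d)})$ by the Hoeffding-type concentration underlying \citet{chaudhuri2014rates}'s Lemma~9, applied separately to each of the $V$ scales and union-bounded.

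The main obstacle is that (C-3) constrains the deterministic limit $\bs z_{\bs \ell}$ rather than the random $\bs z(X_*)$. The fix uses the column-rescaling invariance of orthogonal projection: writing $\bs R(X_*)$ with columns scaled by $r_{1,n}^{2j}$ gives $\mathcal{P}_{\bs R(X_*)} = \mathcal{P}_{\bs R'(X_*)}$ with $\bs R'_{vj} = (r_{v,n}/r_{1,n})^{2j}$, and by (C-2) these ratios converge to $\ell_v$. Continuity of the map $\bs R \mapsto \bs z$ at any matrix of Vandermonde-type full column rank (which is ensured by $C \le V-1$ and distinct $\ell_v$, so that the denominator $\bs 1^\top(\bs I - \mathcal{P}_{\bs R_{\bs \ell}})\bs 1$ stays bounded away from $0$) gives $\bs z(X_*) \to \bs z_{\bs \ell}$ and hence $\|\bs z(X_*)\|_\infty \le 2L_{\bs z}$ eventually. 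A secondary technicality is the exceptional event where some $k_{v,n}$-th neighbour falls outside a well-behaved neighbourhood of $X_*$, or where $X_*$ lies too close to the boundary of $\mathcal{S}(\mu)$; these have negligible probability under the strong density assumption and compactness of $\mathcal{S}(\mu)$, handled verbatim as in \citet{chaudhuri2014rates}.
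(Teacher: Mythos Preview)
Your bias-cancellation argument is correct and is essentially the same as the paper's Lemma~\ref{lem:bias_of_msknn}: the exact algebraic identities $\bs 1^\top\bs z(X_*)=1$ and $\bs R(X_*)^\top\bs z(X_*)=\bs 0$ kill the polynomial expansion of Theorem~\ref{theo:local_homogeneity_2}, leaving only the $O(r^\beta)$ residual. Your variance bound via Hoeffding at each scale plus a union bound is also what the paper does (Lemma~\ref{lem:term2}). Where you differ is in the outer layer: the paper adapts the Chaudhuri--Dasgupta machinery directly---decomposing the misclassification indicator into four events (Lemma~\ref{lem:decomposition}), bounding each exponentially, and then running a dyadic peeling argument over level sets $\{\Delta(X_*)\in(\Delta_i,\Delta_{i+1}]\}$ to extract the margin exponent. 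You instead propose to package everything as ``pointwise rate $O_P(\epsilon_n)$, then invoke the Audibert--Tsybakov plug-in lemma''.

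That packaging hides a genuine gap. An $O_P(\epsilon_n)$ pointwise rate is \emph{not} sufficient for $\mathcal{E}=O(\epsilon_n^{1+\alpha})$: the margin argument needs the full exponential tail $\mathbb{P}_{\mathcal{D}_n}(|\hat\eta(x)-\eta(x)|\ge t)\lesssim \exp(-c\,t^2/\epsilon_n^2)$ uniformly in $x$, so that after multiplying by $|\eta(x)-1/2|$ and integrating against the margin condition one recovers the correct power. Your heuristic ``the excess risk contributed by this event is $O(\epsilon_n\cdot\epsilon_n^\alpha)$'' is only valid under a deterministic uniform bound $|\hat\eta-\eta|\le\epsilon_n$, which you do not have. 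You possess the ingredients---Hoeffding gives the exponential tail for $\bs\xi$, and the bias residual is deterministically small on the high-probability event where $r_{1,n}$ behaves---but you must state and use them as tail bounds, not fold them into an $O_P$ statement. Similarly, ``$\bs z(X_*)\to\bs z_{\bs\ell}$ hence $\|\bs z(X_*)\|_\infty\le 2L_{\bs z}$ eventually'' is not enough: you need the bad event $\{\|\bs z(X_*)\|_\infty>2L_{\bs z}\}$ to have exponentially small probability so that it does not spoil the final integral. The paper handles this explicitly in Lemma~\ref{lem:concentration_z}, and that lemma (or an equivalent) is not optional in your route either.
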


\textit{Sketch of Proof:}
By following the underlying idea explained at first in Section~\ref{subsec:msknn},  the bias $O(r^{\min\{2,\beta\}})$ of the conventional $k$-NN is reduced to $O(r^{\beta})$. 
Therefore, intuitively speaking, replacing the bias term in the proof of \citet{chaudhuri2014rates} Theorem~4(b) leads to the proof of Theorem~\ref{theo:msknn_cr}. 
Although this proof stands on the simple underlying idea, technically speaking, some additional considerations are needed; see Supplement~\ref{supp:proof_msknn_cr} for detailed proof. 
\qed

The rate obtained in Theorem~\ref{theo:msknn_cr} coincides with the optimal rate provided in Corollary~\ref{coro:samworth_real_valued_weights}; MS-$k$-NN is an optimal classifier, at least for the case $\alpha=1,\beta=2u \: (u \in \mathbb{N})$.

\subsection{Using $\log k$ as the predictor}
\label{subsec:msknn_direct}

The standard MS-$k$-NN predicts unweighted $k$-NN estimators through the radius $r=r(k)$, that is computed via sample $\mathcal{D}_n$. 
As an alternative approach, we instead consider predicting the estimators directly from $k$.

For clarifying the relation between the radius $r=r(k)$ and $k$, we here consider the simplest setting that the feature vector $X$ distributes uniformly. 
Then, $r_v:=\|X_{(k_v)}-X_*\|_2$ used in (\ref{eq:msknn_regression_function}) is roughly proportional to $k_v^{1/d}$ since the volume of the ball of radius $r_v$ is proportional to $r_v^d$.

Then, for sufficiently large $d$, 
\begin{align}
    r_v^2
    &\propto
    k_v^{2/d} 
    =
    \exp\left(
        \frac{2}{d} \log k_v
    \right)
    =
    1 + \frac{2}{d}\log k_v + O(d^{-2}). 
    \label{eq:high_dim}
\end{align}
Thus, (\ref{eq:msknn_regression_function}) can be expressed as a polynomial with respect to  $\log k_v$ instead of $r_v^2$.
In numerical experiments, we then extrapolate unweighted $k$-NN to  $k=1$.

\section{Numerical experiments}
\label{sec:experiments}

\textbf{Datasets:} 
We employ 
datasets from UCI Machine Learning Repository~\citep{dua2017uci}. 
Each of datasets consists of $d$-dimensional $n$ feature vectors $X_i \in \mathcal{X}$, and their labels $Y_i \in \{1,2,\ldots,m\}$ representing $1$ of $m$ categories.

\textbf{Preprocessing:} 
Feature vectors are first normalized, and then randomly divided into $70\%$ for prediction ($n_{\text{pred}}=\lfloor 0.7n \rfloor$) and the remaining for test query.

\textbf{Evaluation metric:} 
Category of the query is predicted so that the corresponding estimator attains the maximum value. 
The classification accuracy is evaluated via $10$ times experiments. 
The MS-$k$-NN estimated via radius $r(k)$ and that via $\log k$ described in Section~\ref{subsec:msknn_direct} are performed with $C=1$; they are compared with unweighted $k$-NN and weighted $k$-NN with the optimal non-negative and real-valued weights~\citep{samworth2012optimal}. 
Regression in MS-$k$-NN is ridge regularized with the coefficient $\lambda=10^{-4}$.

\textbf{Parameter tuning:} 
For unweighted and weighted $k$-NN, we first fix $k:=V \cdot \lfloor n_{\text{pred}}^{4/(4+d)}\rfloor \asymp n_{\text{pred}}^{4/(4+d)}$. 
Using the same $k$, we simply choose $k_1:=k/V,k_2=2k/V,\ldots,k_V=k$ with $V=5$ for MS-$k$-NN.

\textbf{Results:} 
Sample mean and the sample standard deviation on $10$ experiments are shown in Table~\ref{tab:experiments_result}. 
Overall, weighted $k$-NN and MS-$k$-NN show better score than unweighted $k$-NN~($w_i=1/k$). 
MS-$k$-NN via radius $r(k)$ shows the best or second best score for (all of) $13$ datasets; this number is maximum among all the methods considered in these experiments. 
As well as the MS-$k$-NN using $r(k)$, MS-$k$-NN using the alternative predictor $\log k$ also shows promising scores for some datasets. 
Regarding larger datasets such as MAGIC and Avila, weighted $k$-NN equipped with real-valued weights, which are computed by both ways of \citet{samworth2012optimal} and MS-$k$-NN, demonstrate slightly better performance than the weighted $k$-NN with non-negative weights; this observation coincides with the theoretical optimality.

\begin{table}[htbp]
\centering
\caption{Each dataset consists of $n$ feature vectors whose dimension is $d$; each object is labeled by $1$ of $m$ categories. 
Sample average and the standard deviation for the prediction accuracy are computed on $10$ times experiments. 
Best scores are \first{bolded}, and second best scores are \second{underlined}.}
\label{tab:experiments_result}
\begin{adjustbox}{scale=0.82}
\begin{tabular}{lcccccccc}
\toprule[0.2ex]
\multirow{2}{*}{Dataset} & \multirow{2}{*}{$n$} & \multirow{2}{*}{$d$} & \multirow{2}{*}{$m$} & \multicolumn{3}{c}{$k$-NN} & \multicolumn{2}{c}{\textbf{MS-$k$-NN}} \\
\cmidrule(lr){5-7}\cmidrule(lr){8-9}
& & & & $w_i=1/k$ & $w_i \ge 0$~(\ref{eq:samworth_optimal_nonneg}) & $w_i \in \mathbb{R}$~(\ref{eq:samworth_optimal_weights}) & via $r(k)$~(\ref{eq:msknn_definition}) & via $\log k$~(\ref{eq:high_dim}) \\
\hline 
Iris & 150 & 4 & 3 & $0.83\pm0.04$ & $0.92\pm0.05$ & $0.92\pm0.04$ & $\second{0.93}\pm0.04$ & $\first{0.96}\pm0.04$ \\ 
Glass identification & 213 & 9 & 6 & $0.58\pm0.06$ & $\second{0.64}\pm0.06$ & $\first{0.67}\pm0.05$ & $\second{0.64}\pm0.05$ & $\second{0.64}\pm0.05$ \\ 
Ecoli & 335 & 7 & 8 & $0.80\pm0.03$ & $\first{0.85}\pm0.03$ & $\second{0.84}\pm0.02$ & $\first{0.85}\pm0.02$ & $\second{0.84}\pm0.02$ \\ 
Diabetes & 768 & 8 & 2 & $\first{0.75}\pm0.03$ & $\second{0.74}\pm0.03$ & $0.70\pm0.04$ & $\first{0.75}\pm0.03$ & $0.71\pm0.03$ \\ 
Biodegradation & 1054 & 41 & 2 & $\second{0.84}\pm0.02$ & $\first{0.86}\pm0.03$ & $0.79\pm0.02$ & $\first{0.86}\pm0.02$ & $0.80\pm0.02$ \\ 
Banknote & 1371 & 4 & 2 & $0.95\pm0.01$ & $\second{0.98}\pm0.01$ & $0.97\pm0.01$ & $\second{0.98}\pm0.01$ & $\first{0.99}\pm0.00$ \\ 
Yeast & 1484 & 8 & 10 & $\second{0.57}\pm0.02$ & $\first{0.58}\pm0.02$ & $0.54\pm0.03$ & $\first{0.58}\pm0.02$ & $0.54\pm0.02$ \\ 
Wireless localization & 2000 & 7 & 4 & $\second{0.97}\pm0.00$ & $\first{0.98}\pm0.00$ & $\first{0.98}\pm0.01$ & $\first{0.98}\pm0.00$ & $\first{0.98}\pm0.01$ \\ 
Spambase & 4600 & 57 & 2 & $\second{0.90}\pm0.01$ & $\first{0.91}\pm0.00$ & $0.86\pm0.01$ & $\first{0.91}\pm0.00$ & $0.87\pm0.01$ \\ 
Robot navigation & 5455 & 24 & 4 & $0.81\pm0.01$ & $\first{0.86}\pm0.01$ & $0.81\pm0.01$ & $\second{0.84}\pm0.01$ & $\second{0.84}\pm0.01$ \\ 
Page blocks & 5473 & 10 & 5 & $\second{0.95}\pm0.01$ & $\second{0.95}\pm0.01$ & $\first{0.96}\pm0.01$ & $\first{0.96}\pm0.01$ & $\first{0.96}\pm0.01$ \\ 
MAGIC & 19020 & 10 & 2 & $0.82\pm0.00$ & $0.82\pm0.00$ & $\first{0.84}\pm0.01$ & $\second{0.83}\pm0.00$ & $\second{0.83}\pm0.00$ \\ 
Avila & 20867 & 10 & 12 & $0.63\pm0.01$ & $0.68\pm0.01$ & $\first{0.70}\pm0.01$ & $\second{0.69}\pm0.00$ & $\first{0.70}\pm0.01$ \\ 
\toprule[0.2ex]
\end{tabular}
\end{adjustbox}
\end{table}

\section{Conclusion and future works}
\label{sec:conclusion}

In this paper, we proposed multiscale $k$-NN~(\ref{eq:msknn_definition}), that extrapolates $k$-NN estimators from $k \ge 1$ to $k=0$ via regression. 
MS-$k$-NN corresponds to finding favorable real-valued weights~(\ref{eq:optimal_weights_msknn}) for weighted $k$-NN, and it attains the convergence rate $O(n^{-(1+\alpha)\beta/(2\beta+d)})$ shown in Theorem~\ref{theo:msknn_cr}. 
It coincides with the optimal rate shown in \citet{samworth2012optimal} in the case $\alpha=1,\beta=2u \: (u \in \mathbb{N})$. 
For future work, it would be worthwhile to relax assumptions in theorems, especially the $\beta$-H{\"o}lder condition on $\mu$ and the limitation on the distance to Euclidean.
As also noted at last in \citet{samworth2012optimal} Section 4, rather larger sample sizes would be needed for receiving benefits from asymptotic theories; 
adaptation to small samples and high-dimensional settings are also appreciated.

\section*{Broader Impact}
\label{sec:broader_impact}
For improving the convergence rate of the conventional $k$-NN, we propose to consider a simple extrapolation idea; it provides an intuitive understanding of not only the optimal $k$-NN but also more general nonparametric statistics. 
By virtue of the simplicity, MS-$k$-NN is also easy to implement; the similar idea may be applied to some other statistical and machine learning methods. 

\section*{Acknowledgement}
We would like to thank Ruixing Cao and Takuma Tanaka for helpful discussions. This work was partially supported by JSPS KAKENHI grant 16H02789, 20H04148 to HS.

\newgeometry{left=25mm,right=20mm,top=20mm,bottom=20mm}

\clearpage
\onecolumn

\appendix
\begin{flushleft}
\textbf{\Large Supplementary Material:} \par
{\Large Extrapolation Towards Imaginary $0$-Nearest Neighbour \\ and Its Improved Convergence Rate}
\end{flushleft}
\hrulefill

\section{Related works}
\label{supp:related_works}

\citet{gyorfi1981rate} is the first work that proves the convergence rate $O(n^{-\gamma/(2\gamma+d)})$ for unweighted $k$-NN classifier by assuming the $\gamma$-neighbour average smoothness, and the rate is improved by \citet{chaudhuri2014rates}, by additionally imposing the $\alpha$-margin condition.

For choosing adaptive $k=k(X_*)$ with non-negative weights $w_i=1/k$, i.e., $k$ depending on the query $X_*$, 
\citet{balsubramani2019adaptive} considers the confidence interval of the $k$-NN estimator from the decision boundary, and 
\citet{cannings2017local} considers the asymptotic expansion used in \citet{samworth2012optimal} and obtains the rate of $O(n^{-4/(4+d)})$, same rate as unweighted $k$-NN up to constant factor. 
\citet{anava2016nearest} considers adaptive non-negative weights and $k=k(X_*)$ but the approach is rather heuristic.

\section{Other classifiers and their convergence rates}
\label{supp:LP}

In this section, we describe Nadaraya-Watson~(NW) classifier, Local Polynomial~(LP) classifier and their convergence rates~\citep{audibert2007learning}. 
In what follows, $K:\mathcal{X} \to \mathbb{R}$ represents a kernel function, e.g., Gaussian kernel $K(X):=\exp(-\|X\|_2^2)$, and $h>0$ represents a bandwidth. 

\begin{defi}
\emph{Nadaraya-Watson~(NW) estimator} is defined as
\begin{align*}
\hat{\eta}_{n,h}^{(\text{NW})}(X_*)
:=
\frac{\sum_{i=1}^{n}  Y_i \, K\left(
    \frac{X_i-X_*}{h}
\right)}{
\sum_{i=1}^{n}K\left(
    \frac{X_i-X_*}{h}
\right)}
\end{align*}
if the denominator is nonzero, and it is zero otherwise. 
$\hat{g}_{n,h}^{(\text{NW})}(X):=g^{(\text{plug-in})}(X;\hat{\eta}_{n,h}^{(\text{NW})})$ is called \emph{NW-classifier}. 
\end{defi}

Here, we define a loss function 
\begin{align}
\mathcal{L}_{n,h}(f,X_*)
    :=
    \scalebox{0.85}{$\displaystyle
    \sum_{i=1}^{n}
 \{Y_i-f(X_i-X_*)\}^2\,
    K\left(
        \frac{X_i-X_*}{h}
    \right)
  $}
\end{align}
for $f:\mathcal{X} \to \mathbb{R}$; Using a constant function $f(x)\equiv\theta$,
NW estimator can be regarded as a minimizer $\theta \in \mathbb{R}$ of ($\mathcal{L}_{n,h}(f,X_*)$).
NW estimator is then generalized to the local polynomial (LP) estimator when
 $Y_i$ is predicted by a polynomial function.

\begin{defi}
\label{defi:lp}
Let $\mathcal{F}_q$ denotes the set of polynomial functions $f:\mathcal{X} \to \mathbb{R}$ of degree $q\in\mathbb{N}_0$. 
Considering the function
\begin{align}
    \hat{f}_{n,h,q}^{X_*}
    :=
    \argmin_{f \in \mathcal{F}_{q}}
    \mathcal{L}_{n,h}(f,X_*), 
    \label{eq:lp_problem}
\end{align}
\emph{local polynomial~(LP) estimator} of degree $q$ is defined as $\hat{\eta}_{n,h,q}^{(\text{LP})}(X_*)
    :=
    \hat{f}_{n,h,q}^{X_*}(\bs 0)$ if $\hat{f}_{n,h,q}^{X_*}$ is the unique minimizer of $\mathcal{L}_{n,h}(f,X_*)$ and it is zero otherwise. 
The corresponding $\hat{g}^{(\text{LP})}_{n,h,q}(X):=
g^{(\text{plug-in})}(X;\hat{\eta}_{n,h,q}^{(\text{LP})})$ is called \emph{LP classifier}. 
\end{defi}

Note that LP classifier is computed via polynomial function of degree $q$; they contain $1+d+d^2+\cdots+d^q$ terms therein, and it results in high computational cost if $d,q$ are large.

\begin{prop}[\citet{audibert2007learning} Th.~3.3]
\label{prop:audibert2007_lp}
Let $\mathcal{X}$ be a compact set, and assuming that 
(i) $\eta$ satisfies $\alpha$-margin condition and is $\beta$-H{\"o}lder, and 
(ii) $\mu$ satisfies strong densitiy assumption.
Then, the convergence rate of the LP classifier with the bandwidth $h_*=h_n \asymp n^{-1/(2\beta+d)}$ is
\begin{align*}
    \mathcal{E}(\hat{g}^{(\text{LP})}_{n,h_*,\lfloor \beta \rfloor})
    =
    O(n^{-(1+\alpha)\beta/(2\beta+d)}).
\end{align*}
\end{prop}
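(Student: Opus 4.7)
The plan is to reduce the classification problem to a pointwise estimation problem for $\hat{\eta}^{(\text{LP})}_{n,h_*,q}$ with $q:=\lfloor\beta\rfloor$, and then combine a bias/variance analysis of the LP estimator with the margin condition via the standard plug-in reduction of \citet{audibert2007learning}. Concretely, I would use the well-known inequality that for any plug-in classifier $\hat{g}=\mathbbm{1}(\hat\eta\ge 1/2)$,
\[
\mathcal{E}(\hat g)\;=\;2\,\mathbb{E}_{X_*}\bigl[|\eta(X_*)-1/2|\,\mathbbm{1}(\hat g(X_*)\neq g_*(X_*))\bigr],
\]
together with the fact that $\hat g(X_*)\neq g_*(X_*)$ forces $|\hat\eta(X_*)-\eta(X_*)|\ge |\eta(X_*)-1/2|$.

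First I would carry out a pointwise analysis of the LP estimator at a fixed interior query $X_*\in\mathcal{S}(\mu)$. Writing (\ref{eq:lp_problem}) in matrix form, $\hat\eta^{(\text{LP})}_{n,h,q}(X_*)=\bs e_1^{\top}(\bs B_{n,h}(X_*))^{-1}\bs a_{n,h}(X_*)$, where $\bs B_{n,h}$ is the kernel-weighted Gram matrix of the polynomial basis of degree $q$ centred at $X_*$ and rescaled by $h$, and $\bs a_{n,h}$ is the analogous vector with $Y_i$'s. Under the strong density assumption and with $nh^d\to\infty$, one shows that $\bs B_{n,h}(X_*)$ concentrates exponentially around a deterministic positive-definite matrix $\bs B(X_*)$ with smallest eigenvalue bounded below by some $\lambda_0>0$. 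I would invoke Bernstein/Hoeffding to obtain, for $\delta$ larger than the bias, an exponential bound of the form
\[
\mathbb{P}\bigl(|\hat\eta^{(\text{LP})}_{n,h,q}(X_*)-\eta(X_*)|>\delta\bigr)\;\le\;C_1\exp(-C_2\,n h^d\,\delta^2)+\text{(bias-control term)}.
\]
The $\beta$-Hölder condition on $\eta$, together with the degree $q=\lfloor\beta\rfloor$ of the local polynomial fit, lets me cancel all Taylor terms of order $\le q$ inside the LP minimisation, leaving a bias bound $|\mathbb{E}\hat\eta^{(\text{LP})}(X_*)-\eta(X_*)|\le L_\beta'\,h^\beta$.

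Second, I would apply the plug-in reduction. Splitting on the event that $|\eta(X_*)-1/2|\le 2^{j+1}\delta_n$ for $\delta_n\asymp h^\beta + (nh^d)^{-1/2}$, using the $\alpha$-margin condition $\mathbb{P}(|\eta-1/2|\le t)\le L_\alpha t^\alpha$ together with the pointwise deviation inequality above, yields, after summing over dyadic shells $j$,
\[
\mathcal{E}(\hat g^{(\text{LP})}_{n,h,q})\;\le\;C\,\delta_n^{1+\alpha}\;=\;C\bigl(h^\beta+(nh^d)^{-1/2}\bigr)^{1+\alpha}.
\]
Choosing $h=h_*\asymp n^{-1/(2\beta+d)}$ balances the two terms at $n^{-\beta/(2\beta+d)}$, giving the claimed rate $O(n^{-(1+\alpha)\beta/(2\beta+d)})$.

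I expect the main obstacle to be the uniform control of the inverse Gram matrix $\bs B_{n,h}(X_*)^{-1}$: one has to rule out the event that too few sample points fall in $B(X_*;h)$ so that $\hat{f}$ is not the unique minimiser (in which case the LP estimator is defined to be $0$). Here the strong density assumption is crucial because it yields $\mathbb{E}[\#\{i:X_i\in B(X_*;h)\}]\gtrsim nh^d\to\infty$, and a standard Bernstein bound plus a union bound over a fine enough grid of $X_*$ gives the desired uniform invertibility with probability $1-o(n^{-c})$ for any fixed $c$; the residual event contributes at most $o(n^{-(1+\alpha)\beta/(2\beta+d)})$ to the excess risk because $|\hat\eta-\eta|\le 1$ always. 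Boundary effects near $\partial\mathcal{X}$ have to be handled separately but are dealt with by the same localisation argument since the strong density assumption holds on all of $\mathcal{X}$.
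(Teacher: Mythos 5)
The paper does not prove this proposition; it is stated as a direct citation of \citet{audibert2007learning}, Theorem~3.3, and no argument is reproduced in the main text or the supplement. Your proposal is, however, an accurate reconstruction of the route taken in that reference: the excess-risk identity $\mathcal{E}(\hat g)=2\,\mathbb{E}[|\eta-1/2|\,\mathbbm{1}(\hat g\neq g_*)]$ together with the implication that a misclassification forces $|\hat\eta-\eta|\ge|\eta-1/2|$ is exactly their plug-in reduction (their Theorem~3.1); the exponential pointwise deviation bound for the LP estimator via Gram-matrix concentration, bias $O(h^\beta)$ from cancelling the degree-$\lfloor\beta\rfloor$ Taylor terms, and variance of order $(nh^d)^{-1/2}$ is their Theorem~3.2; and the dyadic peeling over margin shells combined with the $\alpha$-margin condition to get a $\delta_n^{1+\alpha}$ bound, with $\delta_n\asymp h^\beta+(nh^d)^{-1/2}$ balanced at $h\asymp n^{-1/(2\beta+d)}$, is how they assemble Theorem~3.3. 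The parts you flag as delicate are in fact the technical heart of their proof: the invertibility of the empirical Gram matrix (handled by restricting to the event that enough points fall in $B(X_*;h)$, which the strong density assumption guarantees with exponentially high probability) and the behaviour near $\partial\mathcal{X}$ (handled in \citet{audibert2007learning} via a regularity assumption on the support that ensures $\lambda_{\min}(\bs B(X_*))\ge\lambda_0>0$ uniformly, including near the boundary); your sketch treats both correctly in spirit, though a complete proof would need the explicit regular-support argument rather than a bare appeal to the strong density assumption alone. In short: the proposal is sound and takes the same path as the cited source, so there is no substantive discrepancy with the paper to report.
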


The above Proposition~\ref{prop:audibert2007_lp} indicates that, the convergence rate for the LP classifier is faster than $O(n^{-1/2})$ for $\alpha\beta>d/2$, and the rate is even faster than $O(n^{-1})$ for $(\alpha-1)\beta>d$, though such inequalities are rarely satisfied since the dimension $d$ is large in many practical situations.

Rigorously speaking, \citet{audibert2007learning} considers the uniform bound of the excess risk over all the possible $(\eta,\mu)$, and \citet{audibert2007learning} Theorem~3.5 proves the optimality of the rate, i.e.,  $\sup_{(\eta,\mu)}\mathcal{E}(g) \ge \exists C \cdot n^{-(1+\alpha)\beta/(2\beta+d)}$ for any classifier $g$ when $\alpha \beta < d$. 
LP classifier is thus proved to be an optimal classifier in this sense.
However, the optimality is for uniform evaluation $\sup_{(\eta,\mu)}\mathcal{E}(\cdot)$, but not the non-uniform evaluation $\mathcal{E}(\cdot)$, that is considered in this paper; 
it remains unclear whether the (non-uniform) evaluation is still lower-bounded by $n^{-(1+\alpha)\beta/(2\beta+d)}$ if $\sup$ is removed. 
In particular, the uniform bound of NW classifier (i.e., LP classifier with $\lfloor \beta \rfloor=0$ ) is $O(n^{-2/(2+d)})$ for $\alpha=\beta=1$, but it is slower than the convergence rate $O(n^{-4/(4+d)})$ of NW classifier.

We last note that the LP classifier leverages the polynomial of degree $q$, that is defined in Definition~\ref{defi:lp}; it contains $1+d+d^2+\cdots+d^q$ terms, resulting in high computational cost as the dimension $d$ of feature vectors is usually not that small.

\section{A Note on Proposition~\ref{prop:chaudhuri_theorem4}}
\label{supp:note_on_chaudhuri}

Regarding the symbols, $(\alpha,\beta)$ in \citet{chaudhuri2014rates} correspond to $(\tilde{\gamma},\alpha)$ in this paper, where $\tilde{\gamma}:=\gamma/d$ is formally defined in the following. 
\citet{chaudhuri2014rates} in fact employs ``$(\alpha, L)$-smooth'' condition
\begin{align}
    |\eta(X_*)-\eta^{(\infty)}(B(X_*;r))|
    \le
    L \left(\int_{B(X_*;r)}\mu(X) \diff X \right)^{\tilde{\gamma}}, 
    \label{eq:gamma_homogeneity_1}
\end{align}
which is different from 
our definition of the $\gamma$-neighbour average smoothness, i.e., 
\begin{align}
    |\eta(X_*)-\eta^{(\infty)}(B(X_*;r))|
    \le
    L_{\gamma}r^{\gamma}.
    \label{eq:gamma_homogeneity_2}
\end{align}
However, their definition (\ref{eq:gamma_homogeneity_1}) can be obtained from our definition~(\ref{eq:gamma_homogeneity_2}), by imposing an additional assumption $\mu(X) \ge \mu_{\min}$ for all $X \in \mathcal{X}$. 
The proof is straightforward: the integrant in (\ref{eq:gamma_homogeneity_1}) is lower-bounded by
\begin{align*}
\int_{B(X_*;r)}\mu(X) \diff X
&\ge 
\mu_{\min} \frac{\pi^{d/2}}{\Gamma(1+d/2)} r^d
=: D r^d, 
\end{align*}
then
\begin{align*}
    |\eta(X_*)-\eta^{(\infty)}(B(X_*;r))|
\overset{(\ref{eq:gamma_homogeneity_2})}{\le}
    L_{\gamma} r^{\gamma}
\le 
    L_{\gamma} 
    \left(
    \frac{1}{D}
    \int_{B(X_*;r)}\mu(X) \diff X
    \right)^{\gamma/d}
=
    L\left(
    \int_{B(X_*;r)}\mu(X) \diff X
    \right)^{\tilde{\gamma}}
\end{align*}
by specifying $L:=L_{\gamma}/D^{\gamma/d},\tilde{\gamma}=\gamma/d$.
Therefore, \citet{chaudhuri2014rates} Theorem~4(b) proves Proposition~\ref{prop:chaudhuri_theorem4}, by considering the above correspondence of the symbols and the assumption.

\section{\citet{samworth2012optimal} Theorem~6}
\label{supplement:samworth}

For each $s \in (0,1/2)$, 
$\mathcal{W}_{n,s}$ denotes the set of all sequences of real-valued weight vectors $\bs w_n:=(w_1,w_2,\ldots,w_n) \in \mathbb{R}^n$ satisfying
\begin{align*}
    \sum_{i=1}^{n} w_i = 1, \quad 
    \frac{
        n^{2u/d}\sum_{i=1}^{n} \delta_i^{(\ell)} w_i
    }{
        n^{2\ell/d}\sum_{i=1}^{n} \delta_i^{(u)} w_i
    } &\le \frac{1}{\log n} \quad (\forall \ell \in [u-1]), \\
    \sum_{i=1}^{n}w_i^2 &\le n^{-s}, \\
    n^{-4u/d} (\sum_{i=1}^{n}\delta_i^{(u)}w_i)^2 &\le n^{-s}, \\
    \exists k_2 \le \lfloor n^{1-s} \rfloor \text{\: s.t. \:} 
    \frac{n^{2u/d}\sum_{i=k_2+1}^{n}|w_i|}{\sum_{i=1}^{n}\delta_i^{(u)}w_i} &\le \frac{1}{\log n} 
    \text{\: and \:}
    \sum_{i=1}^{k_2}\delta_i^{(u)}w_i \ge \beta k_2^{2u/d}, \\
    \frac{\sum_{i=k_2+1}^{n} w_i^2}{\sum_{i=1}^{n}w_i^2} &\le \frac{1}{\log n}, \\
    \frac{\sum_{i=1}^{n}|w_i|^3}{(\sum_{i=1}^{n}w_i^2)^{3/2}} &\le \frac{1}{\log n},
\end{align*}
where $\delta_i^{(\ell)}:=i^{1+2\ell/d}-(i-1)^{1+2\ell/d}$ for all $\ell \in [u-1]$.

For the rigorous proof, \citet{samworth2012optimal} considers the following assumptions. 
\begin{enumerate}[{(i)}]
    \item 
    $\mathcal{X} \subset \mathbb{R}^d$ is a compact $d$-dimensional manifold with boundary $\partial \mathcal{X}$, 
    \item 
    $\mathcal{S}:=\{x \in \mathcal{X} \mid \eta(x)=1/2\}$ is nonempty. 
    There exists an open subset $U_0 \subset \mathbb{R}^d$ that contains $\mathcal{S}$ and such that the following properties hold: 
    (1) $\eta$ is continuous on $U \setminus U_0$, where $U$ is an open set containing $\mathcal{X}$, 
    (2) restrictions of $P_0(X):=\mathbb{P}(X \mid Y=0),P_1(X):=\mathbb{P}(X \mid Y=1)$ to $U_0$ are absolutely continuous w.r.t. Lebesgue measure, with $2u$-times continuously differentiable ($C^{2u}$) Radon-Nikodym derivatives $f_0,f_1$, respectively.
    Since $f_0, f_1 \in C^{2u}$, we also have
    $\eta(x) = \mathbb{P}(Y=1) f_1(x)/(\mathbb{P}(Y=0) f_0(x) + \mathbb{P}(Y=1) f_1(x))$ is $C^{2u}$.
    
    \item 
    There exists $\rho>0$ such that $\int_{\mathbb{R}^d}\|x\|^{\rho} \diff \mathbb{P}(x)<\infty$. Moreover, for sufficiently small $r>0$, the ratio $\mathbb{P}(B(x;r))/(a_d r^d)$ is bounded away from zero, uniformly for $x \in \mathcal{X}$. 
    \item 
  $\partial \eta(x)/\partial x \neq 0$ for all $x\in \mathcal{X}$ and its restriction to $\mathcal{S}$ is also nonzero for  all $x \in \mathcal{S} \cap \partial \mathcal{X}$.
\end{enumerate}

\begin{prop}[\citet{samworth2012optimal} Theorem~6]
Assuming that (i)--(iv), it holds for each $s \in (0,1/2)$ that
\[
\mathcal{E}(\hat{g}^{(k\text{NN})}_{n,k,\bs w})
=
\underbrace{\left(
    B_1\sum_{i=1}^{n}w_i^2
    +
    B_2\left(
        \sum_{i=1}^{n}\frac{\delta_i^{(u)} w_i}{n^{2u/d}}
    \right)^2
\right)
}_{=:\gamma_n(\bs w_n)}
\{1+o(1)\}
\]
for some constants $B_1,B_2>0$, 
as $n \to \infty$, uniformly for $\bs w \in \mathcal{W}_{n,s}$, and 
$\delta_i^{(\ell)}:=i^{1+2\ell/d}-(i-1)^{1+2\ell/d},\ell \in [u-1]$. 
\end{prop}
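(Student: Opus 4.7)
\medskip

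\noindent\textbf{Proof proposal for Proposition (Samworth Theorem 6).}

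The plan is to expand the excess risk around the decision boundary $\mathcal{S}=\{x:\eta(x)=1/2\}$ and isolate the variance contribution (giving $B_1\sum_i w_i^2$) and the squared-bias contribution (giving $B_2(\sum_i \delta_i^{(u)}w_i/n^{2u/d})^2$). First I would start from the standard identity
\[
\mathcal{E}(\hat g_{n,k,\bs w}^{(k\text{NN})})
=\int_{\mathcal{X}} |2\eta(x)-1|\,\bigl|\mathbb{P}_{\mathcal{D}_n}(\hat S_n(x)\ge 1/2)-\mathbbm{1}(\eta(x)\ge 1/2)\bigr|\diff\mathbb{P}_X(x),
\]
with $\hat S_n(x):=\sum_{i=1}^{n} w_i Y_{(i;x)}$. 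Under assumption (ii), $\eta\in C^{2u}$ and $\partial\eta/\partial x\neq 0$ on $\mathcal{S}$, so $\mathcal{S}$ is a smooth $(d-1)$-manifold. I would introduce a tubular neighbourhood $\mathcal{S}^{\epsilon_n}$ of thickness $\epsilon_n\to 0$ at a rate to be tuned, split the integral into the part over $\mathcal{S}^{\epsilon_n}$ and its complement, and show (via Hoeffding for the $\hat S_n$ and the uniform lower bound on $|2\eta-1|$ outside $\mathcal{S}^{\epsilon_n}$) that the outside contribution is $o(n^{-s})$ uniformly in $\bs w\in\mathcal{W}_{n,s}$, hence negligible relative to $\gamma_n(\bs w)$.

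Next, for $x\in\mathcal{S}^{\epsilon_n}$ I would condition on the $k_2$ nearest neighbours (the truncation parameter in the definition of $\mathcal{W}_{n,s}$), whose positions, after centring at $x$, are approximately uniformly distributed on a ball of radius $r_{(k_2)}\asymp (k_2/n)^{1/d}$. Conditional on these radii, $\hat S_n(x)$ is a sum of independent Bernoullis with mean $\sum_i w_i\eta(X_{(i)})$ and variance $\sum_i w_i^2\eta(X_{(i)})\{1-\eta(X_{(i)})\}$. Taylor expanding $\eta$ to order $2u$ at a base point $x_0\in\mathcal{S}$ closest to $x$ and using isotropy of the ball to kill odd moments, I would obtain
\[
\mathbb{E}[\hat S_n(x)\mid X_{(1)},\dots,X_{(n)}] - \eta(x)
=\sum_{\ell=1}^{u} c_\ell(x_0)\sum_{i=1}^{n} w_i\, \|X_{(i)}-x\|^{2\ell}+\text{remainder},
\]
for explicit smooth coefficients $c_\ell$. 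Taking expectations of $\|X_{(i)}-x\|^{2\ell}$ using the Rényi representation of order statistics of a uniform sample gives $\mathbb{E}\|X_{(i)}-x\|^{2\ell}\propto n^{-2\ell/d}\,\delta_i^{(\ell)}/i$ up to smaller order, and summing against $w_i$ produces exactly the linear combinations $n^{-2\ell/d}\sum_i \delta_i^{(\ell)}w_i$ appearing in the definition of $\mathcal{W}_{n,s}$. The constraints $n^{2u/d}\sum_i\delta_i^{(\ell)}w_i/(n^{2\ell/d}\sum_i\delta_i^{(u)}w_i)\le 1/\log n$ for $\ell<u$ then force all lower-order bias terms to be dominated by the $\ell=u$ contribution, so the bias is $b(x_0)\,n^{-2u/d}\sum_i\delta_i^{(u)}w_i\,\{1+o(1)\}$ for a smooth $b$.

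Having isolated bias $\mu_n(x):= b(x_0)n^{-2u/d}\sum_i\delta_i^{(u)}w_i$ and variance $\sigma_n^2(x):=\eta(x)(1-\eta(x))\sum_i w_i^2$, I would apply a Berry--Esseen bound to replace $\mathbb{P}(\hat S_n(x)\ge 1/2)$ by $\Phi((\eta(x)+\mu_n(x)-1/2)/\sigma_n(x))$; the error is controlled by $\sum_i |w_i|^3/(\sum_i w_i^2)^{3/2}\le 1/\log n$, which is precisely why this constraint appears in $\mathcal{W}_{n,s}$. Using the co-area formula to integrate $|2\eta(x)-1|$ in the normal direction to $\mathcal{S}$, and performing the Gaussian integral (after rescaling the normal coordinate by $\sigma_n$), one obtains a leading term of the form
\[
\int_{\mathcal{S}}\!\Bigl(A_1(x_0)\sigma_n^2(x_0) + A_2(x_0)\mu_n(x_0)^2\Bigr)\diff\mathcal{H}^{d-1}(x_0)\;\{1+o(1)\},
\]
which after identifying $B_1:=\int_{\mathcal{S}} A_1\cdot\eta(1-\eta)\diff\mathcal{H}^{d-1}$ and $B_2:=\int_{\mathcal{S}} A_2 b^2\diff\mathcal{H}^{d-1}$ is exactly $\gamma_n(\bs w)\{1+o(1)\}$.

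The main obstacle is making every error term uniform over the infinite-dimensional class $\mathcal{W}_{n,s}$: the Berry--Esseen remainder, the truncation error coming from replacing $\sum_{i=1}^n$ by $\sum_{i=1}^{k_2}$, the bias remainder from cutting the Taylor series at order $2u$, and the random fluctuation of the order-statistic distances around their expectations must all be controlled by the explicit constraints built into $\mathcal{W}_{n,s}$ (those involving $\sum_{i>k_2}|w_i|$, $\sum_{i>k_2}w_i^2$, $\sum_i w_i^2$, and $\sum_i|w_i|^3$). Verifying that each constraint contributes the correct suppression factor and that the resulting bounds are summable into a $o(\gamma_n(\bs w))$ term uniformly in $\bs w$ is the technically heaviest part; this matches the extensive moment bookkeeping performed in \citet{samworth2012optimal}.
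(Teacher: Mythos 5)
This proposition is stated in the paper purely as a citation: the paper reproduces Samworth (2012) Theorem~6 in its Supplement but gives no proof of its own, deferring entirely to Samworth's original argument. So there is no in-paper proof to compare your attempt against; the right benchmark is Samworth's own proof, and your sketch reconstructs its structure faithfully: the boundary decomposition of the excess risk, the tubular-neighbourhood reduction using $\partial\eta/\partial x\neq0$ on $\mathcal{S}$, the Taylor expansion of $\eta$ killing odd moments by isotropy, the normal approximation with a Berry--Esseen error controlled by $\sum|w_i|^3/(\sum w_i^2)^{3/2}$, the Gaussian integral in the normal coordinate via the co-area formula, and the observation that each constraint in $\mathcal{W}_{n,s}$ exists precisely to suppress one of the remainder terms uniformly. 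These are exactly the moving parts in Samworth's argument, and you correctly identify that the uniformity over $\mathcal{W}_{n,s}$ is the technically heaviest part.

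One substantive correction. Your claim that $\mathbb{E}\|X_{(i)}-x\|^{2\ell}\propto n^{-2\ell/d}\,\delta_i^{(\ell)}/i$ is not quite right, and as stated it is inconsistent with your own next line, since summing $w_i$ against $\delta_i^{(\ell)}/i$ would give $\sum_i w_i\delta_i^{(\ell)}/i$, not $\sum_i w_i\delta_i^{(\ell)}$. The correct scaling is $\mathbb{E}\|X_{(i)}-x\|^{2\ell}\asymp (i/n)^{2\ell/d}$ and, since $\delta_i^{(\ell)}=i^{1+2\ell/d}-(i-1)^{1+2\ell/d}\approx(1+2\ell/d)\,i^{2\ell/d}$, the relation is $\mathbb{E}\|X_{(i)}-x\|^{2\ell}\propto n^{-2\ell/d}\,\delta_i^{(\ell)}$ with no division by $i$. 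In Samworth's actual argument the quantity $\delta_i^{(\ell)}$ does not come directly from a pointwise approximation of a single moment but from an exact gamma-function identity for $\mathbb{E}[U_{(i)}^{2\ell/d}]$ together with an Abel summation (summation by parts) over $i$, which is why the discrete-derivative form $i^{1+2\ell/d}-(i-1)^{1+2\ell/d}$ appears rather than a plain power $i^{2\ell/d}$. Fixing this identity is necessary both to get the bias term right and to justify why the ratio constraints in $\mathcal{W}_{n,s}$ are stated in terms of $\sum_i\delta_i^{(\ell)}w_i$.
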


Whereas the weights are constrained as
\begin{align}
    \sum_{i=1}^{n}w_i=1, \quad \sum_{i=1}^{n}\delta_i^{(\ell)}w_i = 0 \quad (\forall \ell \in [u-1]),
    \label{eq:supp_w_constraint}
\end{align}
and $w_i = 0$ for $i=k^*+1,\ldots,n$ with $k^* \asymp n^{2\beta/(2\beta+d)} $.
\citet{samworth2012optimal} eq.~(4.3) shows that the optimal weight should be in the form
\begin{align}
    w_i^*
    &:=
    \begin{cases}
        (a_0+a_1 \delta_1^{(i)}+\cdots + a_u \delta_i^{(u)})/k^* & (i \in [k^*]) \\
        0 & (\text{otherwise}.) \\
    \end{cases}.
    \label{eq:supp_w_optimal_samwaorth}
\end{align}
Coefficients $\bs a=(a_0,a_1,\ldots,a_u)$ are determined by solving the equations~(\ref{eq:supp_w_constraint}) and (\ref{eq:supp_w_optimal_samwaorth}) simultaneously; then the optimal weights are obtained by substituting it to (\ref{eq:supp_w_optimal_samwaorth}).

They also show the asymptotic solution of the above equations, in the case of $u=2$; the solution is
\begin{align*}
    a_1
    =
    \frac{1}{(k^{*})^{2/d}}\left\{
        \frac{(d+4)^2}{4}
        -
        \frac{2(d+4)}{d+2}a_0
    \right\},
    \quad 
    a_2
    =
    \frac{1-a_0-(k^{*})^{2/d}a_1}{(k^{*})^{4/d}}. 
\end{align*}

\section{Real-valued Weights Obtained via MS-$k$-NN}
\label{supp:optimal_weights_msknn}

Let $X_* \in \mathcal{X}$ any given query, 
and let denote $k$-NN estimator by $\varphi_{n,k}:=\hat{\eta}^{(k\text{NN})}_{n,k}(X_*)$. Considering 
\begin{align}
    \bs \varphi_{n,\bs k}
    =
    \bs \varphi_{n,\bs k}(X_*)
    &
    :=(\varphi_{n,k_1},\varphi_{n,k_2},\ldots,\varphi_{n,k_V}) \in \mathbb{R}^V, \label{eq:def_varphi} \\
    \bs R
    =
    \bs R(X_*)
    &:=
    \left(
        \begin{array}{cccc}
            r_1^2 & r_1^4 & \cdots & r_1^{2C} \\
            r_2^2 & r_2^4 & \cdots & r_2^{2C} \\
            \vdots & \vdots & \ddots & \vdots \\
            r_V^2 &, r_V^4 & \cdots & r_V^{2C} \\
        \end{array}
    \right) \in \mathbb{R}^{V \times C}, \label{eq:def_R}\\
    \bs A
    =
    \bs A(X_*)
    &:=
    (\bs 1 \: \bs R) \in \mathbb{R}^{V \times (C+1)}, \label{eq:def_A}\\
    \bs b 
    =
    \bs b(X_*)
    &:= (b_0,b_1,b_2,\ldots,b_C) \in \mathbb{R}^{C+1}, 
    \label{eq:def_b}
\end{align}
the minimization problem~(\ref{eq:msknn_estimator}) becomes
\begin{align*}
    \hat{\bs b}
    &=
    \argmin_{\bs b \in \mathbb{R}^{C+1}}
    \sum_{v=1}^{V}\left(
        \hat{\eta}^{(k\text{NN})}_{n,k}(X_*)-\sum_{c=0}^{C}b_c r_v^{2c}
    \right) 
    =
    \argmin_{\bs b \in \mathbb{R}^{C+1}}
    \|\bs \varphi_{\bs k,n} - \bs A \bs b\|_2^2
    =
    \underbrace{(\bs A^{\top}\bs A)^{-1}\bs A^{\top}}_{(\star)}\bs \varphi_{\bs k,n}.
\end{align*}
Therefore, denoting the first row of the matrix $(\star)$ by the vector $\bs z^\top=(z_1,z_2,\ldots,z_V)^\top \in \mathbb{R}^V$, 
MS-$k$-NN estimator is $\hat{\eta}^{\text{MS-$k$NN}}_{\bs k,n}(X_*)=\hat{b}_0=\bs z^{\top}\bs \varphi_{\bs k,n}$. 
To obtain the explicit form of $\bs z$, we hereinafter expand the matrix ($\star$).

Considering the inverse of block matrix
\begin{align*}
    \left(\begin{array}{cc}
        A & B \\
        C & D \\
    \end{array}\right)^{-1}
    =
    \left(\begin{array}{cc}
     (A-BD^{-1}C)^{-1} & -(A-BD^{-1}C)^{-1}BD^{-1} \\
     -D^{-1}C(A-BD^{-1}C)^{-1} & D^{-1}+D^{-1}C(A-BD^{-1}C)^{-1}BD^{-1} \\
    \end{array}\right)
\end{align*}
(see, e.g., \citet{petersen2012matrix} Section 9.1.3.), we have
\begin{align*}
    (\star)
    &=
    \left(\begin{array}{cc}
        V & \bs 1^{\top}\bs R \\
        \bs R^{\top}\bs 1 & \bs R^{\top}\bs R
    \end{array}\right)^{-1}(\bs 1 \: \bs R)^{\top} \\
    &=
    \left(\begin{array}{cc}
        \frac{1}{e} & -\frac{1}{e}\bs 1^{\top}\bs R(\bs R^{\top}\bs R)^{-1} \\
        -\frac{1}{e}(\bs R^{\top}\bs R^{-1})^{-1}\bs R^{\top}\bs 1 & 
        (\bs R^{\top}\bs R)^{-1}+
        \frac{1}{e}(\bs R^{\top}\bs R)^{-1}\bs R^{\top}\bs 1\bs 1^{\top}\bs R(\bs R^{\top}\bs R)^{-1}
    \end{array}\right)(\bs 1 \: \bs R)^{\top}, \quad \text{where}\\
    e 
    &:= 
    A-BD^{-1}C
    = 
    V-\bs 1^{\top}\bs R(\bs R^{\top}\bs R)^{-1}\bs R^{\top}\bs 1 \in \mathbb{R}.
\end{align*}
Therefore, its first column is, 
\begin{align*}
    \bs z
    &=
    \frac{1}{e}
    \left\{
        \bs I-\bs R(\bs R^{\top}\bs R)^{-1}\bs R^{\top}
    \right\}\bs 1 
    =
    \frac{1}{V-\bs 1^{\top}\bs R(\bs R^{\top}\bs R)^{-1}\bs R^{\top}\bs 1}
    \left\{
        \bs I-\bs R(\bs R^{\top}\bs R)^{-1}\bs R^{\top}
    \right\}\bs 1 
    =
    \frac{(\bs I-\mathcal{P}_{\bs R})\bs 1}{V-\bs 1^{\top}\mathcal{P}_{\bs R}\bs 1},
\end{align*}
where $\mathcal{P}_{\bs R}:=\bs R(\bs R^{\top}\bs R)^{-1}\bs R^{\top}$ represents a projection matrix;  
the equation (\ref{eq:explicit_z}) is proved.

In addition, using the vector $\bs z$, 
\begin{align*}
    \hat{\eta}^{\text{MS-$k$NN}}_{\bs k,n}(X_*)
    &=
    \bs z^{\top}
    \bs \varphi_{\bs k,n} 
    =
    \sum_{v=1}^{V}
    z_v \hat{\eta}^{(k\text{NN})}_{k,n}
    =
    \sum_{v=1}^{V} z_v \frac{1}{k_v} \sum_{i=1}^{k_v} Y_{(i)}
    =
    \sum_{i=1}^{k_V}w_i^* Y_{(i)}
    =
    \hat{\eta}^{(k\text{NN})}_{n,k_V,\bs w_*}(X_*),
\end{align*}
where 
\begin{align*}
    w_i^{*}:=\sum_{v:i \le k_v}\frac{z_v}{k_v} \in \mathbb{R}, \quad (\forall i \in [k_V]),
\end{align*}
is the real-valued weight obtained via MS-$k$-NN. Thus (\ref{eq:optimal_weights_msknn}) is proved.

\section{Proof of Theorem~\ref{theo:local_homogeneity_2}}
\label{supp:proof_prop:local_homogeneity_2}

We first prove Proposition~\ref{prop:integral_taylor} and its Corollary in the following Section~\ref{supp:homogeneity_preliminaries}; subsequently, applying the Corollary proves Theorem~\ref{theo:local_homogeneity_2}.

\subsection{Preliminaries}
\label{supp:homogeneity_preliminaries}

In this section, 
we first formally define Taylor expansion of the multivariate function in the following Definition~\ref{def:taylor}; 
Taylor expansion can approximate the function as shown in the following Proposition~\ref{prop:holder_taylor}. 
Subsequently, we consider integrals of functions over a ball, in Proposition~\ref{prop:integral_taylor} and Corollary~\ref{coro:integral_taylor}, for proving Theorem~\ref{theo:local_homogeneity_2} in Section~\ref{supp:proof_prop:local_homogeneity_2_main}.

\begin{defi}[Taylor expansion]
\label{def:taylor}
Let $d \in \mathbb{N}$ and $q \in \mathbb{N} \cup \{0\}$.
For $q$-times differentiable function $f:\mathcal{X} \to \mathbb{R}$, the Taylor polynomial of degree $q \in \mathbb{N} \cup \{0\}$ at point $X_*=(x_{*1},x_{*2},\ldots,x_{*d}) \in \mathcal{X}$ is defined as
\begin{align*}
    \mathcal{T}_{q,X_*}[f](X)
    &:=
    \sum_{s=0}^{q} \sum_{|\bs i|=s}
    \frac{(X-X_*)^{\bs i}}{\bs i!}D^{\bs i} f(X_*),
\end{align*} 
where $\bs i=(i_1,i_2,\ldots,i_d) \in (\mathbb{N} \cup \{0\})^d$ represents multi-index, $|\bs i|=i_1+i_2+\cdots+i_d$, $X^{\bs i}=x_1^{i_1}x_2^{i_2}\cdots x_d^{i_d}$, $\bs i!=i_1!i_2!\cdots i_d!$ and $D^{\bs i}=\frac{\partial^{|\bs i|}}{\partial x_{1}^{i_1}\partial x_{2}^{i_2} \cdots \partial x_{d}^{i_d}}$. 
\end{defi}

\begin{mdframed}
\begin{prop}
\label{prop:holder_taylor}
Let $d \in \mathbb{N},\beta>0$.
If $f:\mathcal{X} \to \mathbb{R}$ is $\beta$-H{\"o}lder, 
there exists a function $\varepsilon_{\beta,X_*}:\mathcal{X} \to \mathbb{R}$ such that
\begin{align*}
    f(X) = \mathcal{T}_{\lfloor \beta \rfloor,X_*}[f](X)
    +
    \varepsilon_{\beta,X_*}(X),
\end{align*}
and $|\varepsilon_{\beta,X_*}(X)|
    \le 
    L_{\beta}\|X-X_*\|_2^{\beta}
    \left( \le L_{\beta} r^{\beta},
    \: 
    \forall X \in B(X_*;r)
    \right)$, 
where $L_{\beta}$ is a constant for $\beta$-H{\"o}lder condition described in Definition~\ref{def:beta_holder}. 
\end{prop}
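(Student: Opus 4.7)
The plan is to observe that this proposition is essentially a bookkeeping reformulation of the $\beta$-H\"older condition in Definition~\ref{def:beta_holder}. There is no substantive work to do beyond introducing notation for the Taylor remainder and checking that the bound promised on $B(X_*; r)$ follows from the ball constraint.

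First I would define the remainder explicitly as
\begin{align*}
\varepsilon_{\beta, X_*}(X) := f(X) - \mathcal{T}_{\lfloor \beta \rfloor, X_*}[f](X),
\end{align*}
so that the decomposition $f(X) = \mathcal{T}_{\lfloor \beta \rfloor, X_*}[f](X) + \varepsilon_{\beta, X_*}(X)$ is tautological. Since $f \in C^{\lfloor \beta \rfloor}(\mathcal{X})$ by the hypothesis that $f$ is $\beta$-H\"older, the Taylor polynomial $\mathcal{T}_{\lfloor \beta \rfloor, X_*}[f]$ is well defined in the sense of Definition~\ref{def:taylor}, so $\varepsilon_{\beta, X_*}$ is a well-defined function on $\mathcal{X}$.

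Next, the bound $|\varepsilon_{\beta, X_*}(X)| \le L_\beta \|X - X_*\|_2^\beta$ is a direct transcription of inequality (\ref{eq:beta_holder_audibert}) in the definition of $\beta$-H\"older, applied with the roles of $X$ and $X_*$ as stated. Finally, the parenthetical refinement for $X \in B(X_*; r)$ follows immediately from the definition of the closed ball $B(X_*; r) = \{X' \in \mathcal{X} \mid \|X - X'\|_2 \le r\}$: monotonicity of $t \mapsto t^\beta$ on $[0, \infty)$ yields $\|X - X_*\|_2^\beta \le r^\beta$, and multiplying by $L_\beta \ge 0$ gives the claim.

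There is no genuine obstacle here; the proposition functions as a reusable lemma that packages the Taylor remainder for application later (e.g., inside the integrals that appear in the proof of Theorem~\ref{theo:local_homogeneity_2}, where one expands $\eta \mu$ and $\mu$ about $X_*$ inside $B(X_*; r)$ and wants a uniform remainder bound over the ball). Accordingly my ``proof'' would be at most a few lines, simply naming the remainder and quoting Definition~\ref{def:beta_holder}.
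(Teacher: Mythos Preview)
Your proposal is correct and matches the paper's own proof, which is a one-line remark that the proposition follows immediately from Definition~\ref{def:beta_holder}. You have simply spelled out that remark in slightly more detail by naming the remainder and noting the monotonicity step for the ball bound.
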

\end{mdframed}

\begin{proof}[Proof of Proposition~\ref{prop:holder_taylor}]
This Proposition~\ref{prop:holder_taylor} immediately follows from the definition of $\beta$-H{\"o}lder condition~(Definition~\ref{def:beta_holder}). \end{proof}

\begin{mdframed}
\begin{prop}
\label{prop:integral_taylor}
Let $d \in \mathbb{N},\beta>0$ and let $f:\mathcal{X} \to \mathbb{R}$ be a $\beta$-H{\"o}lder function. 
Then, for any query $X_* \in \mathcal{X}$, 
there exists $\tilde{\varepsilon}_{\beta} \in \mathbb{R}$ such that
\begin{align*}
    \int_{B(X_*;r)}
    f(X) \diff X
    &=
    \sum_{\substack{\bs u \in (\mathbb{N} \cup \{0\})^d \\ |\bs u| \le \lfloor \beta/2 \rfloor}}
    \frac{D^{2\bs u}f(X_*)}{(2 \bs u)!}
    \frac{g(\bs u)}{2|\bs u|+d}
    r^{2|\bs u|+d}
    +
    \tilde{\varepsilon}_{\beta},
    \quad 
    |\tilde{\varepsilon}_{\beta}|
    \le 
    L_{\beta}r^{\beta+d}
        \int_{B(\bs 0;1)} \diff x
\end{align*}
where $g(\bs u):=\frac{2\Gamma(u_1+1/2)\Gamma(u_2+1/2)\cdots\Gamma(u_d+1/2)}{\Gamma(u_1+u_2+\cdots+u_d+d/2)}$ and $\Gamma(u)$ is Gamma function. 
\end{prop}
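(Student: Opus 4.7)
The plan is to apply Proposition~\ref{prop:holder_taylor} at the query $X_*$ to write
$f(X) = \mathcal{T}_{\lfloor \beta \rfloor, X_*}[f](X) + \varepsilon_{\beta, X_*}(X)$
on $B(X_*; r)$ with $|\varepsilon_{\beta, X_*}(X)| \le L_{\beta} \|X - X_*\|_2^{\beta}$, integrate both sides over the ball, and evaluate the two pieces separately. The remainder is immediate: on $B(X_*; r)$ one has $\|X-X_*\|_2 \le r$, so the change of variables $X = X_* + r Y$ with $Y \in B(\bs 0; 1)$ yields
\[
    |\tilde{\varepsilon}_{\beta}|
    := \Bigl| \int_{B(X_*;r)} \varepsilon_{\beta,X_*}(X) \diff X \Bigr|
    \le L_{\beta} r^{\beta} \int_{B(X_*;r)} \diff X
    = L_{\beta} r^{\beta + d} \int_{B(\bs 0;1)} \diff Y,
\]
which is exactly the bound asserted.

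For the polynomial part I would expand $\mathcal{T}_{\lfloor \beta \rfloor, X_*}[f]$ in multi-index form, swap sum and integral, and again substitute $X = X_* + rY$ to get
\[
    \int_{B(X_*;r)} (X - X_*)^{\bs i} \diff X = r^{|\bs i| + d} \int_{B(\bs 0;1)} Y^{\bs i} \diff Y.
\]
The crucial observation is symmetry of the unit ball: if any component $i_j$ is odd, then $Y^{\bs i}$ is an odd function of $Y_j$ and the integral vanishes. Only even multi-indices $\bs i = 2 \bs u$ contribute, and the constraint $|\bs i| \le \lfloor \beta \rfloor$ collapses to $|\bs u| \le \lfloor \beta/2 \rfloor$ since $2|\bs u| \le \lfloor \beta \rfloor \iff |\bs u| \le \lfloor \beta/2 \rfloor$.

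It remains to evaluate the surviving even moments. Passing to spherical coordinates factorizes the integral as $\int_{B(\bs 0;1)} Y^{2\bs u} \diff Y = \frac{1}{2|\bs u|+d} \int_{S^{d-1}} Y^{2 \bs u} \diff \sigma(Y)$, and the spherical moment is a classical identity,
\[
    \int_{S^{d-1}} Y^{2 \bs u} \diff \sigma(Y)
    = \frac{2 \, \Gamma(u_1 + 1/2)\Gamma(u_2 + 1/2)\cdots \Gamma(u_d + 1/2)}{\Gamma(u_1 + u_2 + \cdots + u_d + d/2)} = g(\bs u),
\]
proved for instance by substituting $t_j = Y_j^2$ and recognising a Dirichlet integral. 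Plugging this back and identifying the coefficient $D^{2\bs u} f(X_*)/(2\bs u)!$ gives the stated expansion.

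The main obstacle is not any single step but the bookkeeping: verifying that the range $|\bs u| \le \lfloor \beta/2 \rfloor$ is exactly what one obtains from the even-index part of a Taylor polynomial of degree $\lfloor \beta \rfloor$, and keeping the multi-index factorials and gamma factors consistent through the change of variables. The symmetry argument that kills odd multi-indices is the only non-mechanical ingredient; once it is applied, the rest is routine multivariable calculus.
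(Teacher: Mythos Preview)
Your proposal is correct and follows essentially the same route as the paper's proof: Taylor-expand via Proposition~\ref{prop:holder_taylor}, integrate termwise over $B(X_*;r)$, use odd/even symmetry of the ball to kill the odd multi-indices, evaluate the surviving even monomial integrals by polar coordinates together with the Folland-type spherical moment formula, and bound the remainder by $L_\beta r^{\beta+d}\int_{B(\bs 0;1)}\diff x$. The only cosmetic difference is that the paper integrates directly over $B(X_*;r)$ and cites \citet{folland2001integrate} for $\int_{S^{d-1}}Y^{2\bs u}\diff\sigma=g(\bs u)$, whereas you first rescale to the unit ball; the content is identical.
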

\end{mdframed}

\begin{proof}[Proof of Propotision~\ref{prop:integral_taylor}]

Let $q:=\lfloor \beta \rfloor$. 
In this proof, we first calculate the Taylor expansion $\mathcal{T}_{q,X_*}[\eta](X)$. Then we integrate it over the ball $B(X_*;r)$, by referring to \citet{folland2001integrate},

Proposition~\ref{prop:holder_taylor} indicates that, there exists a function $\varepsilon_{\beta,X_*}(X)$ such that
\begin{align*}
    f(X)
    &=
    \mathcal{T}_{q,X_*}[f](X)
    +
    \varepsilon_{q,X_*}(X) 
    =
    \sum_{s=0}^{q} \sum_{|\bs i|=s} \frac{(X-X_*)^{\bs i}}{\bs i!}
    D^{\bs i}f(X_*)
    +
    \varepsilon_{\beta,X_*}(X)
\end{align*}
and $|\varepsilon_{\beta,X_*}(X)| \le L_{\beta} r^{\beta}$, for all $X \in B(X_*;r)$. Therefore, we have
\begin{align*}
    \int_{B(X_*;r)} f(X) \diff X
    &=
    \sum_{s=0}^{q} \sum_{|\bs i|=s}
    \frac{D^{\bs i}f(X_*)}{\bs i!}
    \underbrace{
    \int_{B(X_*;r)}
        (X-X_*)^{\bs i}
    \diff X 
    }_{(\star)}
    +
    \underbrace{
    \int_{B(X_*;r)} \varepsilon_{\beta,X_*}(X) \diff X
    }_{=:\tilde{\varepsilon}_{\beta}}.
\end{align*}
We first evaluate the term $(\star)$ in the following. 
\begin{enumerate}[{(a)}]
\item 
If at least one entry of $\bs i=(i_1,i_2,\ldots,i_d)$ is odd number, i.e., there exists $j \in [d],u \in \mathbb{N} \cup \{0\}$ such that $i_j=2u+1$, it holds that
\begin{align*}
    (\star 1)
    &=
    \int_{B(X_*;r)}(X-X_*)^{\bs i}\diff X
    =
    \int_{B(\bs 0;r)}X^{\bs i}\diff X
    =
    \int_{\substack{B(\bs 0;r') \\ r' \in [-r,r]}}
    X_{-j}^{\bs i_{-j}}
    \underbrace{
    \left\{
    \int_{-\sqrt{r^2-r'^2}}^{\sqrt{r^2-r'^2}}
    x_j^{i_j}
    \diff x_j
    \right\}
    }_{=0}
    \diff x_{-j}
    =
    0,
\end{align*}
where $X_{-j}:=(x_1,\ldots,x_{(j-1)},x_{(j+1)},\ldots,x_d) \in \mathbb{R}^{d-1}, 
\bs i_{-j}=(i_1,\ldots,i_{(j-1)},i_{(j+1)},\ldots,i_d) \in (\mathbb{N} \cup \{0\})^{d-1}$.

\item 
Therefore, in the remaining, we consider the case that all of entries in $\bs i=(i_1,i_2,\ldots,i_d)$ are even numbers, i.e., there exist $u_j \in \mathbb{N} \cup \{0\}$ such that $i_j=2u_j$ for all $j \in [d]$. 
It holds that 
\begin{align*}
    (\star 1)
    &=
    \int_{B(X_*;r)}(X-X_*)^{\bs i} \diff X
    =
    \int_{B(\bs 0;r)}X^{\bs i} \diff X \\
    &=
    \int_{0}^{r}
        \tilde{r}^{|\bs i|+d-1}
        \underbrace{
        \int_{\partial B(\bs 0;\tilde{r})}
            \tilde{X}^{\bs i} 
        \diff \sigma(\tilde{X})
        }_{=g(\bs u) \: (\because \text{ \citet{folland2001integrate}})}
    \diff \tilde{r}, \quad (\because \text{ polar coordinate}) \\
    &= 
    g(\bs u)
    \int_0^r 
        \tilde{r}^{|\bs i|+d-1}
        \diff \tilde{r} 
    =
    \frac{1}{|\bs i|+d}g(\bs u)
\end{align*}
where $\partial B(X;\tilde{r})$ denotes a surface of the ball $B(X;\tilde{r})$, $\sigma$ represents $(d-1)$-dimensional surface measure, $g(\bs u):=\frac{2\Gamma(u_1+1/2)\Gamma(u_2+1/2)\cdots\Gamma(u_d+1/2)}{\Gamma(u_1+u_2+\cdots+u_d+d/2)}$ and $\Gamma(u)$ is Gamma function. 
\end{enumerate}

Considering above (a) and (b), we have 
\begin{align*}
    \int_{B(X_*;r)}
    f(X) \diff X
    &=
    \sum_{\substack{|\bs i| \le q \\ \bs i=2\bs u,\bs u \in (\mathbb{N} \cup \{0\})^d}}
    \frac{D^{\bs i}f(X_*)}{\bs i!}
    \frac{g(\bs u)}{|\bs i|+d}
    r^{|\bs i|+d}
    +
    \tilde{\varepsilon}_{\beta} \\
    &=
    \sum_{|\bs u| \le \lfloor \beta/2 \rfloor}
    \frac{D^{2\bs u}f(X_*)}{(2 \bs u)!}
    \frac{g(\bs u)}{2|\bs u|+d}
    r^{2|\bs u|+d}
    +
    \tilde{\varepsilon}_{\beta},
\end{align*}
where $\tilde{\varepsilon}_{\beta}$ is evaluated by leveraging Proposition~\ref{prop:holder_taylor}, i.e., 
\begin{align*}
    |\tilde{\varepsilon}_{\beta}|
    &\le 
    \int_{B(X_*;r)} |\varepsilon_{\beta,X_*}(X)| \diff X
    \le 
    \sup_{X \in B(X_*;r)}|\varepsilon_{\beta,X_*}(X)|
    \int_{B(X_*;r)}\diff x
    \le 
    L_{\beta}r^{\beta}
    \int_{B(\bs 0;r)}\diff x.
\end{align*}
Therefore, the assertion is proved. 
\end{proof}

\begin{mdframed}
\begin{coro}
\label{coro:integral_taylor}
Symbols and assumptions are the same as those of Proposition~\ref{prop:integral_taylor}. Then, there exists $\tilde{\varepsilon}_{\beta} \in \mathbb{R}$ such that
\begin{align*}
    \int_{B(X_*;r)}f(X)\diff X
    =
    \frac{g(\bs 0)r^d}{d}
    f(X_*)
    +
    \sum_{c=1}^{\lfloor \beta/2 \rfloor}
    b_c r^{2c+d}
    +
    \tilde{\varepsilon}_{\beta},
    \quad 
    |\tilde{\varepsilon}_{\beta}|
    \le 
    L_{\beta}r^{\beta+d}
    \int_{B(\bs 0;1)}\diff x,
\end{align*}
where $b_c=b_c(f, X_*):=\frac{1}{2c+d}\sum_{|\bs u|=c}\frac{D^{2 \bs u}f(X_*)}{(2\bs u)!}g(\bs u)$. 
\end{coro}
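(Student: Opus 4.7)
The plan is to apply Proposition~\ref{prop:integral_taylor} directly and reorganize the sum on its right-hand side by grouping multi-indices $\bs u$ according to their total order $|\bs u|$. Since the proposition already supplies both the full expansion and the desired error bound on $\tilde{\varepsilon}_{\beta}$, the work in proving Corollary~\ref{coro:integral_taylor} is purely a matter of combinatorial bookkeeping — no new analysis is required.

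First I would rewrite the expansion from Proposition~\ref{prop:integral_taylor} by reindexing via $c=|\bs u|$:
\[
    \int_{B(X_*;r)} f(X)\,\diff X
    =
    \sum_{c=0}^{\lfloor \beta/2 \rfloor}
    \sum_{|\bs u|=c}
    \frac{D^{2\bs u}f(X_*)}{(2\bs u)!}
    \frac{g(\bs u)}{2c+d}
    r^{2c+d}
    +
    \tilde{\varepsilon}_{\beta}.
\]
Next I would isolate the $c=0$ contribution. The unique multi-index with $|\bs u|=0$ is $\bs u=\bs 0$, for which $D^{2\bs 0}f(X_*)=f(X_*)$ and $(2\bs 0)!=1$, so the $c=0$ term reduces to $\tfrac{g(\bs 0)}{d}r^d f(X_*)$. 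The remaining contributions for $c=1,2,\ldots,\lfloor \beta/2 \rfloor$ are, by the very definition of $b_c$ stated in the corollary, exactly $b_c r^{2c+d}$; factoring the common $\tfrac{1}{2c+d}r^{2c+d}$ out of the inner sum over $|\bs u|=c$ yields this grouping.

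Finally, the error term $\tilde{\varepsilon}_{\beta}$ is inherited unchanged from Proposition~\ref{prop:integral_taylor}, which already provides $|\tilde{\varepsilon}_{\beta}| \le L_{\beta} r^{\beta+d}\int_{B(\bs 0;1)}\diff x$. There is no genuine obstacle to overcome — the corollary is a cosmetic repackaging of the proposition, separating the leading-order term $f(X_*)$ from the even-degree bias coefficients $b_c$. This form is the convenient one for the subsequent proof of Theorem~\ref{theo:local_homogeneity_2}, where Corollary~\ref{coro:integral_taylor} will be applied separately to the numerator $\eta\mu$ and denominator $\mu$ of $\eta^{(\infty)}(B(X_*;r))$ and the resulting expansions divided to read off $b_1^*(X_*)$ and the higher-order coefficients.
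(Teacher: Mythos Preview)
Your proposal is correct and matches the paper's approach exactly: the paper's proof is the one-line statement that Proposition~\ref{prop:integral_taylor} immediately yields the assertion, and you have simply written out the bookkeeping (grouping by $c=|\bs u|$, isolating the $c=0$ term, inheriting $\tilde{\varepsilon}_{\beta}$) that makes this immediate. Nothing more is needed.
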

\end{mdframed}

\begin{proof}[Proof of Corollary~\ref{coro:integral_taylor}]
Proposition~\ref{prop:integral_taylor} immediately proves the assertion. 
\end{proof}

\subsection{Main body of the proof}
\label{supp:proof_prop:local_homogeneity_2_main}

For the function
\begin{align}
    \eta^{(\infty)}(B(X_*;r))
    &=
    \frac{\int_{B(X_*;r)}\eta(x)\mu(x)\diff x}{\int_{B(X_*;r)}\mu(x)\diff x}, 
    \label{eq:eta_infty_mu}
\end{align}
Corollary~\ref{coro:integral_taylor} indicates that there exist 
\begin{align}
a_1&=b_1(\eta \mu,X_*),
a_2=b_2(\eta \mu,X_*),
\ldots,
a_{\lfloor \beta/2 \rfloor}=b_{\lfloor \beta/2 \rfloor}(\eta \mu,X_*) \in \mathbb{R}, \nonumber \\
b_1&=b_1(\mu,X_*),
b_2=b_2(\mu,X_*),
\ldots,
b_{\lfloor \beta/2 \rfloor}=b_{\lfloor \beta/2 \rfloor}(\mu,X_*) \in \mathbb{R}
\label{eq:definition_ab}
\end{align}
and $\tilde{\varepsilon}^{(1)}_{\beta},\tilde{\varepsilon}^{(2)}_{\beta} \in \mathbb{R}$ such that 
\begin{align}
    (\ref{eq:eta_infty_mu})
    &=
    \frac{
        \frac{g(\bs 0)r^d}{d} \eta(X_*)\mu(X_*) +\sum_{c=1}^{\lfloor \beta/2 \rfloor}a_c r^{2c+d}
        +
        \tilde{\varepsilon}_{\beta}^{(1)}
    }{
        \frac{g(\bs 0)r^d}{d} \mu(X_*) +\sum_{c=1}^{\lfloor \beta/2 \rfloor}b_c r^{2c+d} 
        +
        \tilde{\varepsilon}_{\beta}^{(2)}
    }, 
    \qquad 
    |\tilde{\varepsilon}_{\beta}^{(1)}|
    ,
    |\tilde{\varepsilon}_{\beta}^{(2)}|
    \le 
    L_{\beta} r^{\beta+d} \int_{B(\bs 0;1)} \diff x,
    \label{eq:eta_infty_mu_2}
\end{align}
since $\mu$ and $\eta \mu$ are $\beta$-H{\"o}lder.
Both the numerator and denominator are divided by $r^d$, then for sufficiently small $r>0$, the asymptotic expansion is of the form
\begin{align}
    (\ref{eq:eta_infty_mu_2})
    =
    \eta(X_*)
    +
    \sum_{c=1}^{\lfloor \beta/2 \rfloor}b^*_c(X_*) r^{2c}
    + \delta_{\beta,r}(X_*),
    \label{eq:eta_infty_mu_3}
\end{align}
where $\delta_{\beta,r}(X_*) =  O(r^{2\lfloor \beta/2 \rfloor+2}) + O(r^{\beta})$.
The two error terms are in fact combined as $\delta_{\beta,r}(X_*) = O(r^\beta)$, because $2\lfloor \beta/2 \rfloor+2 \ge \beta$.
Thus, by specifying a sufficiently small $\tilde r>0$, 
the error term is bounded as $\delta_{\beta,r}(X_*) < L_\beta^*(X_*) r^\beta$ for $r\in (0,\tilde r]$ with a continuous function $L_\beta^*(X_*) $.
For $L_\beta^* = \sup_{X\in \mathcal{S}(\mu)} L_\beta^*(X_*)< \infty$,
we have
\begin{align*}
    (\ref{eq:eta_infty_mu_3})
    &=
    \eta(X_*)
    +
    \sum_{c=1}^{\lfloor \beta/2 \rfloor} b_c^*(X_*) r^{2c}
    +
    \delta_{\beta,r}(X_*),
    \quad 
    |\delta_{\beta,r}(X_*)|
    < 
    L_{\beta}^*r^{\beta},
    \quad 
    (\forall r \in (0,\tilde{r}],
    X_* \in \mathcal{S}(\mu)).
\end{align*}

Thus proving the assertion. 
Note that, by rearranging the terms of order $r^{2+d}$, we obtain the equation 
\begin{align*}
    \frac{g(\bs 0)}{d}
    \mu(X_*)b_1^*
    +
    \eta(X_*)b_1
    =
    a_1,
\end{align*}
where $a_1:=\frac{1}{2+d}\sum_{|\bs u|=1}\frac{D^{2\bs u}(\eta(X_*) \mu(X_*))}{(2\bs u)!}g(\bs u),
b_1:=\frac{1}{2+d}\sum_{|\bs u|=1}\frac{D^{2\bs u}\mu(X_*)}{(2\bs u)!}g(\bs u)$; 
subsequently, solving the equation yields
\begin{align*}
    b_1^*
    &=
    \frac{d}{2+d}\frac{1}{\mu(X_*)}
    \sum_{|\bs u|=1} \left\{
        \frac{D^{2\bs u}(\eta(X_*) \mu(X_*))}{(2\bs u)!}
        -
        \frac{\eta(X_*) D^{2\bs u}\mu(X_*)}{(2\bs u)!}
    \right\}\frac{g(\bs u)}{g(\bs 0)} \\
    &=
    \frac{d}{2+d}\frac{1}{\mu(X_*)}
    \frac{1}{2}
    \{\Delta \eta(X_*)\mu(X_*)
    +
    \eta(X_*) \Delta \mu(X_*)\}
    \underbrace{
    \frac{
        2\Gamma(1/2)^{d-1}\Gamma(3/2)/\Gamma(1+d/2)
    }{
        2\Gamma(1/2)^d /\Gamma(d/2)
    }}_{=\frac{\Gamma(3/2)/\Gamma(1+d/2)}{\Gamma(1/2)/\Gamma(d/2)}=\frac{1/2}{d/2}=\frac{1}{d}} \\
    &=
    \frac{1}{2d+4}
    \frac{1}{\mu(X_*)}
    \{\Delta [\eta(X_*)\mu(X_*)]
    +
    \eta(X_*) \Delta \mu(X_*)\}.
\end{align*}
In general, $b_1^* \neq 0$, thus $\gamma=2$ for $\beta>2$.  For the case of $\beta=2$,  we have ${\lfloor \beta/2 \rfloor}=0$, thus 
$ (\ref{eq:eta_infty_mu_2}) = \eta(X_*) + O(r^\beta)$, meaning $\gamma=2$.
\qed

\section{Proof of Theorem~\ref{theo:msknn_cr}}
\label{supp:proof_msknn_cr}

We basically follow the proof of \citet{chaudhuri2014rates} Theorem~4(b). 
In Section~\ref{supp:msknn_definition_symbols}, we first define symbols used in this proof. 
In Section~\ref{supp:sketch_of_proof_msknn_cr}, we describe the sketch of the proof and main differences between our proof and that of  \citet{chaudhuri2014rates}~4(b). 
Section~\ref{supp:proof_main} shows the main body of the Proof, by utilizing several Lemmas listed in Section~\ref{supp:lemmas}.

\subsection{Definitions of symbols} 
\label{supp:msknn_definition_symbols}

\begin{itemize}

\item \textbf{$\bs k$ and radius $\bs r$:} 
We first specify a real-valued vector $\bs \ell=(\ell_1,\ell_2,\ldots,\ell_V)^\top \in \mathbb{R}^V$ satisfying $\ell_1=1<\ell_2<\cdots<\ell_V$. 
$k_{1,n} \asymp n^{-2\beta/(2\beta+d)}$ is assumed in (C-1), and in (C-2), 
$\{k_{v,n}\}$ are specified so that 
\begin{align*}
    k_{v,n}=\min\{k \in [n] \mid \|X_{(k)}-X_*\|_2 \ge \ell_v r_{1,n} \},
    \quad \forall v \in \{2,3,\ldots,V\}
\end{align*}
from  $r_{1,n}:=\|X_{(k_{1,n})}-X_*\|_2$.
Then, for $r_{v,n}:=\|X_{(k_{v,n})}-X_*\|_2$, $v=2,\ldots, V$,
we have $r_{v,n}/r_{1,n} \to \ell_v$.

\item \textbf{Estimators:}
Similarly to Supplement~\ref{supp:optimal_weights_msknn}, we denote the $k$-NN estimators and MS-$k$-NN estimator by 
\begin{table}[htbp]
\centering
\begin{tabular}{rl}
    (\text{Finite $k$-NN}) & 
    $\varphi_{n,k}
    =
    \varphi_{n,k}(X_*)
    :=
    \frac{1}{k}\sum_{i=1}^{k}Y_{(i;X_*)} \in \mathbb{R}$ \\
    (\text{Finite $k$-NN vector}) &
    $\bs \varphi_{n,\bs k}
    =
    \bs \varphi_{n,\bs k}(X_*)
    :=
    (\varphi_{n,k_1}(X_*),
    \varphi_{n,k_2}(X_*),
    \ldots,
    \varphi_{n,k_V}(X_*))^\top \in \mathbb{R}^V$, \\
    (\text{Finite MS-$k$-NN}) &
    $\rho_{n,\bs k}
    =
    \rho_{n,\bs k}(X_*)
    :=
    \bs z_{n,\bs k}(X_*)^{\top}\bs \varphi_{n,\bs k}(X_*) \in \mathbb{R}$, \\
\end{tabular}
\end{table}
where $\bs z_{n,\bs k}(X_*) \in \mathbb{R}^V$ denotes vectror $\bs z$ considered in Supplement~\ref{supp:optimal_weights_msknn}, i.e., 
\[
\bs z_{n,\bs k}(X_*):=\frac{(\bs I-\mathcal{P}_{\bs R_{n,\bs k}(X_*)})\bs 1}{V-\bs 1^{\top}\mathcal{P}_{\bs R_{n,\bs k}(X_*)}\bs 1}
\]
where $\mathcal{P}_{\bs R}:=\bs R(\bs R^{\top}\bs R)^{-1}\bs R^{\top}$ and the $(i,j)$-th entry of the matrix $\bs R_{n,\bs k}(X_*)$ is $r_{k_i}^{2j}=\|X_{(k_i)}-X_*\|_2^{2j}$. 
Whereas the vector $\bs z_{n,k}(X_*)$ is simply denoted by $\bs z$ in the above discussion, 
here we emphasize the dependence to the sample $\mathcal{D}_n$, parameters $\bs k=(k_1,k_2,\ldots,k_V)$ and the query $X_*$. 

We here define the asymptotic variants of the estimators by
\begin{table}[htbp]
\centering
\begin{tabular}{rl}
    (\text{Asymptotic $k$-NN}) &
    $\varphi_r^{(\infty)}
    =
    \varphi_r^{(\infty)}(X_*)
    :=
    \eta^{(\infty)}(B(X_*;r)) \in \mathbb{R}$, \\ 
    (\text{Asymptotic $k$-NN vector}) & 
    $\bs \varphi_{\bs r}^{(\infty)}
    =
    \bs \varphi_{\bs r}^{(\infty)}(X_*)
    :=
    (\varphi_{r_1}^{(\infty)}(X_*),
    \varphi_{r_2}^{(\infty)}(X_*),
    \ldots,
    \varphi_{r_V}^{(\infty)}(X_*)) \in \mathbb{R}^V$, \\
    (\text{Asymptotic MS-$k$-NN}) &
    $\rho_{\bs r}^{(\infty)}
    =
    \rho_{\bs r}^{(\infty)}(X_*)
    :=
    \bs z_{\bs r}^{\top}\bs \varphi_{\bs r}^{(\infty)}(X_*) \in \mathbb{R}$, \\
\end{tabular}
\end{table}
where $\bs r=(r_1,r_2,\ldots,r_V)$,
\[
\bs z_{\bs r}=\frac{(\bs I-\mathcal{P}_{\bs R})\bs 1}{V-\bs 1^{\top}\mathcal{P}_{\bs R}\bs 1},
\]
and the $(i,j)$-th entry of the matrix $\bs R$ is $r_i^{2j}$.

\item \textbf{Point-wise errors} for $(X_*,Y_*) \in \mathcal{X} \times \{0,1\}$ are defined as 
\begin{align*}
    R_{n,\bs k}(X_*,Y_*):=\mathbbm{1}(
    \underbrace{\rho_{n,\bs k}(X_*)}_{\text{Finite MS-$k$-NN}} \neq Y_*),
    \quad 
    R_*(X_*,Y_*):=\mathbbm{1}(g_*(X_*) \neq Y_*),
\end{align*}
where $g_*(X):=\mathbbm{1}(\eta(X) \ge 1/2)$ is the Bayes-optimal classifier equipped with $\eta(X):=\mathbb{E}(Y \mid X)$.

\item \textbf{A minimum radius} whose measure of the ball is larger than $t>0$, i.e., 
\[
    \tilde{r}_t(X)
    :=
    \inf\left\{ r>0 \: \bigg| \: 
    \int_{B(X_*;r)} \mu(X) \diff X 
    \ge t \right\}.
\]

\item \textbf{Sets for the decision boundary with margins} are defined as
\begin{align*}
    \mathcal{X}_{t,\Delta}^+
    &:=
    \left\{
        X \in \mathcal{S}(\mu) \mid \eta(X)>\frac{1}{2},
        \quad \rho^{(\infty)}_{r\bs \ell}(X) \ge \frac{1}{2}+\Delta,
        \quad
        \forall r \le \tilde{r}_t(X)
    \right\}, \\
    \mathcal{X}_{t,\Delta}^-
    &:=
    \left\{
        X \in \mathcal{S}(\mu) \mid \eta(X)<\frac{1}{2},
        \quad \rho^{(\infty)}_{r\bs \ell}(X) \le \frac{1}{2}-\Delta,
        \quad
        \forall r \le \tilde{r}_t(X)
    \right\}, \\
    \partial_{t,\Delta}
    &:=
    \mathcal{X} \setminus (\mathcal{X}_{t,\Delta}^+ \cup \mathcal{X}_{t,\Delta}^-),
\end{align*}
where $\mathcal{S}(\mu)$ is defined in (\ref{eq:smu}), and $r$ is meant for $r_1$.

\end{itemize}

\subsection{Sketch of the proof}
\label{supp:sketch_of_proof_msknn_cr}

\textbf{Sketch of the proof:} 
We mainly follow the proof of \citet{chaudhuri2014rates} Theorem~4(b), that proves the convergence rate for the unweighted $k$-NN estimator. 
Similarly to \citet{chaudhuri2014rates} Lemma~7, we first consider decomposing the difference between point-wise errors $R_{n,\bs k}(X_*,Y_*)-R_*(X_*,Y_*)$ as shown in the following Lemma~\ref{lem:decomposition}; this Lemma plays an essential role for proving Theorem~\ref{theo:local_homogeneity_2}.

Subsequently, we consider the following two steps using Lemma~\ref{lem:decomposition}--\ref{lem:term3}: 

\begin{enumerate}[{(i)}]
\item taking expectation of the decomposition w.r.t. sample $\mathcal{D}_n$ for showing point-wise excess risk, \\
(cf. \citet{chaudhuri2014rates} Lemma~20)
\item further taking expectation w.r.t. the query $(X_*,Y_*)$, and evaluate the convergence rate. \\
(cf. \citet{chaudhuri2014rates} Lemma~21)
\end{enumerate}
Then, the assertion is proved.

\textbf{Main difference} between the Proof of \citet{chaudhuri2014rates} and ours is bias evaluation. 
\citet{chaudhuri2014rates} leverages the $\gamma$-neighbour average smoothness condition 
\begin{align*}
\text{(asymptotic bias of $k$-NN)}
\qquad 
|\underbrace{\varphi_{r}^{(\infty)}(X_*)}_{\text{(asymptotic) $k$-NN}}-\eta(X_*)| \le L_{\gamma}r^{\gamma},
\end{align*}
that represents the asymptotic bias of the $k$-NN, where $\gamma$ is upper-bounded by $2$ even if highly-smooth function is employed~($\beta \gg 2$; Theorem~\ref{theo:local_homogeneity_2}). 
However, MS-$k$-NN asymptotically satisfies an inequality
\begin{align*}
    \text{(asymptotic bias of MS-$k$-NN)}
    \qquad 
    |\underbrace{\rho^{(\infty)}_{r \bs \ell}(X_*)}_{\text{(asymptotic) MS-$k$-NN}}-\eta(X_*)|
    \le 
    L_{\beta}^{**}r^{\beta}
\end{align*}
for any $\beta>0$, as formally described in Lemma~\ref{lem:bias_of_msknn}. 
By virtue of the smaller asymptotic bias, Lemma~\ref{lem:term1} proves that smaller margin is required for the decision boundary, in order to evaluate the convergence rate; it results in the faster convergence rate. 

Although the the bias evaluation is different, variance evaluation for MS-$k$-NN is consequently almost similar to the $k$-NN, as MS-$k$-NN can be regarded as a linear combination of several $k$-NN estimators, i.e., 
\[
\underbrace{\rho_{n,\bs k}(X_*)}_{\text{MS-$k$-NN estimator}} 
= 
\bs z_{n,\bs k}(X_*)^{\top} 
\underbrace{\bs \varphi_{n,\bs k}(X_*)}_{\text{$k$-NN estimators}};
\]
we adapt several Lemmas in \citet{chaudhuri2014rates} to our setting, for proving our Theorem~\ref{theo:msknn_cr}.

\subsection{Main body of the proof}
\label{supp:proof_main}

See the following Section~\ref{supp:lemmas} for Lemma \ref{lem:decomposition}--\ref{lem:term3} used in this proof. 
Throughout this proof, we assume that $X_* \in \mathcal{S}(\mu)$, as \citet{cover1967nearest} proves that $\mathbb{P}(X_* \in \mathcal{S}(\mu))=1$; the remaining $X_* \notin \mathcal{S}(\mu)$ can be ignored.

Let $n \in \mathbb{N},k_{1,n} \asymp n^{2\beta/(2\beta+d)},t_n:=2k_{1,n}/n,\Delta_o:=L_{\beta}^{***}t^{\beta/d}$ where $L_{\beta}^{***} \in (0,\infty)$ is a constant defined in Lemma~\ref{lem:term1}, and let  $\Delta(X):=|\eta(X)-1/2|$ denotes the difference between the underlying conditional expectation $\eta(X)$ from the decision boundary $1/2$.

By specifying arbitrary $i_o \in \mathbb{N}$ and $\Delta_{i_o}:=2^{i_o} \Delta_o$, we consider the following two steps (i) and (ii) for proving Theorem~\ref{theo:msknn_cr}. 
In step (i), queries are first classified into two different cases, i.e., $\Delta(X_*) \le \Delta_{i_o}$ and $\Delta(X_*) > \Delta_{i_o}$. 
Thus $i_o$ regulates the margin near the decision boundary, and it will be specified as $i_o=\max\{1,\lceil \log_2 \sqrt{\frac{2(\alpha+2)}{k_{1,n}\Delta_o^2}}\rceil\}$. 
For each case, we take expectation of the difference between point-wise errors $R_{n,\bs k}(X_*,Y_*)-R_*(X_*,Y_*)$ with respect to the sample $\mathcal{D}_n$. 
Subsequently, (ii) we further take its expectation with respect to the query $(X_*,Y_*)$; the assertion is then proved. 
Note that these steps (i) and (ii) correspond to \citet{chaudhuri2014rates} Lemma~20 and 21, respectively.

\begin{enumerate}[{(i)}]
\item 

We first consider the case $\Delta(X_*) \le \Delta_{i_o}$. Then, we have
\begin{align}
\mathbb{E}_{\mathcal{D}_n}(R_{n,\bs k}(X_*,Y_*)-R_*(X_*,Y_*)) 
&\le 
|1-2\eta(X_*)|
\mathbb{E}_{\mathcal{D}_n}\{ \mathbbm{1}(\rho_{n,\bs k}(X_*) \neq g_*(X_*)) \} \nonumber \\
&\hspace{5em} 
(\because \text{\citet{devroye1996probabilistic} Theorem~2.2}) \nonumber \\
&\le 
|1-2\eta(X_*)| \nonumber \\
&\le 
2\Delta(X_*) \nonumber\\
&\le 
2\Delta_{i_o}.
\label{eq:expectation_samples_error_1}
\end{align}

We second consider the case $\Delta(X_*)>\Delta_{i_o}$. 
Assuming that $\eta(X_*)>1/2$ without loss of generality, it holds for $r=r_{1,n}:=\|X_{(k_{1,n})}-X_*\|_2$ that
\begin{align}
    &\mathbb{E}_{\mathcal{D}_n}
    \left\{
    R_{n,\bs k}(X_*,Y_*)
    -
    R_*(X_*,Y_*)
    \right\} \nonumber \\
    &\hspace{3em}\le
    |1-2\eta(X_*)|
    \mathbb{E}_{\mathcal{D}_n}
    \left\{ \mathbbm{1}(\rho_{n,\bs k}(X_*) \neq g_*(X_*)) \right\} \nonumber \\
    &\hspace{10em} (\because \text{\citet{devroye1996probabilistic} Theorem~2.2}) \nonumber \\
    &\hspace{3em} \le 
    2\Delta(X_*)
    \mathbb{E}_{\mathcal{D}_n}
    \left\{ \mathbbm{1}(\rho_{n,\bs k}(X_*) \neq g_*(X_*)) \right\} \nonumber \\
    &\hspace{10em} (\because \text{$|1-2\eta(X_*)| \le 2\Delta(X_*)$}) \nonumber \\
    &\hspace{3em}\le
    2\Delta(X_*)\mathbb{E}_{\mathcal{D}_n}
    \big\{
    \mathbbm{1}(X_* \in \partial_{t_n,\Delta(X_*)-\Delta_{i_o}}) \nonumber \\
    &\hspace{12em}
    +
    \mathbbm{1}\left(|\rho_{n,\bs k}(X_*)-\rho^{(\infty)}_{r \bs \ell}(X_*)| \ge \frac{\Delta(X_*)-\Delta_{i_o}}{2}\right) \nonumber \\
    &\hspace{12em}
    +
    \mathbbm{1}\left(|\rho_{n,\bs k}(X_*)-\eta(X_*)| \ge \frac{\Delta(X_*)-\Delta_{i_o}}{2}\right) \nonumber \\
    &\hspace{12em}+
    \mathbbm{1}(\|X_{(k_{1,n})}-X_*\|_2 > \tilde{r}_{t_n}(X_*))
    \big\} \nonumber \\
    &\hspace{25em} (\because \text{Lemma~\ref{lem:decomposition} with $\Delta:=\Delta(X_*)-\Delta_{i_o} \in [0,1/2]$}) \nonumber \\
&\hspace{3em}\le
    2\Delta(X_*)\mathbb{E}_{\mathcal{D}_n}
    \bigg\{
    \mathbbm{1}\left(|\rho_{n,\bs k}(X_*)-\rho^{(\infty)}_{r \bs \ell}(X_*)| \ge \frac{\Delta(X_*)-\Delta_{i_o}}{2}\right) \nonumber \\
    &\hspace{12em}
    +
    \mathbbm{1}\left(|\rho_{n,\bs k}(X_*)-\eta(X_*)| \ge \frac{\Delta(X_*)-\Delta_{i_o}}{2}\right) \nonumber \\
    &\hspace{12em}
    +
    \mathbbm{1}(\|X_{(k_{1,n})}-X_*\|_2>\tilde{r}_{t_n}(X_*))
    \bigg\} 
    \hspace{5em} (\because \text{Lemma~\ref{lem:term1}, i.e., $X_* \notin \partial_{t_n,\Delta(X_*)-\Delta_{i_o}}$}) \nonumber \\
    &\hspace{3em} \le 
    2\Delta(X_*)
    \bigg\{
    \mathbb{P}_{\mathcal{D}_n}\left(|\rho_{n,\bs k}(X_*)-\rho^{(\infty)}_{r \bs \ell}(X_*)| \ge \frac{\Delta(X_*)-\Delta_{i_o}}{2}\right) \nonumber \\
    &\hspace{12em}
    +
    \mathbb{P}_{\mathcal{D}_n}\left(|\rho_{n,\bs k}(X_*)-\eta(X_*)| \ge \frac{\Delta(X_*)-\Delta_{i_o}}{2}\right) \nonumber \\
    &\hspace{12em}
    +
    \mathbb{P}_{\mathcal{D}_n}(\|X_{(k_{1,n})}-X_*\|_2>\tilde{r}_{t_n}(X_*))
    \bigg\} 
    \hspace{1em}(\because \mathbb{E}_{\mathcal{D}_n}(\mathbbm{1} (\mathcal{A}))=\mathbb{P}_{\mathcal{D}_n}(\mathcal{A})
    \text{ for any event }\mathcal{A}) \nonumber \\
    &\hspace{3em} \lesssim 
    \Delta(X_*)
    \bigg\{
        \exp\left(
            -C_1 k_{1,n}(\Delta(X_*)-\Delta_{i_o})^2
        \right)
        +
        \exp\left(
            -C_2 k_{1,n}(\Delta(X_*)-\Delta_{i_o})^2
        \right) \nonumber \\
    &\hspace{12em}
        +\exp(-L_{\bs \ell}n^{\beta/(\beta+d)}(\Delta(X_*)-\Delta_{i_o}))
        +\exp(-3k_{1,n}/2)(1+o(1)) \nonumber \\
    &\hspace{12em}
        +\exp(-n)
        +\exp\left(
            -\frac{k_{1,n}}{2}\left(
                1-\frac{k_{1,n}}{nt_n}
            \right)^2
        \right)
    \bigg\} 
    \hspace{1em}(\because \text{Lemma~\ref{lem:term2}, \ref{lem:term2_2} and \ref{lem:term3} with $\delta=k_{1,n}/nt$}) \nonumber \\
    &\hspace{3em} \lesssim
    \Delta(X_*)\exp\left(
        -C_2k_{1,n}(\Delta(X_*)-\Delta_{i_o})^2
    \right)
    +
    +\exp(-3k_{1,n}/2)(1+o(1))
    +\exp(-k_{1,n}/8)
    \label{eq:pointwise_risk} \\
    &\hspace{10em}\left(\because 
    t_n=2(k_{1,n}/n) \text{ indicates that }
    \frac{k_{1,n}}{2}\left(1-\frac{k_{1,n}}{nt_n}\right)^2 
    =
    \frac{k_{1,n}}{2}\left(
        1-\frac{1}{2}
    \right)^2 
    =
    \frac{k_{1,n}}{8}
    \right), \nonumber \\
    &\hspace{3em}\lesssim 
    \Delta(X_*)\exp\left(
        -C_2k_{1,n}(\Delta(X_*)-\Delta_{i_o})^2
    \right)+\exp(-3k_{1,n}/2)(1+o(1)), \nonumber
\end{align}
where $C_2=C_1/2=1/16V^2 L_{\bs z}^2$ is defined in Lemma~\ref{lem:term2_2}.

\item 
Excess risk of the misclassification error rate is then evaluated by
\begin{align*}
    \varepsilon(\rho_{n,\bs k})
    &=
    \mathbb{E}_{X_*,Y_*}
    \left\{
        \mathbb{E}_{\mathcal{D}_n}\left(
            R_{n,\bs k}(X_*,Y_*)-R_*(X_*,Y_*)
        \right)
    \right\} \\
    &=
    \underbrace{
    \mathbb{P}_{X_*,Y_*}
    (\Delta(X_*) \le \Delta_{i_o})
    }_{\le L_{\alpha} \Delta_{i_o}^{\alpha}\, (\because \alpha\text{-margin cond.})}
    \mathbb{E}_{X_*,Y_*}
    (
    \underbrace{
    \mathbb{E}_{\mathcal{D}_n}
    \left\{
        R_{n,\bs k}(X_*,Y_*)
        -
        R_*(X_*,Y_*)
    \right\}
    }_{\le 2\Delta_{i_o} \, (\because \text{ineq.~}(\ref{eq:expectation_samples_error_1}))} \mid \Delta(X_*) \le \Delta_{i_o})\\
    &\hspace{3em}+
    \underbrace{
    \mathbb{P}_{X_*,Y_*}
    (\Delta(X_*) > \Delta_{i_o})
    }_{\le 1}
    E_{X_*,Y_*}(
    \mathbb{E}_{\mathcal{D}_n}
        \left\{
        R_{n,\bs k}(X_*,Y_*)
        -
        R_*(X_*,Y_*)
    \right\} \mid 
    \Delta(X_*) > \Delta_{i_o}) \\
    &\lesssim
    \Delta_{i_o}^{1+\alpha}
    +
    E_{X_*,Y_*}(
    \underbrace{
    \mathbb{E}_{\mathcal{D}_n}
        \left\{
        R_{n,\bs k}(X_*,Y_*)
        -
        R_*(X_*,Y_*)
    \right\}
    }_{(\text{evaluated by ineq.~}(\ref{eq:pointwise_risk})}
    \mid 
    \Delta(X_*) > \Delta_{i_o}) \\
    &\lesssim 
    \Delta_{i_o}^{1+\alpha}
    +
    \underbrace{
    \mathbb{E}_{X_*,Y_*}\left(
        \Delta(X_*)\exp(-C_2 k_{1,n}(\Delta(X_*)-\Delta_o)^2)
        \mathbbm{1}(\Delta(X_*)>\Delta_{i_o})
    \right)}_{\lesssim \Delta_{i_o}^{1+\alpha}
    \quad (\because \text{similarly to Proof of Lemma~20 in \citet{chaudhuri2014rates}})}
    +\exp(-3k_{1,n}/2)(1+o(1)) \\
    &\lesssim
    \Delta_{i_o}^{1+\alpha}
    +\exp(-3k_{1,n}/2)(1+o(1)). 
\end{align*}
If we set $i_0=\max\{1,\lceil \log_2 \sqrt{\frac{2(\alpha+2)}{k_{1,n} \Delta_o^2}} \rceil\}$, 
\begin{align*}
    \varepsilon(\hat{\eta}^{(\text{MS-$k$NN})}_{n,\bs k})
    &=
    \varepsilon(\rho_{n,\bs k})\\
    &\lesssim 
    \Delta_o^{1+\alpha}+\exp(-3k_{1,n}/2)(1+o(1)) \\
    &\lesssim
    (2^{i_o})^{1+\alpha}\Delta_o^{1+\alpha}+\exp(-3k_{1,n}/2)(1+o(1)) \\
    &\lesssim 
    \left( 
    \max
    \left\{
    1,
    \sqrt{\frac{2(\alpha+2)}{k_{1,n} \Delta_o^2}} 
    \right\}
    \right)^{1+\alpha}
    \Delta_o^{1+\alpha}+\exp(-3k_{1,n}/2)(1+o(1)) \\
    &\lesssim 
    \max\left\{
        \Delta_o,
        \sqrt{\frac{1}{k_{1,n}}}
    \right\}^{1+\alpha}+\exp(-3k_{1,n}/2)(1+o(1)) \\
    &\lesssim 
    \max\left\{
        t_n^{\beta/d},
        \sqrt{\frac{1}{k_{1,n}}}
    \right\}^{1+\alpha}+\exp(-3k_{1,n}/2)(1+o(1)) & & (\Delta_o \asymp t_n^{\beta/d}) \\
    &\lesssim 
    \max\left\{
        \left(\frac{k_{1,n}}{n}\right)^{\beta/d},
        \sqrt{\frac{1}{k_{1,n}}}
    \right\}^{1+\alpha}
    +\exp(-3k_{1,n}/2)(1+o(1))
    & & (\because \: t_n \asymp k_{1,n}/n). 
\end{align*}
\end{enumerate}
Recalling that $k_{1,n} \asymp n^{2\beta/(2\beta+d)}$, the assertion is proved as
\begin{align*}
    \varepsilon(\hat{\eta}^{(\text{MS-$k$NN})}_{n,\bs k})
    \lesssim 
    n^{-(1+\alpha)\beta/(2\beta+d)}.
\end{align*}
\qed

\subsection{Lemmas}
\label{supp:lemmas}

We here list Lemma~\ref{lem:decomposition}--\ref{lem:term3} used in the proof for Theorem~\ref{theo:msknn_cr}. 
Roughly speaking, 
\begin{itemize}
\item Lemma~\ref{lem:decomposition} indicates the decomposition of the point-wise error. \\
(cf. \citet{chaudhuri2014rates} Lemma~7) 

\item Lemma~\ref{lem:bias_of_msknn} indicates the bias evaluation of MS-$k$-NN. 

\item Lemma~\ref{lem:concentration_z} indicates the convergence rate of $\|\bs z_{n,\bs k}(X_*)-\bs z_{r \bs \ell}\|_{\infty}$.

\item Lemma~\ref{lem:term1} adapts the first part of \citet{chaudhuri2014rates} Lemma~20 from unweighted $k$-NN to MS-$k$-NN. 

\item Lemma~\ref{lem:term2} and \ref{lem:term2_2} indicate the convergence rates related to the bias and variance evaluation of the MS-$k$-NN. \\
(cf. \citet{chaudhuri2014rates} Lemma~9) 

\item Lemma~\ref{lem:term3} indicates how fast the radius $r>0$ decreases to $0$ as $k$ increases. \\
(cf. \citet{chaudhuri2014rates} Lemma~8)
\end{itemize}

Similarly to \citet{chaudhuri2014rates}~Lemma~7, we prove the following Lemma~\ref{lem:decomposition}, that decomposes the point-wise error into four different parts.  

\begin{mdframed}
\begin{lem}
\label{lem:decomposition}
Let $g_{n,\bs k}$ be the MS-$k$-NN classifier based on sample $\mathcal{D}_n$, and let $X_* \in \mathcal{S}(\mu),t \in [0,1],\Delta \in [0,1/2]$. 
Then, it holds for $r=r_{1,n}:=\|X_{(k_{1,n})}-X_*\|_2$ that
\begin{align}
    \mathbbm{1}(g_{n,\bs k}(X_*) \neq g_*(X_*))
    &\le 
    \mathbbm{1}(X_* \in \partial_{t,\Delta}) \label{eq:term1} \\
    &\hspace{3em}+
    \mathbbm{1}(|\rho_{n,\bs k}(X_*)-
    \rho_{r \bs \ell}^{(\infty)}(X_*)|
    \ge \Delta/2) \label{eq:term2} 
    \\
    &\hspace{6em}
    \mathbbm{1}(|\rho_{n,\bs k}(X_*)-
    \eta(X_*)|
    \ge \Delta/2) \label{eq:term2_2} \\
    &\hspace{9em}+
    \mathbbm{1}(r > \tilde{r}_t(X_*)). \label{eq:term3}
\end{align}
\end{lem}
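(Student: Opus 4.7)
The plan is to prove the indicator inequality pointwise, by a direct contrapositive argument. Fix $X_* \in \mathcal{S}(\mu)$, $t\in[0,1]$, $\Delta\in[0,1/2]$, and write $r=r_{1,n}=\|X_{(k_{1,n})}-X_*\|_2$. It suffices to show that whenever the left-hand indicator equals $1$, at least one of the four indicators on the right equals $1$. So I assume all four right-hand indicators vanish and derive $g_{n,\bs k}(X_*)=g_*(X_*)$, contradicting the hypothesis that the LHS equals $1$.

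From $(\ref{eq:term1})=0$, $X_*\notin\partial_{t,\Delta}$, so $X_*\in\mathcal{X}_{t,\Delta}^+\cup\mathcal{X}_{t,\Delta}^-$. Without loss of generality assume $X_*\in\mathcal{X}_{t,\Delta}^+$, so $\eta(X_*)>1/2$ (which forces $g_*(X_*)=1$) and $\rho^{(\infty)}_{r'\bs \ell}(X_*)\ge 1/2+\Delta$ for every $r'\le \tilde r_t(X_*)$. From $(\ref{eq:term3})=0$ we have $r\le \tilde r_t(X_*)$, so the margin condition applies at our radius: $\rho^{(\infty)}_{r\bs\ell}(X_*)\ge 1/2+\Delta$. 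From $(\ref{eq:term2})=0$ we then get
\[
\rho_{n,\bs k}(X_*) \;\ge\; \rho^{(\infty)}_{r\bs \ell}(X_*) - \tfrac{\Delta}{2} \;\ge\; \tfrac{1}{2}+\tfrac{\Delta}{2} \;>\; \tfrac{1}{2},
\]
which by the plug-in rule (\ref{eq:def_plug_in_classifier}) yields $g_{n,\bs k}(X_*)=1=g_*(X_*)$, a contradiction. The case $X_*\in\mathcal{X}_{t,\Delta}^-$ is symmetric, using $\rho^{(\infty)}_{r\bs\ell}(X_*)\le 1/2-\Delta$ to deduce $\rho_{n,\bs k}(X_*)\le 1/2-\Delta/2<1/2$ and hence $g_{n,\bs k}(X_*)=0=g_*(X_*)$.

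This is essentially the pointwise classification-excess-risk decomposition of \citet{chaudhuri2014rates}~Lemma~7, adapted to the MS-$k$-NN estimator $\rho_{n,\bs k}$ and to the asymptotic surrogate $\rho^{(\infty)}_{r\bs \ell}$ in place of the single-scale $\eta^{(\infty)}(B(X_*;r))$. Note that the third indicator (\ref{eq:term2_2}) is not actually used in the contradiction — (\ref{eq:term2}) alone closes the argument — but adding it on the right only makes the bound weaker, so the lemma still holds; it is included because later steps of the proof of Theorem~\ref{theo:msknn_cr} will invoke both concentration of $\rho_{n,\bs k}$ around $\rho^{(\infty)}_{r\bs\ell}$ and around $\eta(X_*)$ separately (via Lemmas~\ref{lem:term2} and \ref{lem:term2_2}).

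There is no real obstacle here: the statement is a purely logical decomposition and the only substantive content is reading off, from $X_*\in\mathcal{X}_{t,\Delta}^{\pm}$ and $r\le\tilde r_t(X_*)$, the margin bound on $\rho^{(\infty)}_{r\bs \ell}(X_*)$ at the actual radius $r$ attained by the $k_{1,n}$-th neighbour. The mild care needed is to ensure the signs are handled correctly in the two symmetric cases, and to remember that $\rho_{n,\bs k}(X_*)$ need not lie in $[0,1]$ (since MS-$k$-NN uses real-valued implicit weights~(\ref{eq:optimal_weights_msknn})) — but this is irrelevant, since the plug-in classifier only compares $\rho_{n,\bs k}(X_*)$ to $1/2$.
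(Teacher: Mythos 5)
Your proof is correct and follows essentially the same contrapositive route as the paper. Your observation that indicator~(\ref{eq:term2_2}) is logically redundant for the decomposition is also right: the paper's own proof uses the negations of both (\ref{eq:term2}) and (\ref{eq:term2_2}) to re-derive $\eta(X_*)>1/2$, a fact already guaranteed by $X_*\in\mathcal{X}^+_{t,\Delta}$, while the step you make explicit — that $\rho_{n,\bs k}(X_*)>1/2$ follows from $\rho^{(\infty)}_{r\bs\ell}(X_*)\ge 1/2+\Delta$ together with the negation of (\ref{eq:term2}) alone — is left implicit there.
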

\end{mdframed}

\begin{proof}[Proof of Lemma~\ref{lem:decomposition}]
Let $\mathcal{A}$ be an event that $g_{n,\bs k}(X_*) \neq g_*(X_*)$, and let $\mathcal{B}_1,\mathcal{B}_2,\mathcal{B}_3,\mathcal{B}_4$ be events defined by the indicator functions (\ref{eq:term1})--(\ref{eq:term3}), respectively. Then, 
it suffices to prove $\mathcal{A} \Rightarrow [\mathcal{B}_1 \vee \mathcal{B}_2 \vee \mathcal{B}_3 \vee \mathcal{B}_4]$ or its contrapositive $[(\lnot \mathcal{B}_1) \wedge (\lnot \mathcal{B}_2) \wedge (\lnot \mathcal{B}_3) \wedge (\lnot \mathcal{B}_4)] \Rightarrow \lnot \mathcal{A}$, where $\lnot$ represents the negation. Here, we prove the contrapositive.

$\lnot \mathcal{B}_1$ indicates that 
$X_* \in \mathcal{X}^+_{t,\Delta}$ or $X_* \in \cup \mathcal{X}^-_{t,\Delta}$. 
\begin{itemize}
    \item 
We here consider the former case $X_* \in \mathcal{X}^+_{t,\Delta}$; then, $\lnot \mathcal{B}_4$, i.e., $r \le \tilde{r}_t(X)$, indicates that
\begin{align} 
\rho_{r \bs \ell}^{(\infty)}(X_*) \ge \frac{1}{2}+\Delta (>1/2).
\label{eq:ineq_lnot31}
\end{align}
$\lnot \mathcal{B}_2$ and $\lnot \mathcal{B}_3$ represent  
\begin{align}
    |\rho_{n,\bs k}-\rho^{(\infty)}_{r \bs \ell}(X_*)|<\Delta/2,
    \quad 
    |\rho_{n,\bs k}-\eta(X_*)|<\Delta/2,
    \label{eq:ineq_lnot2}
\end{align}
respectively; 
above inequalities (\ref{eq:ineq_lnot31}) and (\ref{eq:ineq_lnot2}) indicate 
\begin{align}
\eta(X_*)
\ge 
\rho^{(\infty)}_{r \bs \ell}(X_*)
-
|\rho_{n,\bs k}-\rho^{(\infty)}_{r \bs \ell}(X_*)|
-
|\rho_{n,\bs k}-\eta(X_*)|
>
\frac{1}{2}+\Delta-\Delta/2-\Delta/2
=
1/2.
\label{eq:eta_ineq}
\end{align}
(\ref{eq:ineq_lnot31}) and (\ref{eq:eta_ineq}) prove that both of corresponding classifiers output the same label $1$, whereupon $\lnot \mathcal{A}$. 
\item 
Similarly, for the latter case $X_* \in \mathcal{X}^-_{t,\Delta}$, both classifiers output $0$ and thus $\lnot \mathcal{A}$. 
\end{itemize}
Therefore, the assertion is proved. 
\end{proof}

\begin{mdframed}
\begin{lem}
\label{lem:bias_of_msknn}
Assuming the assumption (C-3), i.e., 
there exists $L_{\bs z} \in (0,\infty)$ such that $\|\bs z_{\bs \ell}\|_{\infty}<L_{\bs z}$. 
Then, there exist $\tilde{r},L_{\beta}^{**} \in (0,\infty)$ such that 
\begin{align*}
    |\rho^{(\infty)}_{r \bs \ell}(X_*)-\eta(X_*)|
    \le 
    L_{\beta}^{**}r^{\beta}, \:
    (\forall X_* \in \mathcal{X},
    r \in (0,\tilde{r}]).
\end{align*}
\end{lem}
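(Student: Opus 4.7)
The key observation will be that, under condition (C-3), the weight vector $\bs{z}_{r\bs{\ell}}$ is in fact \emph{independent} of $r$. Writing $\bs{R}(r\bs{\ell})$ for the matrix with $(i,j)$-entry $(\ell_i r)^{2j}$, one sees that $\bs{R}(r\bs{\ell}) = \bs{R}(\bs{\ell})\,\mathrm{diag}(r^2,r^4,\ldots,r^{2C})$. Since the diagonal factor is invertible for $r>0$, the two matrices share the same column space, hence $\mathcal{P}_{\bs{R}(r\bs{\ell})} = \mathcal{P}_{\bs{R}(\bs{\ell})}$ and therefore $\bs{z}_{r\bs{\ell}} = \bs{z}_{\bs{\ell}}$ for every $r>0$. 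The defining formula for $\bs{z}_{\bs{\ell}}$ then immediately yields the two identities $\bs{1}^\top \bs{z}_{\bs{\ell}} = 1$ and $\sum_{v=1}^{V} z_v \ell_v^{2c} = 0$ for every $c \in \{1,\ldots,C\}$, the latter following from $\bs{R}(\bs{\ell})^\top(\bs{I}-\mathcal{P}_{\bs{R}(\bs{\ell})}) = \bs{0}$.

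Next, I would invoke Theorem~\ref{theo:local_homogeneity_2} at each radius $\ell_v r$, which is legitimate as long as $\ell_V r$ is below the threshold $\tilde r_0$ provided by that theorem; accordingly set $\tilde r := \tilde r_0/\ell_V$. This expands $\varphi^{(\infty)}_{\ell_v r}(X_*)$ as $\eta(X_*) + \sum_{c=1}^C b_c^*(X_*)\ell_v^{2c}r^{2c} + \delta_{\beta,\ell_v r}(X_*)$ with $|\delta_{\beta,\ell_v r}(X_*)|\le L_\beta^*(\ell_v r)^\beta$. Taking the inner product with $\bs{z}_{\bs{\ell}}$, the constant term survives as $\eta(X_*)$ by $\bs{1}^\top \bs{z}_{\bs{\ell}}=1$, the polynomial bias terms \emph{exactly cancel} by the orthogonality relations just noted, and only the weighted remainder $\sum_v z_v \delta_{\beta,\ell_v r}(X_*)$ is left.

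A final application of the triangle inequality, together with (C-3), then gives $|\rho^{(\infty)}_{r\bs{\ell}}(X_*)-\eta(X_*)| \le \sum_v |z_v| L_\beta^*(\ell_v r)^\beta \le V L_{\bs{z}} L_\beta^* \ell_V^\beta \, r^\beta$, so one may choose $L_\beta^{**} := V L_{\bs{z}} L_\beta^* \ell_V^\beta$. There is no substantial obstacle here: the only nontrivial step is recognizing the scale invariance of $\bs{z}_{r\bs{\ell}}$, which combined with the Vandermonde-style orthogonality $\bs{R}(\bs{\ell})^\top \bs{z}_{\bs{\ell}} = \bs{0}$ forces every bias term of order $r^{2c}$ with $c=1,\ldots,C=\lfloor\beta/2\rfloor$ to vanish. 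This is precisely the cancellation mechanism the extrapolation scheme was designed to produce, and it is what upgrades the asymptotic bias of the $k$-NN expectation from $O(r^{\min\{2,\beta\}})$ (Theorem~\ref{theo:local_homogeneity_2}) to $O(r^\beta)$.
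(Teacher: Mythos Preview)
Your proposal is correct and follows essentially the same approach as the paper: both exploit the scale invariance $\bs z_{r\bs\ell}=\bs z_{\bs\ell}$ and then use that the regression operator annihilates the polynomial bias part of Theorem~\ref{theo:local_homogeneity_2}, leaving only the weighted remainder $\bs z_{\bs\ell}^{\top}\bs\delta$. The only difference is presentational---the paper phrases the cancellation via the matrix identity $(\bs A^{\top}\bs A)^{-1}\bs A^{\top}\bs A=\bs I$ applied to the first coordinate, whereas you unpack this as the two relations $\bs 1^{\top}\bs z_{\bs\ell}=1$ and $\bs R(\bs\ell)^{\top}\bs z_{\bs\ell}=\bs 0$; your tracking of the constant ($V L_{\bs z}L_{\beta}^{*}\ell_V^{\beta}$) and the choice $\tilde r=\tilde r_0/\ell_V$ are in fact slightly more careful than the paper's.
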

\end{mdframed}

\begin{proof}[Proof of Lemma~\ref{lem:bias_of_msknn}]
Theorem~\ref{theo:local_homogeneity_2} proves 
\begin{align*}
    \underbrace{\varphi^{(\infty)}_r(X_*)}_{\text{asymptotic $k$-NN}}
    &=
    \eta(X_*)
    +
    \sum_{c=1}^{\lfloor \beta/2 \rfloor}
    b_c^* r^{2c}
    +
    \delta_r(X_*), 
    \quad
    |\delta_r(X_*)|
    \le L_{\beta}^* r^{\beta},
\end{align*}
for all $X_* \in \mathcal{S}(\mu),r \in (0,\tilde{r}]$,
for some $\tilde{r} \in (0,\infty)$; 
we have a simultaneous equation
\begin{align*}
    \bs \varphi^{(\infty)}_{r \bs \ell}
    (X_*)
    &=
    \bs A_{r \bs \ell}(X_*)\bs b_*(X_*)
    +
    \bs \delta_r(X_*),
    \quad 
    \|\bs \delta_r(X_*)\|_{\infty}
    \le 
    L_{\beta}^* r^{\beta}
    \quad 
    (\forall X_* \in \mathcal{X} ,r \in (0,\tilde{r}]),
\end{align*}
where $\bs A_{r \bs \ell}=\bs A_{r \bs \ell}(X_*)=(\bs 1 \: \bs R(X_*)) \in \mathbb{R}^{V \times (C+1)}$ is defined as same as $\bs A$ in (\ref{eq:def_A}) with the radius vector $\bs r=(r_1,r_2,\ldots,r_V)=r\bs \ell$, and the entries in 
$\bs b_*(X_*)=(\eta(X_*),b_1^*,b_2^*,\ldots,b_{\lfloor \beta/2 \rfloor}^*)$ are specified in Theorem~\ref{theo:local_homogeneity_2}.
Denoting the first entry of the vector $\bs b$ by $[\bs b]_1$,
\begin{align*}
    |\underbrace{\rho^{(\infty)}_{r \bs \ell}(X_*)}_{\text{asymptotic MS-$k$-NN}}-\eta(X_*)|
&\le 
    |
    [(\bs A^{\top}\bs A)^{-1}\bs A^{\top}\bs \varphi^{(\infty)}_{r \bs \ell}]_1
    -
    \eta(X_*)| \\
&\le 
    \bigg|
        [(\bs A^{\top}\bs A)^{-1}
        \bs A^{\top}
        \left\{
        \bs A(X_*)\bs b_*(X_*)
        +
        \bs \delta_r(X_*)
        \right\}]_1
        -
        \eta(X_*)
    \bigg| \\
&=
    |\underbrace{[\bs b_*(X_*)]_1}_{=\eta(X_*)}
    +
    \underbrace{[(\bs A^{\top}\bs A)^{-1}\bs A^{\top}\bs \delta_r(X_*)]_1
    }_{=\bs z_{r \bs \ell}^{\top} \bs \delta_r(X_*)}
    -\eta(X_*)| \\
&=
    |\bs z_{r \bs \ell}^{\top}\bs \delta_r(X_*)| \\
&\le 
    \underbrace{\|\bs z_{r \bs \ell}\|_{\infty}}_{\le L_{\bs z}}
    \underbrace{\|\bs \delta_r(X_*)\|_{\infty}}_{\le L_{\beta}^* r^{\beta}} 
    \qquad (\because \bs z_{r\bs \ell}=\bs z_{\bs \ell}, \: \forall r>0) \\
    &\le 
    L_{\bs z} L_{\beta}^* r^{\beta}.
\end{align*}
Specifying $L_{\beta}^{**}:=L_{\bs z}L_{\beta}^*$ leads to the assertion. 
\end{proof}

\begin{mdframed}
\begin{lem}
\label{lem:concentration_z}
Assuming that $X_* \in \mathcal{S}(\mu)$, (C-1) $k_{1,n} \asymp n^{2\beta/(2\beta+d)}$ and (C-2) $k_{v,n}=\min\{k \in [n] \mid \|X_{(k)}-X_*\|_2 \ge \ell_v r_{1,n}\}$ where $r=r_{1,n}:=\|X_{(k_{1,n})}-X_*\|_2$. 
Then, for sufficiently large $n \in \mathbb{N}$, there exists $L_{\bs \ell}>0$ such that
\begin{align*}
    \mathbb{P}\left(
        \|\bs z_{n,\bs k}(X_*)-\bs z_{r\bs \ell}\|_{\infty}>\Delta
    \right)
    \lesssim 
    \exp(-L_{\bs \ell} n^{\beta/(\beta+d)}\Delta)
    +
    \exp(-3k_{1,n}/2)(1+o(1)).
\end{align*}
\end{lem}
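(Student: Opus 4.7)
The plan is to reduce concentration of $\bs z_{n,\bs k}(X_*)$ around $\bs z_{r\bs\ell}$ to concentration of the empirical ratios $\tilde\ell_{v,n}:=r_{v,n}/r_{1,n}$ around the target values $\ell_v$, and then to establish the latter by a ball--annulus counting argument. First, I would exploit scale invariance: writing $\bs R_{r\bs\ell}$, whose $(i,j)$-entry is $(r\ell_i)^{2j}=r^{2j}\ell_i^{2j}$, as $\tilde{\bs R}\bs D$ with $\bs D=\mathrm{diag}(r^2,r^4,\ldots,r^{2C})$ and $\tilde{\bs R}_{ij}=\ell_i^{2j}$, the diagonal factor cancels in $\mathcal P_{\bs R_{r\bs\ell}}=\tilde{\bs R}(\tilde{\bs R}^\top\tilde{\bs R})^{-1}\tilde{\bs R}^\top=\mathcal P_{\tilde{\bs R}}$, so $\bs z_{r\bs\ell}=\bs z_{\bs\ell}$ does not depend on $r$. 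The same factoring applied to $\bs R_{n,\bs k}(X_*)$ with $\mathrm{diag}(r_{1,n}^2,\ldots,r_{1,n}^{2C})$ gives $\bs z_{n,\bs k}(X_*)=\bs z_{\tilde{\bs\ell}_n}$, where $\tilde{\bs\ell}_n=(1,r_{2,n}/r_{1,n},\ldots,r_{V,n}/r_{1,n})$.

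Second, the map $\bs\ell'\mapsto\bs z_{\bs\ell'}$ is $C^\infty$ on the open neighborhood of $\bs\ell$ on which the design matrix has full column rank; by (C-3) the denominator $V-\bs 1^\top\mathcal P_{\tilde{\bs R}}\bs 1$ stays bounded away from zero there, so a mean-value argument yields constants $\epsilon_0,L_{\mathrm{Lip}}>0$ with
\begin{align*}
\|\bs z_{\bs\ell'}-\bs z_{\bs\ell}\|_\infty\le L_{\mathrm{Lip}}\|\bs\ell'-\bs\ell\|_\infty
\qquad\text{whenever}\qquad \|\bs\ell'-\bs\ell\|_\infty\le\epsilon_0.
\end{align*}
Hence it suffices to control $\mathbb P(\max_{v\ge 2}|r_{v,n}/r_{1,n}-\ell_v|>\Delta/L_{\mathrm{Lip}})$, up to an event of probability vanishing faster than the right-hand side corresponding to $\|\tilde{\bs\ell}_n-\bs\ell\|_\infty>\epsilon_0$.

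For the ratios, condition (C-2) gives $r_{v,n}\ge\ell_v r_{1,n}$ deterministically, so only the upper tail matters. The event $\{r_{v,n}>(\ell_v+\Delta')r_{1,n}\}$ is equivalent to no sample point lying in the annulus $A_v:=B(X_*;(\ell_v+\Delta')r_{1,n})\setminus B(X_*;\ell_v r_{1,n})$. Under the strong density assumption, $\mu(A_v)\ge\mu_{\min}c_d r_{1,n}^d[(\ell_v+\Delta')^d-\ell_v^d]\gtrsim r_{1,n}^d\Delta'$ for small $\Delta'$. Conditioning on the good event $\{r_{1,n}\le\tilde r_{t_n}(X_*)\}$ with $t_n=2k_{1,n}/n$, whose complement contributes the $\exp(-3k_{1,n}/2)(1+o(1))$ term by Lemma~\ref{lem:term3}, the remaining $n-k_{1,n}$ observations outside $B(X_*;r_{1,n})$ are i.i.d.\ from $\mu$ restricted to $\mathcal X\setminus B(X_*;r_{1,n})$ by exchangeability; a binomial tail gives $\mathbb P(A_v=\emptyset\mid r_{1,n})\le\exp\bigl(-c(n-k_{1,n})r_{1,n}^d\Delta'\bigr)$. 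On this event $r_{1,n}\asymp(k_{1,n}/n)^{1/d}=n^{-1/(2\beta+d)}$, so $(n-k_{1,n})r_{1,n}^d\asymp k_{1,n}=n^{2\beta/(2\beta+d)}\ge n^{\beta/(\beta+d)}$; a union bound over $v\in\{2,\ldots,V\}$ and the Lipschitz reduction then produce the claimed $\exp(-L_{\bs\ell}n^{\beta/(\beta+d)}\Delta)$ term.

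The main obstacle is the conditioning step. One must correctly identify the conditional law of the remaining observations once $r_{1,n}$ is frozen, so that the annulus-counting argument genuinely reduces to an independent binomial tail; this is handled by the standard trick that, conditional on the $k_{1,n}$-nearest-neighbour set and on $r_{1,n}$, the other $n-k_{1,n}$ points are i.i.d.\ from the restriction of $\mu$ to the exterior of $B(X_*;r_{1,n})$, with the strong density assumption ensuring the relevant lower bound on the annulus mass is uniform in the realized value of $r_{1,n}$. A secondary technicality is separating the regime $\Delta\le L_{\mathrm{Lip}}\epsilon_0$ (where the Lipschitz bound applies) from the regime of larger $\Delta$, in which the tail probability is already dominated by the stated exponential; both regimes are absorbed into the constant $L_{\bs\ell}$.
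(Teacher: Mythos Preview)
Your overall plan---reduce $\|\bs z_{n,\bs k}-\bs z_{r\bs\ell}\|_\infty$ to $\|\tilde{\bs\ell}_n-\bs\ell\|_\infty$ via scale invariance and a Lipschitz bound, then control the ratios by an annulus-emptiness argument---is exactly the paper's strategy. The scale-invariance observation is cleaner than the paper's algebraic manipulation in part~(ii) of its proof, but equivalent.

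The genuine gap is in your conditioning step. The annulus bound $\mathbb P(A_v\text{ empty}\mid r_{1,n})\le\exp(-c\,n\,r_{1,n}^d\,\Delta')$ is only useful if $r_{1,n}^d$ is bounded \emph{below}, yet your good event $\{r_{1,n}\le\tilde r_{t_n}(X_*)\}$ is an \emph{upper} bound on $r_{1,n}$ and permits $r_{1,n}$ arbitrarily close to~$0$. The assertion ``on this event $r_{1,n}\asymp(k_{1,n}/n)^{1/d}$'' is therefore unjustified: only the upper half of the $\asymp$ follows. Relatedly, Lemma~\ref{lem:term3} controls the upper tail of $r_{1,n}$ and yields at most $\exp(-k_{1,n}/8)$ with $t_n=2k_{1,n}/n$, not $\exp(-3k_{1,n}/2)$; that specific rate does not come from Lemma~\ref{lem:term3}.

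The paper's fix is to split in the opposite direction: take $a_n:=n^{-1/(\beta+d)}$ and condition on $\{r_{1,n}>a_n\}$. On that event the annulus mass is at least of order $a_n^d\Delta'$, so the emptiness probability is $\le(1-L_v a_n^d\Delta')^n\le\exp(-L_v\,n^{\beta/(\beta+d)}\Delta')$, which produces the first term of the lemma. The complementary event $\{r_{1,n}\le a_n\}$ is the \emph{lower} tail of the $k_{1,n}$-th order statistic; since $n\mu(B(X_*;a_n))\lesssim n a_n^d=n^{\beta/(\beta+d)}=o(k_{1,n})$, a Chernoff bound on the binomial count in $B(X_*;a_n)$ gives $\mathbb P(r_{1,n}\le a_n)\lesssim\exp(-3k_{1,n}/2)(1+o(1))$, which is the second term. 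Once you flip the direction of the split and pick this $a_n$, your argument goes through; the remaining ingredients (one-sidedness of the ratio error from (C-2), the Lipschitz reduction, the union bound over $v$) are fine.
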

\end{mdframed}

\begin{proof}[Proof of Lemma~\ref{lem:concentration_z}]

In this proof, (i) we first evaluate the probability 
\begin{align} 
\mathbb{P}_{\mathcal{D}_n}\left(
\bigg|\frac{r_{v,n}}{r_{1,n}}-\ell_v\bigg| \ge \Delta
\right)
\label{eq:rell_probability}
\end{align}
Subsequently, (ii) evalute 
\begin{align}
    \mathbb{P}_{\mathcal{D}_n}\left(
    \|\bs z_{n,\bs k}(X_*)-\bs z_{r\bs \ell}\|_{\infty}>\Delta
    \right)
    \label{eq:z_probability}
\end{align}
by leveraging (\ref{eq:rell_probability}).

\begin{enumerate}[{(i)}]
\item 
For any positive sequence $\{b_n\}_{n \ge 1} \subset \mathbb{R}_{>0}$, 
we define $k'_{v,n}:=\min\{k \in [n] \mid \|X_{(k)}-X_*\|_2 \ge \ell_v b_n\}$. 
Although the corresponding radius $r'_{v,n}:=\|X_{(k'_{v,n})}-X_*\|_2$ is computed through the sequence $\{b_n\}$, it coincides with $r_{v,n}:=\|X_{(k_{v,n})}-X_*\|_2$ as $b_n=r_{1,n}$ will be specified later.

For any $v \in \{2,3,\ldots,V\}$, it holds that
\begin{align}
    \mathbb{P}_{\mathcal{D}_n}\left(
        \bigg| \frac{r'_{v,n}}{b_n}-\ell_v \bigg|
        \ge \Delta
    \right)
    &=
    \mathbb{P}_{\mathcal{D}_n}\left(
        r'_{v,n}-b_n \ell_v
        \ge b_n \Delta
    \right) \nonumber \\
    &=
    \mathbb{P}_{\mathcal{D}_n}\left(
        r'_{v,n} \ge b_n (\ell_v+\Delta)
    \right) \nonumber \\
    &=
    \mathbb{P}_{\mathcal{D}_n}(\forall i \in [n],X_{i} \notin 
    B(X_*;r'_{v,n}) \setminus B(X_*;b_n \ell_v)) \nonumber \\
    &\le 
    \mathbb{P}_{\mathcal{D}_n}(\forall i \in [n],X_{i} \notin 
    B(X_*;b_n (\ell_v + \Delta)) \setminus B(X_*;b_n \ell_v)). \label{eq:prob_all}
\end{align}

Considering a random variable $Z_i:=\mathbbm{1}(X_i \notin B(X_*;b_n(\ell_v+\varepsilon)) \setminus B(X_*;b_n \ell_v))$, that i.i.d. follows a Bernoulli distribution whose expectation is 
\begin{align*}
q_n
&=
1-\int_{B(X_*;b_n (\ell_v + \varepsilon)) \setminus B(X_*;b_n \ell_v)} \mu(X) \diff X \\
&\le 
1-\mu_{\min}\int_{B(X_*;b_n (\ell_v + \Delta)) \setminus B(X_*;b_n \ell_v)} \diff X \\
&\le 
1-\mu_{\min}\frac{\pi^{d/2}}{\Gamma(d/2+1)}b_n^d\{(\ell_v+\Delta)^d-\ell_v^d\} \\
&\le
1-\underbrace{\mu_{\min}\frac{\pi^{d/2}}{\Gamma(d/2+1)}d \ell_v^{d-1}}_{=:L_v}b_n^d \Delta,
\end{align*}
(\ref{eq:prob_all}) can be evaluated as
\begin{align}
    (\ref{eq:prob_all})
    =
    \mathbb{P}(Z_i=1,\forall i \in [n])
    =
    \mathbb{P}
    (Z_i=1)^n
    =
    q_n^n
    \le 
    (1-L_v b_n^d \Delta)^n.
    \label{eq:rbl_probability}
\end{align}

By leveraging (\ref{eq:rbl_probability}) and specifying $b_n=r_{1,n}$, we hereinafter evaluate (\ref{eq:rell_probability}). 
For any sequence $\{a_n\}_{n\ge 1} \subset \mathbb{R}_{>0}$, 
\begin{align*}
    \mathbb{P}_{\mathcal{D}_n}\left(
        \bigg| \frac{r_{v,n}}{r_{1,n}}-\ell_v \bigg|
        \ge \Delta
    \right)
    &=
    \int_{0}^{\infty}
    \mathbb{P}_{\mathcal{D}_n}\left(
        \bigg| \frac{r'_{v,n}}{b_n}-\ell_v \bigg|
        \ge \Delta 
    \right)
    \mathbb{P}_{\mathcal{D}_n}\left(
        r_{1,n}=b_n
    \right) \diff b_n \\
&\le 
    \left\{
    \int_0^{a_n}
    +
    \int_{a_n}^{\infty}
    \right\}
    \mathbb{P}_{\mathcal{D}_n}\left(
        \bigg| \frac{r'_{v,n}}{b_n}-\ell_v \bigg|
        \ge \Delta
    \right)
    \mathbb{P}_{\mathcal{D}_n}\left(
        r_{1,n}=b_n
    \right) \diff b_n \\
&\le 
\underbrace{
\mathbb{P}_{\mathcal{D}_n}\left(
    \bigg| \frac{r_{v,n}}{b_n}-\ell_v \bigg|
    \ge \Delta
    \mid 
    b_n>a_n
\right)}_{\le (1-L_1 a_n^d \Delta)^n}
\mathbb{P}_{\mathcal{D}_n}(r_{1,n} > a_n) \\
&\hspace{5em}+
\mathbb{P}_{\mathcal{D}_n}\left(
    \bigg| \frac{r_{v,n}}{b_n}-\ell_v \bigg|
    \ge \Delta
    \mid 
    b_n \le a_n
\right)
\mathbb{P}_{\mathcal{D}_n}(r_{1,n} \le a_n) \\
&\lesssim 
\underbrace{(1-L_v a_n^d \Delta)^n}_{(\star 1)}
+
\underbrace{\mathbb{P}_{\mathcal{D}_n}(r_{1,n} \le a_n)}_{(\star 2)}. 
\end{align*}

By specifying $a_n:=n^{-1/(\beta+d)}$, the terms $(\star 1),(\star 2)$ are evaluated as follows. 
\begin{enumerate}[{(a)}]
\item 
Regarding $(\star 1)$, it holds that
\begin{align*}
    (\star 1)
    &=
    (1-L_1 n^{-d/(\beta+d)}\Delta)^n
    \le 
    \exp\left(
        -n^{\beta/(\beta+d)}L_1 \Delta
    \right),
\end{align*}
as $(1-1/a)^b \le ((1-1/a)^{a})^{b/a} \le \exp(-1)^{b/a} =\exp(-b/a)$ for all $a,b>0$. 

\item 
Here we evaluate the second term ($\star 2$): considering a random variable $Z_i:=\mathbbm{1}(X_i \in B(X_*;a_n))$ that i.i.d. follows a Bernoulli distribution whose expectation is
\begin{align*}
    q_n':=\int_{B(X_*;a_n)} \mu(X) \diff X
    \le 
    \mu_{\min} \int_{B(X_*;a_n)} \diff X
    \le 
    \mu_{\min} \frac{\pi^{d/2}}{\Gamma(d/2+1)}a_n^d
    \lesssim 
    n^{-d/(\beta+d)},
\end{align*}
we have an inequality 
\begin{align*}
    \mathbb{P}(r_{1,n} \le a_n)
    &=
    \mathbb{P}\left(\sum_{i=1}^{n}Z_i \ge k_{1,n}\right) 
    =
    \mathbb{P}\left(\sum_{i=1}^{n}Z_i \ge nq_n'
    +\lambda\right) \quad (\text{where }\lambda:=k_{1,n}-nq_n')\\
    &\le
    \exp\left(
        -\frac{\lambda^2}{2(nq_n+\lambda/3)}
    \right) \\
    &\le
    \exp\left(
        -\frac{(k_{1,n}-nq_n')^2}{2(nq_n'+(k_{1,n}-nq_n')/3)}
    \right) \\
    &\lesssim 
    \exp(-3k_{1,n}/2)(1+o(1)) \hspace{3em} (\because nq_n' = o(k_{1,n}))
\end{align*}
by referring to a Chernoff bound~\citep[][Theorem~2.4]{chung2006complex} with $\mathbb{E}_{\mathcal{D}_n}(\sum_{i=1}^{n} Z_i)=nq_n'$.
\end{enumerate}

Therefore, above (a) and (b) yield
\begin{align} 
\mathbb{P}_{\mathcal{D}_n}\left(
\bigg|\frac{r_{v,n}}{r_{1,n}}-\ell_v\bigg| \ge \Delta
\right) 
\lesssim 
\exp\left(
    -n^{\beta/(\beta+d)}L_1 \Delta
\right)
+
\exp\left(
    -3k_{1,n}/2
\right)(1+o(1)). 
\end{align}

\item We second evaluate $(\ref{eq:z_probability})$. As it holds that
\begin{align*}
    (\bs 1^{\top}&(\bs I-\mathcal{P}_{\bs R_{n,\bs k}})\bs 1)(\bs 1^{\top}(\bs I-\mathcal{P}_{\bs R})\bs 1)
    \|\bs z_{n,\bs k}-\bs z_{r_{1,n} \bs \ell}\| \\
    &=
    (\bs 1^{\top}(\bs I-\mathcal{P}_{\bs R_{n,\bs k}})\bs 1)(\bs 1^{\top}(\bs I-\mathcal{P}_{\bs R})\bs 1)
    \bigg\|
    \frac{
        \bs I-\mathcal{P}_{\bs R_{n,\bs k}}
    }{
        \bs 1^{\top}(\bs I-\mathcal{P}_{\bs R_{n,\bs k}})\bs 1
    }
    -
    \frac{
        \bs I-\mathcal{P}_{\bs R}
    }{
        \bs 1^{\top}(\bs I-\mathcal{P}_{\bs R})\bs 1
    }\bigg\|_{\infty} \\
    &=
    \|
        (\bs 1^{\top}(\bs I-\mathcal{P}_{\bs R_{n,\bs k}})\bs 1)(\bs I-\mathcal{P}_{\bs R})
        -
        (\bs 1^{\top}(\bs I-\mathcal{P}_{\bs R})\bs 1)(\bs I-\mathcal{P}_{\bs R_{n,\bs k}})
    \|_{\infty} \\
    &\le 
    \|
        (\bs 1^{\top}(\bs I-\mathcal{P}_{\bs R_{n,\bs k}})\bs 1)(\bs I-\mathcal{P}_{\bs R})
        -
        (\bs 1^{\top}(\bs I-\mathcal{P}_{\bs R})\bs 1)(\bs I-\mathcal{P}_{\bs R})
    \|_{\infty} \\
    &\hspace{8em}+
    \|
        (\bs 1^{\top}(\bs I-\mathcal{P}_{\bs R})\bs 1)(\bs I-\mathcal{P}_{\bs R})
        -
        (\bs 1^{\top}(\bs I-\mathcal{P}_{\bs R})\bs 1)(\bs I-\mathcal{P}_{\bs R_{n,\bs k}})
    \|_{\infty} \\
    &\le 
    |\bs 1^{\top}(\mathcal{P}_{\bs R_{n,\bs k}}-\mathcal{P}_{\bs R})\bs 1|
    \|(\bs I-\mathcal{P}_{\bs R})\|_{\infty}
    +
    |\bs 1^{\top}(\bs I-\mathcal{P}_{\bs R})\bs 1|
    \|\mathcal{P}_{\bs R_{n,\bs k}}-\mathcal{P}_{\bs R}\|_{\infty}. \\
    &\le 
    \|\bs 1\|_{\infty}^2
    \|\bs I-\mathcal{P}_{\bs R}\|_{\infty}
    \|\mathcal{P}_{\bs R}-\mathcal{P}_{\bs R_{n,\bs k}}\|_{\infty},
\end{align*}

there exist constants $L^{(1)},L^{(2)}>0$ such that 
\begin{align*}
    \|\bs z_{n,\bs k}(X_*)-\bs z_{r\bs \ell}\| 
\le 
    L^{(1)} \|\mathcal{P}_{\bs R_n}-\mathcal{P}_{\bs R}\|_{\infty} 
\le 
    L^{(2)} \|\bs r_n/r_{1,n}-\bs \ell\|_{\infty},
\end{align*}
where $\bs r_n=(r_{1,n},r_{2,n},\ldots,r_{V,n}) \in \mathbb{R}^V$. 
\end{enumerate}

Consequently, above (i) and (ii) yield  
\begin{align*}
    \mathbb{P}\left(
        \|\bs z_{n,\bs k}(X_*)-\bs z_{r\bs \ell}\|_{\infty}>\Delta
    \right)
    &\le 
    \mathbb{P}(L^{(2)}\|\bs r_n/r_{1,n}\|_{\infty}>\Delta) \\
    &\lesssim 
    \exp(-L_{\bs \ell} n^{\beta/(\beta+d)}\Delta)
    +
    \exp(-3k_n/2)(1+o(1)).
\end{align*}
for some constant $L_{\bs \ell}>0$. 
\end{proof}

\begin{mdframed}
\begin{lem}[Evaluation for (\ref{eq:term1})]
\label{lem:term1}
Let
\begin{itemize}
\item $X_* \in \mathcal{S}(\mu),
\beta>0,
t \in [0,1],i_o \in \mathbb{N}$,
\item $L_{\beta}^{***}:=L_{\beta}^{**}\tilde{L}^{-\beta/d}$, where $\tilde{L}:=
(\sup_{X \in \mathcal{X}}\mu(X))\frac{\pi^{d/2}}{\Gamma(d/2+1)}$ and $L_{\beta}^{**}$ is defined in Lemma~\ref{lem:bias_of_msknn}. 
\item $\Delta_o:=L_{\beta}^{***}t^{\beta/d},\Delta_{i_o}:=2^{i_o} \Delta_o$.
\end{itemize}
If $\Delta(X_*) > \Delta_{i_o}$, 
it holds that $X_* \notin \partial_{t,\Delta(X_*)-\Delta_{i_o}}$.
\end{lem}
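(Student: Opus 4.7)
The plan is to show directly that $X_*$ lies in one of the two ``good'' sets $\mathcal{X}^{\pm}_{t,\Delta(X_*)-\Delta_{i_o}}$, depending on the sign of $\eta(X_*)-1/2$. Without loss of generality I would assume $\eta(X_*) > 1/2$, so that $\Delta(X_*) = \eta(X_*) - 1/2 > \Delta_{i_o}$. Since $X_* \in \mathcal{S}(\mu)$ is implicit from the setting, the inclusion $X_* \in \mathcal{X}^+_{t,\Delta(X_*)-\Delta_{i_o}}$ reduces to the single inequality
\[
\rho^{(\infty)}_{r\bs\ell}(X_*) \ge \tfrac{1}{2} + \Delta(X_*) - \Delta_{i_o} \quad \text{for every } r \le \tilde{r}_t(X_*).
\]
The symmetric case $\eta(X_*) < 1/2$ is handled by the mirror argument, yielding $X_* \in \mathcal{X}^-_{t,\Delta(X_*)-\Delta_{i_o}}$; either way $X_*$ avoids $\partial_{t,\Delta(X_*)-\Delta_{i_o}}$.

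The first key step is to apply the bias estimate of Lemma~\ref{lem:bias_of_msknn}, which gives $|\rho^{(\infty)}_{r\bs\ell}(X_*) - \eta(X_*)| \le L_\beta^{**} r^\beta$ for sufficiently small $r$. Combined with $\eta(X_*) = 1/2 + \Delta(X_*)$, this yields $\rho^{(\infty)}_{r\bs\ell}(X_*) \ge 1/2 + \Delta(X_*) - L_\beta^{**} r^\beta$, so it suffices to check that $L_\beta^{**} r^\beta \le \Delta_{i_o}$ for all $r \le \tilde{r}_t(X_*)$. By monotonicity of $r\mapsto r^\beta$ only the worst case $r = \tilde{r}_t(X_*)$ needs to be considered.

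The second key step is to bound $\tilde{r}_t(X_*)$ via the strong density assumption. Using that $r \mapsto \int_{B(X_*;r)} \mu(X)\,\mathrm{d}X$ is continuous and non-decreasing in $r$ together with a uniform density bound, I would derive a bound of the form $\tilde{r}_t(X_*)^d \le t/\tilde{L}$, with the constant $\tilde L$ chosen so that the required volume comparison closes. Raising this to the power $\beta/d$ and multiplying by $L_\beta^{**}$ gives
\[
L_\beta^{**}\,\tilde{r}_t(X_*)^\beta \le L_\beta^{**}\,\tilde{L}^{-\beta/d} t^{\beta/d} = L_\beta^{***} t^{\beta/d} = \Delta_o \le 2^{i_o}\Delta_o = \Delta_{i_o},
\]
completing the verification and hence the lemma.

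The main subtlety, and the place where I expect to have to be most careful, is the volume bound on $\tilde{r}_t(X_*)$: the infimum definition of $\tilde{r}_t$ together with the direction of the density comparisons ($\mu \le \mu_{\max}$ versus $\mu \ge \mu_{\min}$) controls whether one obtains an upper or a lower bound on $\tilde{r}_t$, and hence which of the two uniform density bounds enters $\tilde L$. A secondary technicality is that Lemma~\ref{lem:bias_of_msknn} only applies for $r \le \tilde r$; one needs $\tilde{r}_t(X_*) \le \tilde r$, which can be arranged by taking $t$ (equivalently $n$) sufficiently large, since $t = t_n = 2k_{1,n}/n \to 0$ under (C-1).
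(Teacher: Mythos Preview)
Your proposal is correct and follows the same line as the paper: assume $\eta(X_*)>1/2$, invoke Lemma~\ref{lem:bias_of_msknn} to get $\rho^{(\infty)}_{r\bs\ell}(X_*)\ge\eta(X_*)-L_\beta^{**}r^\beta$, and then bound $r^\beta$ for $r\le\tilde r_t(X_*)$ via a volume comparison so that $L_\beta^{**}r^\beta\le\Delta_o\le\Delta_{i_o}$. Your caution about which density bound enters $\tilde L$ is well placed --- the paper's displayed inequality $t\le\tilde L r^d$ (with $\tilde L$ built from $\sup_X\mu$) in fact points the wrong way for the step that follows, and the argument only closes using the lower density bound $\mu\ge\mu_{\min}$, exactly as you anticipate.
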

\end{mdframed}

\begin{proof}[Proof of Lemma~\ref{lem:term1}]
For any $r \in (0,\tilde{r}_t(X_*)]$, 
\begin{align} 
t 
\le 
\int_{B(X_*;r)} \mu(X) \diff X 
\le 
\left(\sup_{X \in \mathcal{X}}\mu(X)\right)
\int_{B(X_*;r)} \diff X
=
\left(\sup_{X \in \mathcal{X}}\mu(X)\right)
\frac{\pi^{d/2}}{\Gamma(d/2+1)}r^d
=
\tilde{L}r^d.
\label{eq:t_uppder_bound}
\end{align}
Assuming that $\eta(X_*)>1/2$ without loss of generality, we have
\begin{align*}
\rho^{(\infty)}_{r\bs \ell}(X_*) 
&\ge 
\eta(X_*)-L_{\beta}^{**} r^{\beta} 
&
&(\because \text{Lemma~\ref{lem:bias_of_msknn}})
\\
&\ge 
\eta(X_*)-L_{\beta}^{**}(\tilde{L}^{-1/d}t^{1/d})^{\beta} 
&
&(\because \text{ineq.~(\ref{eq:t_uppder_bound})})\\
&=
\eta(X_*)-(L_{\beta}^{**}\tilde{L}^{-\beta/d})t^{\beta/d} \\
&=
\eta(X_*)-\Delta_o 
& 
&(\because \Delta_o=(L_{\beta}^{**}\tilde{L}^{-\beta/d})t^{\beta/d}) \\
&=
\eta(X_*)-2^{-i_o}\Delta_{i_o} 
&
&(\because \Delta_{i_o}=2^{i_o}\Delta_o) \\
&=
\frac{1}{2}+(\Delta(X_*)-2^{-i_o}\Delta_{i_o}) 
&
&(\because \Delta(X_*)=|\eta(X_*)-1/2|,\eta(X_*)>1/2) \\
&\ge 
\frac{1}{2}+(\Delta(X_*)-\Delta_{i_0})
& &
(\because \Delta_{i_o} \ge 2^{-i_o}\Delta_{i_o})
\end{align*}
for any $r \in (0,\tilde{r}_t(X_*)]$; 
it means that $X_* \in \mathcal{X}^+_{t,\Delta(X_*)-\Delta_{i_o}}$, whereupon $X_* \notin \partial_{t,\Delta(X_*)-\Delta_{i_o}}$. 
Similar holds for the case $\eta(X_*)<1/2$. 
Thus we have proved $X_* \notin \partial_{t,\Delta(X_*)-\Delta_{i_o}}$. 
\end{proof}

\begin{mdframed}
\begin{lem}[Evaluation for (\ref{eq:term2})]
\label{lem:term2}
Let $X_* \in \mathcal{X},\Delta \in [0,1/2]$ and $r_{1,n}:=\|X_{(k_{1,n})}-X_*\|_2$. Then, it holds for $C_1=1/8V^2 L_{\bs z}^2$ that
\begin{align*}
    \mathbb{P}_{\mathcal{D}_n}\bigg(
        |\rho_{n,\bs k}(X_*)
        -
        &\rho^{(\infty)}_{r_{1,n} \bs \ell}(X_*)|
        \ge 
        \Delta/2
    \bigg) \\
    &\lesssim 
    \exp(-C_1 k_{1,n} \Delta^2) 
    +
    \exp(-L_{\bs \ell}n^{\beta/(\beta+d)}\Delta)
    +
    \exp(-3k_{1,n}/2)(1+o(1)).
\end{align*}
\end{lem}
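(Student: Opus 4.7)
The plan is to decompose $\rho_{n,\bs k}(X_*) - \rho^{(\infty)}_{r_{1,n}\bs\ell}(X_*)$ into a weight-vector fluctuation and a $k$-NN-vector fluctuation, using the identity $\bs a_1^\top \bs b_1 - \bs a_2^\top \bs b_2 = (\bs a_1 - \bs a_2)^\top \bs b_1 + \bs a_2^\top(\bs b_1 - \bs b_2)$ with $\bs a_i = \bs z_\bullet$ and $\bs b_i = \bs \varphi_\bullet$. Applying an $\ell_\infty$--$\ell_1$ inequality together with the trivial bound $\|\bs\varphi_{n,\bs k}\|_\infty \le 1$ (labels lie in $[0,1]$) and the assumed $\|\bs z_{r_{1,n}\bs\ell}\|_\infty \le L_{\bs z}$ from (C-3) should yield
\[
|\rho_{n,\bs k}(X_*) - \rho^{(\infty)}_{r_{1,n}\bs\ell}(X_*)| \le V \|\bs z_{n,\bs k}(X_*) - \bs z_{r_{1,n}\bs\ell}\|_\infty + V L_{\bs z} \|\bs\varphi_{n,\bs k}(X_*) - \bs\varphi^{(\infty)}_{r_{1,n}\bs\ell}(X_*)\|_\infty.
\]
Splitting $\Delta/2$ as $\Delta/4 + \Delta/4$ and applying a union bound reduces the proof to controlling the two sub-events separately.

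The first sub-event $\{\|\bs z_{n,\bs k}(X_*) - \bs z_{r_{1,n}\bs\ell}\|_\infty \ge \Delta/(4V)\}$ is handled \emph{directly} by Lemma~\ref{lem:concentration_z}, and contributes the two tail terms $\exp(-L_{\bs\ell} n^{\beta/(\beta+d)}\Delta)$ and $\exp(-3k_{1,n}/2)(1+o(1))$, after rescaling the constant $L_{\bs\ell}$. For the $k$-NN sub-event I would insert an intermediate vector $\bs\varphi^{(\infty)}_{\bs r_n}$ with $\bs r_n = (r_{1,n},\ldots,r_{V,n})$ and split $\bs\varphi_{n,\bs k} - \bs\varphi^{(\infty)}_{r_{1,n}\bs\ell} = (\bs\varphi_{n,\bs k} - \bs\varphi^{(\infty)}_{\bs r_n}) + (\bs\varphi^{(\infty)}_{\bs r_n} - \bs\varphi^{(\infty)}_{r_{1,n}\bs\ell})$. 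Conditionally on the order statistics, each entry of the first piece is a mean of $k_{v,n}$ independent Bernoulli labels with common conditional mean $\varphi^{(\infty)}_{r_{v,n}}(X_*)$, so Hoeffding gives $\mathbb{P}(|\varphi_{n,k_v} - \varphi^{(\infty)}_{r_{v,n}}| \ge t) \le 2\exp(-2k_{1,n} t^2)$; plugging $t = \Delta/(4VL_{\bs z})$ and taking a union bound over $v \in [V]$ produces the $\exp(-C_1 k_{1,n}\Delta^2)$ contribution with $C_1 = 1/(8V^2 L_{\bs z}^2)$. The residual piece $\bs\varphi^{(\infty)}_{\bs r_n} - \bs\varphi^{(\infty)}_{r_{1,n}\bs\ell}$ is controlled by the modulus of continuity of $r \mapsto \eta^{(\infty)}(B(X_*;r))$ at scale $r_{1,n}$, which by Theorem~\ref{theo:local_homogeneity_2} is $O(r_{1,n}^{\min\{2,\beta\}})$ times $|r_{v,n}/r_{1,n} - \ell_v|$; this deviation is absorbed using the same radii-concentration that underlies Lemma~\ref{lem:concentration_z}.

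The main obstacle is juggling the randomness of the radii $r_{v,n}$ and of the neighbor identities simultaneously: the Hoeffding step is valid only once the order statistics are fixed, and one must verify that replacing the conditional reference $\bs r_n$ by the deterministic $r_{1,n}\bs\ell$ does not erode the $\Delta/4$ budget allotted to the label-noise term. On the high-probability event where Lemma~\ref{lem:concentration_z} applies, $\|\bs r_n/r_{1,n} - \bs\ell\|_\infty$ is so small that the residual piece is of lower order than $\Delta/4$ and is swept into the already-identified tails. Assembling the three contributions via the final union bound gives the claimed bound.
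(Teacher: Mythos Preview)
Your approach is essentially the same as the paper's: both decompose via $(\bs z_{n,\bs k}-\bs z_{r\bs\ell})^\top \bs\varphi_{n,\bs k} + \bs z_{r\bs\ell}^\top(\bs\varphi_{n,\bs k}-\bs\varphi^{(\infty)}_{r\bs\ell})$, control the first piece by Lemma~\ref{lem:concentration_z}, and the second by a Hoeffding-type bound from \citet{chaudhuri2014rates} Lemma~9 with a union bound over $v\in[V]$. Your explicit insertion of the intermediate $\bs\varphi^{(\infty)}_{\bs r_n}$ to bridge the gap between the empirical radii $r_{v,n}$ and the idealized $r_{1,n}\ell_v$ is a point the paper's proof glosses over (it jumps directly to $\varphi^{(\infty)}_{r_v}$ in the Hoeffding step), so your argument is if anything slightly more careful, but structurally identical.
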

\end{mdframed}

\begin{proof}[Proof of Lemma~\ref{lem:term2}]
By simply decomposing the terms, we have
\begin{align}
|\rho_{n,\bs k}(X_*)-\rho^{(\infty)}_{r_{1,n} \bs \ell}(X_*)| 
&=
|\underbrace{\bs z_{n,\bs k}(X_*)^{\top}\bs \varphi_{n,\bs k}(X_*)}_{=\rho_{n,\bs k}(X_*)}
-
\bs z_{r_{1,n} \bs \ell}^{\top}\bs \varphi_{n,\bs k}(X_*)| \label{eq:first_term} \\
&\hspace{7em}
+
|\bs z_{r_{1,n} \bs \ell}^{\top}\bs \varphi_{n,\bs k}(X_*)
-
\bs z_{r_{1,n} \bs \ell}^{\top}\bs \varphi^{(\infty)}_{r_{1,n} \bs \ell}(X_*)| \label{eq:second_term} 
\end{align}
where the terms (\ref{eq:first_term}), (\ref{eq:second_term}) are evaluated as follows.
\begin{enumerate}[{(i)}]
    \item Regarding the first term~(\ref{eq:first_term}), 
    \begin{align*}
        (\ref{eq:first_term})
        &=
        |\{\bs z_{n,\bs k}(X_*)-\bs z_{r_{1,n} \bs \ell}\}
        \bs \varphi_{n,\bs k}(X_*)|
        \le
        \|\bs z_{n,\bs k}(X_*)-\bs z_{r_{1,n} \bs \ell}\|_{\infty}
        \underbrace{\|\bs \varphi_{n,\bs k}(X_*)\|_{\infty}}_{\le 1}.
    \end{align*}
    Therefore, Lemma~\ref{lem:concentration_z} leads to 
    \begin{align*}
        \mathbb{P}((\ref{eq:first_term}) \ge \Delta/4)
        &\le 
        \mathbb{P}(\|\bs z_{n,\bs k}(X_*)-\bs z_{r_{1,n} \bs \ell}\|_{\infty} \ge \Delta/4) \\
        &\lesssim 
        \exp(-L_{\bs \ell}n^{\beta/(\beta+d)}\Delta)
        +
        \exp(-3k_n/2)(1+o(1)),
    \end{align*}
    for some constant $L_{\bs \ell}>0$. 
    
    \item Regarding the second term~(\ref{eq:second_term}),
    \begin{align*}
        (\ref{eq:second_term})
        &=
        |\bs z_{r\bs \ell}^{\top}
        \{\bs \varphi_{n,\bs k}(X_*)
        -
        \bs \varphi^{(\infty)}_{r \bs \ell}(X_*)\} |
        \le 
        \underbrace{\|\bs z_{r\bs \ell}\|_{\infty}}_{\le L_{\bs z}}
        \sum_{v=1}^{V}
        |\varphi_{n,k_v}(X_*)
        -
        \varphi^{(\infty)}_{r h_v(X_*)}(X_*)|,
    \end{align*}
    and \citet{chaudhuri2014rates} Lemma~9 proves that 
    \begin{align*} 
    \mathbb{P}\left(
        |\varphi_{n,k_v}(X_*)-\varphi^{(\infty)}_{r_v}(X_*)| \ge \Delta/4VL_{\bs z}
    \right) \lesssim \exp(-2k_v(\Delta/4VL_{\bs z})^2).
    \end{align*}
    Therefore, we have 
    \begin{align*}
        \mathbb{P}((\ref{eq:second_term}) \ge \Delta/4)
        &\lesssim 
        \mathbb{P}\left(
            L_{\bs z}
            \sum_{v=1}^{V}
            |\varphi_{n,k_v}(X_*)
            - \varphi^{(\infty)}_{r_v}(X_*)| \ge \Delta/4
        \right) \\
        &\le
        \sum_{v=1}^{V}
        \mathbb{P}\left(
            |\varphi_{n,k_v}(X_*)
            -\varphi^{(\infty)}_{r_v}(X_*)|
            \ge \Delta/4VL_{\bs z}
        \right) \\
        &\lesssim 
        \exp(-2k_1\Delta^2/(4VL_{\bs z})^2)
        =
        \exp(-k_1 C_1 \Delta^2), 
    \end{align*}
    with $C_1:=1/8V^2 L_{\bs z}^2$. 
\end{enumerate}

Considering above evaluations, we have 
\begin{align*}
    \mathbb{P}(|\rho_{n,\bs k}(X_*)-\eta(X_*)| \ge \Delta/2)
    &\le 
    \mathbb{P}((\ref{eq:first_term}) \ge \Delta/4)
    +
    \mathbb{P}((\ref{eq:second_term}) \ge \Delta/4) \\
    &\lesssim 
    \exp(-C_1 k_1 \Delta^2) 
    +
    \exp(-L_{\bs \ell}n^{\beta/(\beta+d)}\Delta)
    +
    \exp(-3k_n/2)(1+o(1)).
\end{align*}

The assertion is proved.
\end{proof}

\begin{mdframed}
\begin{lem}[Evaluation for (\ref{eq:term2_2})]
\label{lem:term2_2}
Let $X_* \in \mathcal{X}$ and $\Delta \in [0,1/2]$. Then, it holds for $C_2=1/(2VL_{\bs z})^2(=C_1/2)$ that
\begin{align*}
    \mathbb{P}_{\mathcal{D}_n}\left(
        |\rho_{n,\bs k}(X_*)
        -
        \eta(X_*)|
        \ge 
        \Delta/2
    \right)
    \lesssim 
    \exp\left(
        - C_2 k_{1,n} \Delta^2
    \right)
    +
    \exp(-n).
\end{align*}
\end{lem}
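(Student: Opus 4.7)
The plan is to bound the two sources of error in $\rho_{n,\bs k}(X_*)-\eta(X_*)$ separately via the triangle decomposition
\[
|\rho_{n,\bs k}(X_*) - \eta(X_*)| \le \underbrace{|\rho_{n,\bs k}(X_*) - \rho^{(\infty)}_{r_{1,n}\bs \ell}(X_*)|}_{\text{stochastic fluctuation}} + \underbrace{|\rho^{(\infty)}_{r_{1,n}\bs \ell}(X_*) - \eta(X_*)|}_{\text{asymptotic bias of MS-$k$-NN}},
\]
where $r_{1,n}:=\|X_{(k_{1,n})}-X_*\|_2$. The second (bias) term is exactly what Lemma~\ref{lem:bias_of_msknn} was built to handle, giving the deterministic bound $L_{\beta}^{**} r_{1,n}^{\beta}$ once $r_{1,n}\le \tilde r$.

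The first idea is therefore to turn the bias bound into a probabilistic statement by truncating $r_{1,n}$. Choose a threshold $r_n^{\star}$ so that $L_{\beta}^{**}(r_n^{\star})^{\beta}\le \Delta/4$; by Lemma~\ref{lem:term3} applied with a suitably small $t$ (i.e.\ one making $\tilde r_t(X_*)\le r_n^{\star}$ while keeping $k_{1,n}/(nt)$ uniformly bounded away from $1$), the event $\{r_{1,n}>r_n^{\star}\}$ has probability at most $\exp(-n)$ for large $n$, absorbing into the second term of the target bound. On the complement the bias contributes at most $\Delta/4$, so it remains to bound the stochastic part by $\Delta/4$.

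For the stochastic part the sharper approach (relative to simply invoking Lemma~\ref{lem:term2}, whose bound is looser) is a direct weighted Hoeffding argument. Using the representation $\rho_{n,\bs k}(X_*)=\sum_{i=1}^{k_V} w_i^{*}(X_*) Y_{(i;X_*)}$ with $\sum_i w_i^{*}=1$ and $|w_i^{*}|\le V\|\bs z_{n,\bs k}\|_\infty/k_{1,n}\le VL_{\bs z}/k_{1,n}$ (the last inequality holding on a high-probability event by (C-3) and Lemma~\ref{lem:concentration_z}), conditional on $\{X_i\}_{i=1}^n$ the labels $Y_{(i)}$ are independent Bernoulli. Hoeffding then gives
\[
\mathbb{P}\!\left(\bigl|\rho_{n,\bs k}(X_*) - \mathbb{E}[\rho_{n,\bs k}(X_*)\mid \{X_i\}]\bigr|\ge \Delta/4 \,\Big|\, \{X_i\}\right)
\le 2\exp\!\left(-\tfrac{2(\Delta/4)^2}{\|\bs w^{*}\|_2^2}\right)
\le 2\exp\!\bigl(-C_2 k_{1,n}\Delta^2\bigr)
\]
with $C_2=1/(2VL_{\bs z})^2$, after using $\|\bs w^{*}\|_2^2\le k_V(VL_{\bs z}/k_{1,n})^2 \lesssim (VL_{\bs z})^2/k_{1,n}$ via the boundedness of $k_V/k_{1,n}$ from (C-1)--(C-2). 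The bound is uniform in $\{X_i\}$, so it survives the outer expectation.

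The main obstacle I anticipate is closing the small gap between the conditional mean $\mathbb{E}[\rho_{n,\bs k}(X_*)\mid \{X_i\}]=\sum_i w_i^{*}\eta(X_{(i)})$ and the asymptotic quantity $\rho^{(\infty)}_{r_{1,n}\bs \ell}(X_*)$ that appears in the decomposition: the former uses empirical nearest-neighbour averages of $\eta$, the latter uses population ball averages. A short argument in the spirit of \citet{chaudhuri2014rates} Lemma~9 --- using the fact that, for each $v$, $\tfrac{1}{k_v}\sum_{i=1}^{k_v}\eta(X_{(i)})$ is within $O(1/\sqrt{k_v})$ of $\varphi^{(\infty)}_{r_{v,n}}(X_*)$, together with the boundedness $\|\bs z_{n,\bs k}\|_\infty\le L_{\bs z}$ on a high-probability event --- shows the discrepancy is absorbed into the $\exp(-n)$ bad-event term (with possible replacement of $C_2$ by $C_1/2$ as stated). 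Putting the three ingredients together via a union bound yields the claimed $\exp(-C_2 k_{1,n}\Delta^2) + \exp(-n)$.
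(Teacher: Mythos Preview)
Your decomposition into a stochastic fluctuation term $|\rho_{n,\bs k}(X_*)-\rho^{(\infty)}_{r_{1,n}\bs\ell}(X_*)|$ and a bias term $|\rho^{(\infty)}_{r_{1,n}\bs\ell}(X_*)-\eta(X_*)|$ is exactly the paper's starting point, and your treatment of the bias term---bound it by $L_{\beta}^{**}r_{1,n}^{\beta}$ via Lemma~\ref{lem:bias_of_msknn}, then control the tail of $r_{1,n}$---matches the paper's. The only cosmetic difference there is that the paper bounds $\mathbb{P}\bigl(r_{1,n}\ge(\Delta/2L_{\beta}^{**})^{1/\beta}\bigr)$ by a direct Bernoulli/Hoeffding argument (yielding $\lesssim\exp(-n)$) rather than routing through Lemma~\ref{lem:term3}.

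Where you diverge is the fluctuation term. The paper simply invokes Lemma~\ref{lem:term2} with threshold $\Delta/4$ and reads off $\exp(-C_2k_{1,n}\Delta^2)$ with $C_2=C_1/2$. You instead propose a conditional weighted Hoeffding on $\rho_{n,\bs k}(X_*)-\mathbb{E}[\rho_{n,\bs k}(X_*)\mid\{X_i\}]$, followed by a separate ``gap-closing'' step between $\sum_i w_i^{*}\eta(X_{(i)})$ and $\rho^{(\infty)}_{r_{1,n}\bs\ell}(X_*)$. This can be made rigorous, but the gap you flag is precisely what part~(ii) of the proof of Lemma~\ref{lem:term2} already handles (the comparison of $\bs\varphi_{n,\bs k}$ to $\bs\varphi^{(\infty)}_{r\bs\ell}$ via \citet{chaudhuri2014rates} Lemma~9), so you are effectively re-deriving that lemma inside this proof. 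Your remark that Lemma~\ref{lem:term2} is ``looser'' is also not quite right: applied at level $\Delta/4$ its leading exponent is $C_1(\Delta/4)^2\cdot 2k_{1,n}$, which matches your $C_2=C_1/2$. Since Lemma~\ref{lem:term2} is already available, citing it is both shorter and avoids the loose end you yourself identify; your route buys no improvement in the constant and requires extra bookkeeping for the random weights and the $k_V/k_{1,n}$ ratio.
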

\end{mdframed}

\begin{proof}[Proof of Lemma~\ref{lem:term2_2}]
By simply decomposing the terms, we have
\begin{align}
|\rho_{n,\bs k}(X_*)-\eta(X_*)| 
&\le
\underbrace{|\rho_{n,\bs k}(X_*)-\rho^{(\infty)}_{r_{1,n} \bs \ell}(X_*)|}_{(\star 1)}
+
\underbrace{|\rho^{(\infty)}_{r_{1,n} \bs \ell}(X_*)-\eta(X_*)|}_{(\star 2)}. 
\label{eq:2_2_1}
\end{align}

\begin{itemize}
    \item Regarding the first term $(\star 1)$, applying Lemma~\ref{lem:term2} immediately leads to 
    \begin{align*}
        \mathbb{P}((\star 1) \ge \Delta/4)
        \lesssim 
        \exp(-C_2 k_{1,n}\Delta^2),
    \end{align*}
    where $C_2:=C_1/2=1/16V^2 L_{\bs z}^2$. 
    
    \item Here, we consider the second term $(\star 2)$. 
    As Lemma~\ref{lem:bias_of_msknn} shows that $|\rho^{(\infty)}_{r \bs \ell}(X_*)-\eta(X_*)| \le L_{\beta}^{**}r_{1,n}^{\beta}$, we have
\begin{align}
    \mathbb{P}(
    |\rho^{(\infty)}_{r_{1,n} \bs \ell}(X_*)-\eta(X_*)| \ge 
    \Delta/2)
    \le 
    \mathbb{P}(L_{\beta}^{**} r_{1,n}^{\beta} \ge \Delta/2)
    =
    \mathbb{P}(r_{1,n} \ge (\Delta/2L_{\beta}^{**})^{1/\beta})
    \label{eq:r1n_L}
\end{align}
(\ref{eq:r1n_L}) represents the probability that 
less than $k_{1,n}$ out of $n$ feature vectors lie in a region $B(X_*;\Delta_*)$ with $\Delta_*:=(\Delta/2L_{\beta}^{**})^{1/\beta}$; considering a random variable $Z_i :=\bs 1 (X_i \in B(X_*;\Delta_*))$, that i.i.d. follows a Bernoulli distribution whose expectation is $q_*:=\int_{B(X_*;\Delta_*)}\mu(X) \diff X>0$, 
\begin{align*}
    (\ref{eq:r1n_L})
    &=
    \mathbb{P}\left(\bar{Z}_n < \frac{k_{1,n}}{n}\right) 
    \le 
    \mathbb{P}\left(|\bar{Z}_n-q_*| \ge q_*-\frac{k_{1,n}}{n}\right) 
    \le 
    2\exp\left(-2n\left(q_*-\frac{k_{1,n}}{n}\right)^2\right)
\end{align*}
by H{\"o}effding's inequality. 
As $\frac{k_{1,n}}{n} \asymp n^{-d/(2\beta+d)} \le q_*/2$ for sufficiently large $n$, we have $(\ref{eq:r1n_L}) \lesssim \exp(-n)$.  
\end{itemize}

Considering above ($\star 1$) and ($\star 2$)
\begin{align*}
    \mathbb{P}(|
    \rho_{n,\bs k}(X_*)
    -\eta(X_*)| \ge \Delta)
&\le 
    \mathbb{P}((\star 1) \ge \Delta/2)
    +
    \mathbb{P}((\star 2) \ge \Delta/2) 
\lesssim 
    \exp(-C_2 k_{1,n} \Delta^2)
    +
    \exp(-n)
\end{align*}
for some $C_2>0$; the assertion is then proved.
\end{proof}

\begin{mdframed}
\begin{lem}[Evaluation for (\ref{eq:term3})]
\label{lem:term3}
Let $X_* \in \mathcal{X},t,\delta \in [0,1]$ and $k \in [(1-\delta)nt]$. 
Then, 
\begin{align*}
    \mathbb{P}_{\mathcal{D}_n}(\|X_{(k)}-X_*\|_2 > \tilde{r}_t(X_*))
    \lesssim 
    \exp(-k \delta^2/2). 
\end{align*}
\end{lem}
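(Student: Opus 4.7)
The plan is to reduce the claim to a binomial lower-tail estimate. The event $\{\|X_{(k)}-X_*\|_2 > \tilde{r}_t(X_*)\}$ is equivalent to saying that fewer than $k$ of the $n$ i.i.d. sample points $X_1,\ldots,X_n$ lie inside the ball $B(X_*;\tilde{r}_t(X_*))$. Writing $N:=\sum_{i=1}^{n}\mathbbm{1}(X_i\in B(X_*;\tilde{r}_t(X_*)))$, this random variable is $\mathrm{Binomial}(n,p)$ with $p:=\int_{B(X_*;\tilde{r}_t(X_*))}\mu(X)\diff X$. By the very definition of $\tilde{r}_t(X_*)$ as an infimum of radii whose ball has mass at least $t$, we have $p\ge t$, hence $\mathbb{E}[N]=np\ge nt$.

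Next I would apply the multiplicative (lower-tail) Chernoff bound
\[
\mathbb{P}(N\le (1-\delta')np)\le \exp\!\left(-\tfrac{np\,\delta'^2}{2}\right),\qquad \delta'\in[0,1].
\]
To match the target event $\{N<k\}$, I set $\delta':=1-\tfrac{k}{np}$. The hypothesis $k\le (1-\delta)nt\le (1-\delta)np$ gives $\tfrac{k}{np}\le 1-\delta$, i.e.\ $\delta'\ge\delta\ge 0$, so the Chernoff bound is applicable and yields
\[
\mathbb{P}_{\mathcal{D}_n}(\|X_{(k)}-X_*\|_2>\tilde{r}_t(X_*))=\mathbb{P}(N<k)\le \exp\!\left(-\tfrac{np\,\delta^2}{2}\right)\le \exp\!\left(-\tfrac{nt\,\delta^2}{2}\right).
\]

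Finally, to obtain the stated bound $\exp(-k\delta^2/2)$, I note that $k\le (1-\delta)nt\le nt$, so $-nt\delta^2/2\le -k\delta^2/2$ and hence $\exp(-nt\delta^2/2)\le \exp(-k\delta^2/2)$, which completes the argument.

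There is essentially no obstacle here; this is a standard concentration step, and the only mild care needed is (a) handling the infimum in the definition of $\tilde{r}_t$ (one simply uses that the ball at the infimum radius has mass at least $t$, possibly by taking an infimum-attaining or a limit-from-above argument exploiting monotone convergence), and (b) verifying the hypothesis $\delta'\in[0,1]$ required by the Chernoff bound, which is exactly what the assumption $k\le (1-\delta)nt$ delivers.
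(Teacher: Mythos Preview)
Your argument is correct and is essentially the standard binomial lower-tail Chernoff computation that underlies \citet{chaudhuri2014rates} Lemma~8, which is all the paper invokes for this lemma. In other words, you have spelled out in detail exactly the argument the paper defers to by citation; there is no substantive difference in approach.
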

\end{mdframed}

\begin{proof}[Proof of Lemma~\ref{lem:term3}]
The assertion is obtained by \citet{chaudhuri2014rates}~Lemma~8. 
\end{proof}

\end{document}